\def\eqref#1{equation~\ref{#1}}
\def\1{\bm{1}}
\def\rmA{{\mathbf{A}}}
\def\rmB{{\mathbf{B}}}
\def\rmC{{\mathbf{C}}}
\def\rmD{{\mathbf{D}}}
\def\rmI{{\mathbf{I}}}
\def\rmM{{\mathbf{M}}}
\def\rmP{{\mathbf{P}}}
\def\rmQ{{\mathbf{Q}}}
\def\rmR{{\mathbf{R}}}
\def\rmS{{\mathbf{S}}}
\def\rmU{{\mathbf{U}}}
\def\rmV{{\mathbf{V}}}
\def\rmX{{\mathbf{X}}}
\def\vzero{{\bm{0}}}
\def\vmu{{\bm{\mu}}}
\def\va{{\bm{a}}}
\def\vb{{\bm{b}}}
\def\vq{{\bm{q}}}
\def\vu{{\bm{u}}}
\def\vv{{\bm{v}}}
\def\vw{{\bm{w}}}
\def\vx{{\bm{x}}}
\def\vy{{\bm{y}}}
\def\vz{{\bm{z}}}
\DeclareMathAlphabet{\mathsfit}{\encodingdefault}{\sfdefault}{m}{sl}
\SetMathAlphabet{\mathsfit}{bold}{\encodingdefault}{\sfdefault}{bx}{n}
\def\gC{{\mathcal{C}}}
\def\gD{{\mathcal{D}}}
\def\gL{{\mathcal{L}}}
\def\gN{{\mathcal{N}}}
\def\gO{{\mathcal{O}}}
\def\sI{{\mathbb{I}}}
\def\sP{{\mathbb{P}}}
\def\sR{{\mathbb{R}}}
\newcommand{\E}{\mathbb{E}}
\DeclareMathOperator{\Tr}{Tr}
\DeclareMathOperator{\toind}{\xrightarrow{\gD}}
\newtheorem{theorem}{Theorem}[section]
\newtheorem{assumption}[theorem]{Assumption}
\newtheorem{lemma}[theorem]{Lemma}
\newtheorem{corollary}[theorem]{Corollary}
\title{Maximizing the Potential of Synthetic Data: \\Insights from Random Matrix Theory}
\author{Aymane El Firdoussi$^{*,1}$, Mohamed El Amine Seddik$^{*,1}$, Soufiane Hayou$^{2}$, Reda Alami$^{1}$,\\ \textbf{Ahmed Alzubaidi$^{1}$ \& Hakim Hacid$^{1}$} \\
$^{1}$Technology Innovation Institute, Abu Dhabi, UAE\\
$^{2}$Simons Institute, Berkeley, USA\\
$^{*}$Equal contribution.\\
\texttt{firstname.lastname@tii.ae} 
}
\begin{document}

\maketitle

\begin{abstract}
Synthetic data has gained attention for training large language models, but poor-quality data can harm performance (see, e.g., \cite{shumailov2023curse, seddik2024bad}). A potential solution is data pruning, which retains only high-quality data based on a score function (human or machine feedback). Previous work \cite{feng2024beyond} analyzed models trained on synthetic data as sample size increases.

Using random matrix theory, we generalize this analysis and derive the performance of a binary classifier trained on a mix of real and pruned synthetic data in a high dimensional setting. Our findings identify conditions where synthetic data could improve performance, focusing on the quality of the generative model and verification strategy. We also show a smooth phase transition in synthetic label noise, contrasting with prior works on sharp transition in infinite sample limits. Our extensive experimental setup validates our theoretical results.
\end{abstract}
\addtocontents{toc}{\protect\setcounter{tocdepth}{-1}}
\section{Introduction}
The landscape of large language models (LLMs) is evolving rapidly, with a growing trend towards training models on a combination of real and synthetic data. This synthetic data is often generated by previously trained models \citep{allal2024SmolLM, benallal2024cosmopedia, abdin2024phi3technicalreporthighly}. However, the quality of these generators can significantly impact the performance of newly trained models, potentially leading to model collapse \citep{shumailov2023curse}, a phenomenon in which the model drastically degrades in performance.

Model collapse has been extensively studied, both empirically \citep{guo2023curious} and theoretically \citep{seddik2024bad}, highlighting the potential risks associated with training on synthetic data. To mitigate these risks, researchers have proposed various strategies, including the verification of AI-synthesized data \citep{feng2024beyond}. This approach aligns with the widely adopted Reinforcement Learning from Human Feedback (RLHF) technique \citep{kaufmann2023survey}. \cite{feng2024beyond} provided theoretical support for this strategy by analyzing synthetic data as Gaussian mixtures with noisy labels, using linear binary classifiers and scalar parameters to control verifier quality. Their findings reveal a sharp performance transition: \textit{under infinite synthetic sample size conditions, model accuracy shifts from zero accuracy (due to errors in synthetic data and verification) to optimal performance as these errors decrease.} 

While current theoretical studies primarily focus on label noise in synthetic data \citep{dohmatob2024model, gerstgrasser2024model, feng2024beyond}, they often overlook potential distribution shifts in the feature space between real and synthetic data. This gap is particularly relevant in practical scenarios where generative models are trained on finite real data sets, potentially leading to imperfect learning of the underlying distribution.

Our paper addresses this gap by proposing a statistical model that accounts for both distribution shifts in the feature space and label noise. In our model, we induce distribution shifts in the feature space by supposing that the statistics of synthetic data are empirical estimates of the underlying real data statistics. In a finite sample size regime, these estimates may be biased, resulting in distribution shifts between real and synthetic data. Leveraging random matrix theory, we derive the theoretical performance of a binary classifier trained on a mixture of real and pruned (i.e., verified) synthetic data in a high-dimensional setting. Our analysis provides conditions under which synthetic data improves performance, emphasizing the critical roles of the generative model's quality and the efficacy of the synthetic data verification strategy. Lastly, we show that the sharp phase transition phenomenon identified in \citep{feng2024beyond} in the infinite sample size limit is a particular case of a general result, where smooth phase transitions can take place.

\paragraph{Summary of contributions.} Our contributions are four fold:
\begin{itemize}
    \item We introduce a statistical model for studying synthetic data that accounts for label and feature noise, extending beyond previous models that only consider label noise.
    \item By leveraging random matrix theory, we characterize the performance of a binary classifier trained on a mixture of real and synthetic data in a high-dimensional setting.
    \item When training only on synthetic data, we find a smooth phase transition in classifier performance, generalizing the work of \cite{feng2024beyond} on sharp transitions in infinite sample size limit.
    \item We validate our results with extensive experiments (toy example and realistic LLM setups).
\end{itemize}

\section{Related Work}
The use of synthesized data for model training has gained significant traction in recent years, particularly with the widespread adoption of large language models (LLMs) that rely on large amounts of data in their training stages. Several studies have explored the impact of synthesized data on model performance, revealing both its advantages and limitations. A primary concern is the phenomenon of model collapse \cite{shumailov2023curse}, where the iterative use of generated data for model training results in a degradation of model quality. This issue has been explored theoretically and empirically across multiple studies (e.g. \cite{alemohammad2023selfconsuming, lebrun2021evaluating, bohacek2023nepotistically, seddik2024bad, dohmatob2024model}).

\cite{seddik2024bad} investigated model collapse in recursive training settings, where new models are trained on data generated by previous models. They demonstrate that recursive training on purely synthetic data inevitably leads to performance degradation. However, they show that mixing real and synthetic data can attenuate model collapse, though the proportion of real data must remain high to maintain model performance. Their findings support the idea that synthesized data alone cannot sustain model quality across iterations without a significant quantity of real data.

\cite{gerstgrasser2024model} argue that model collapse can be avoided entirely if data is accumulated rather than replaced across iterations. Their empirical studies on language models, diffusion models, and variational autoencoders indicate that accumulating both real and synthetic data helps maintain model performance over time, breaking the recursive degradation loop that leads to collapse.

The most relevant work to our study is \cite{feng2024beyond} where the authors examined the effects of synthesized data on model performance in a non-recursive setting, using the concept of reinforcement through feedback to select high quality synthetic data. Their theoretical results, based on Gaussian mixture models, showed that adding feedback significantly improves the robustness of models trained on synthesized data. However, their setup assumes that only labels, not features, are noisy. Additionally, their focus is primarily on scenarios where only the number of data points, 
$n$, grows to infinity. Other (practical) scenarios where for instance the feature dimension,
$p$, grows at a fixed ratio with $n$ are not covered. 

Our work extends the Gaussian mixture model setup to include both noisy features and labels, which is a more realistic scenario when training on synthesized data. Additionally, we consider a high-dimensional regime where both 
$p$ and 
$n$ grow to infinity with a fixed ratio, a setup often used in Random Matrix Theory (RMT). This allows us to study the interaction between feature dimension, pruner error, and data size in a more comprehensive manner.  Our approach also accounts for the presence of mixed data—original and synthetic—providing a more realistic framework for studying the effect of synthetic data in practical applications.

\section{Theoretical Setup}
\paragraph{Real data.} 
We suppose that real data consists of $n$ $p$-dimensional i.i.d.\@ vectors $\vx_1, \ldots, \vx_n\in \sR^p$ sampled from a Gaussian mixture of two distinct isotropic clusters $\gC_1$ and $\gC_2$ of means $\pm \vmu$ with $\vmu \in \sR^p$. Essentially, for $a\in \{1, 2\}$, each data vector $\vx_i \in \gC_a$ has a corresponding label $y_i = (-1)^a$ and is sampled as
\begin{align}
	\vx_i = y_i \vmu + \vz_i, \quad \vz_i \sim \gN(\vzero, \rmI_p).
\end{align}
\paragraph{Generative model.}
To generate synthetic data, we consider the generative model corresponding to maximum likelihood which consists of estimating the underlying first and second-order statistics of the real data with their empirical estimates. 
In particular, we suppose that we are given a subset $\hat n \leq n$ of the real dataset $(\vx_i, y_i)_{i=1}^n$ on which we can estimate the statistics. This setup allows us to model a situation where new real data samples might be available to train next-generation models and the parameter $\hat n$ offers control over the generative model quality. The statistics for generating synthetic data are therefore computed using the following estimates
\begin{align}
\label{eq:generative-model}
	\hat \vmu = \frac{1}{\hat n} \sum_{i=1}^{\hat n} y_i \vx_i, \quad \hat \rmC = \frac{1}{\hat n} \sum_{i=1}^{\hat n} \left( y_i \vx_i - \hat \vmu \right) \left( y_i \vx_i - \hat \vmu \right)^\top.
\end{align}
\paragraph{Synthetic data.}
We consider that synthetic data is generated as $m$ i.i.d.\@ vectors $\tilde \vx_1, \ldots, \tilde \vx_m \in \sR^p $ with corresponding (noisy) labels $\tilde y_1, \ldots, \tilde y_m = \pm 1$ such that $\tilde \vx_i \in \tilde\gC_a$ with true label $\bar y_i=(-1)^a$ for $a\in \{1, 2\}$ is sampled as ($\tilde\gC_1$ and $\tilde\gC_2$ denote the synthetic clusters)
\begin{align}
\label{eq:synth-data-generation}
	\tilde \vx_i = \bar y_i \hat \vmu + \hat \rmC^{\frac12} \tilde \vz_i, \quad \tilde \vz_i \sim \gN(\vzero, \rmI_p),
\end{align}
and the labels $\tilde y_i$ are generated such that $\sP \{ \tilde y_i = \bar y_i \} = 1 - \varepsilon$ where $\varepsilon\geq 0$ controls label noise. Essentially, the \textbf{quality} of synthetic data depends on the \textbf{sample size $\hat n$ and the label noise rate} $\varepsilon$.

\begin{wrapfigure}{r}{0.5\textwidth}
    \begin{center}
        \includegraphics[width=.48\textwidth]{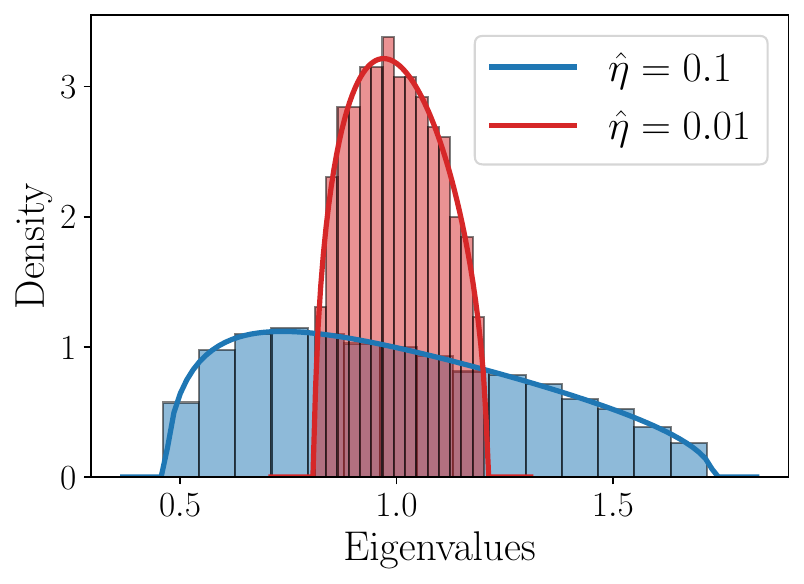}
    \end{center}
    \caption{Illustration of the Marchenko-Pastur law: The histogram of eigenvalues of the empirical covariance matrix $\hat{\rmC}$ (as per equation (\ref{eq:generative-model})) using different values of $\hat n$. The histograms correspond to $p = 500$ and $\hat n = 500$ (in blue) and $\hat n = 5 \times 10^{4}$ (in red). The line plots depict the limiting Marchenko-Pastur law. As $\hat n$ grows, the distribution of eigenvalues shrinks towards $1$. }\label{fig:marcenko-pastur}
\end{wrapfigure}

In the asymptotic regime where $\hat n\to \infty$ with $\frac{p}{\hat n} \to 0$, we can generate synthetic samples that follow asymptotically the exact same distribution as of the real ones, and therefore only label noise is relevant to the quality of the synthetic data. However, in the regime when both $\hat n,p\to \infty$ with $\frac{p}{\hat n} \to \hat \eta > 0$, while the estimation of $\vmu$ with $\hat \vmu$ remains consistent, the estimation of the covariance is not. In fact, in this regime $\Vert \hat \rmC - \rmI_p \Vert \not \to 0$ and the eigenvalues of $\hat \rmC$ spread in the vicinity of $1$ which is described in the limit by the Marchenko-Pastur law \citep{marchenko1967distribution} as depicted in Fig. \ref{fig:marcenko-pastur}. Eventually, such inconsistency in estimating the second moment in high dimensions yields a distribution shift between synthetic and real data, which might cause a drop in performance when training a new model on synthetic data generated with $\hat \vmu$ and $\hat \rmC$. In the remainder, we describe precisely how the performance of a simple classifier is affected in this regime.

\paragraph{Objective.}
Our goal throughout the paper is to study the effect of synthetic data when training on a mixture of the $n$ real and $m$ synthetic data described above, i.e., with the following objective function
\begin{equation}
\label{eq:loss}
    \gL(\vw) := \underbrace{\frac{1}{n+m} \sum_{i = 1}^n \ell(\vx_i, y_i; \vw)}_{\text{real data}} + \underbrace{\frac{1}{n+m} \sum_{i = 1}^m q_i \ell(\tilde\vx_i, \tilde y_i; \vw)}_{\text{synthetic data}},
\end{equation}
where $\ell$ is some convex loss function and the $q_i$'s are data pruning parameters ($q_i\in\{0, 1\}$), indicating whether to select or drop the $i^{th}$ synthetic sample $(\tilde \vx_i, \tilde y_i)$. In particular, the $q_i$'s are Bernoulli random variables conditionally on $\tilde y_i \neq \bar y_i$ or $\tilde y_i = \bar y_i$ (we recall that $\bar y_i$'s denote the true labels of the synthetic samples) with conditional probabilities
\begin{align}
    \rho := \sP\{q_i = 1 \mid \tilde y_i \neq \bar y_i\}, \quad \phi := \sP\{ q_i = 1 \mid \tilde y_i = \bar y_i\},
\end{align}
which control the pruner accuracy (as discussed in \citep{feng2024beyond}). As we mentioned previously, we suppose training on $n\geq \hat n$ real data, modeling a situation where new real samples are available with $\hat n$ controlling the generative model quality in generating faithful synthetic features\footnote{Technically, our results hold irrespective of the statistical dependencies between the data used to train the generative model in \eqref{eq:generative-model} or the classifier in \eqref{eq:w_q}.}.
\paragraph{$L^2$-loss.}
In the remainder of the paper we take $\ell $ to be the regularized squared loss as it allows us to obtain a closed-form solution for the optimization problem in \eqref{eq:loss}, hence, a more tractable analysis. Specifically, we take $\ell (\vx, y; \vw) = (\vw^\top \vx - y)^2 + \gamma \Vert \vw \Vert^2 $ where $\gamma \geq 0$ is a regularisation parameter, which yields the following closed-form solution
\begin{align}\label{eq:w_q}
	\vw = \frac1N \rmQ \rmX \vy, \quad \rmQ = \left( \frac1N \rmX \rmX^\top + \gamma \rmI_p  \right)^{-1}.
\end{align}
where $N = n+m$, the matrix $\rmX = (\vx_1, \ldots, \vx_n, q_1 \tilde \vx_1, \ldots, q_m \tilde \vx_m) \in \sR^{p\times N}$ is the concatenation of both real and (pruned) synthetic features, and the vector $\vy = (y_1, \ldots, y_n, \tilde y_1, \ldots, \tilde y_m) \in \sR^{N}$ is the concatenation of real and (noisy) synthetic labels.
\section{Main Results}
In this section, we present and discuss the main results obtained through the analysis of the classifier model defined in \eqref{eq:w_q}. We start by specifying the supposed growth rate assumptions.
\begin{assumption}[Growth Rate]\label{assum:growth_rate} We consider a high-dimensional regime where $p,n,\hat n, m \to \infty$ and we recall $N=n+m$ such that:
	\begin{center}
        1) $\frac{p}{N} \to \eta \in [0, \infty)$,\hspace{.7cm}
        2) $\frac{p}{\hat n} \to \hat\eta \in [0, \infty)$,\hspace{.7cm}
        3) $\frac{n}{N} \to \pi \in [0, 1]$, \hspace{.7cm}
        4) $\Vert \vmu \Vert = \gO(1)$.
    \end{center}
\end{assumption}

\paragraph{Role of the assumptions.} The above assumptions are central to understanding the nuances between real and synthetic data (as constructed above) in a high-dimensional regime. Essentially,
    \begin{itemize}
    \item Assumptions 1), 2), and 3) define the scaling of data dimension $p$ and the different sample sizes ($n$ real data, $m$ synthetic data, and $\hat n$ real samples used to train the generative model). In particular, we suppose that all these dimensions scale linearly relative to each other, which corresponds to the classical RMT regime. This setting is more general than the infinite sample size regime in the sense that the former can be recovered by taking $\eta, \hat{\eta}\to 0$. Specifically, the parameter $\hat{\eta}$ controls the generative model quality, where lower values indicate better generative model quality.
    Plus, the parameter $\pi$ corresponds to the proportion of the real samples in the data mixture. For instance, $\pi = 0$ models a setting where the training is done only on synthetic samples, and $0 < \pi < 1 $ highlights the fact that the number $n$ of real and $m$ of synthetic samples are of the same order, therefore, making our results scalable to any possible proportion $\pi$.
    \item The fourth condition about the magnitude of the mean vector $\vmu$ reflects the fact that the classification problem should neither be trivial ($\Vert \vmu \Vert \gg 1$) nor impossible ($\Vert \vmu \Vert \to$ 0) as the dimension of data grows large. For instance, assuming $\Vert \vmu\Vert$ of order $O(\sqrt{p})$ would not be relevant as $p\to \infty$ since the classification problem becomes trivial in this regime. We refer the reader to \citep{couillet2016kernel} for a more general formulation and justifications of this assumption under an extended $k$-class Gaussian mixture model.
\end{itemize}

Having stated the main assumptions, we are now in place to present our main technical findings on the performance of the classifier model trained on a mixture of real and synthetic data. As a corollary, we also cover the case where the model is trained solely on synthetic data and showcase a generalization of the result obtained by \cite{feng2024beyond}.

\subsection{Partially Synthetic: Mixture of Real and Synthetic Data}
We start by analyzing the general case of training on a mix of real and synthetic data. As we described in the previous section, the statistics of synthetic data are empirical estimates of the ones of real data. Under Assumption \ref{assum:growth_rate}, the estimation of $\vmu$ with $\hat \vmu$ remains consistent, while the estimation of the underlying real data covariance (i.e., $\rmI_p$ in our setting) with $\hat \rmC$ is inconsistent as we previously discussed. As a result, studying the theoretical performance of the classifier in \eqref{eq:w_q} demands deploying tools from random matrix theory that refines the estimation of scalar quantities depending on large random matrices. In our case, the scalar quantity of interest corresponds to the model's accuracy which depends on the random matrices $\hat \rmC$ and $\rmX\rmX^\top$ as per \eqref{eq:w_q}.

\begin{figure}[t]
    \centering
    \includegraphics[width=\textwidth]{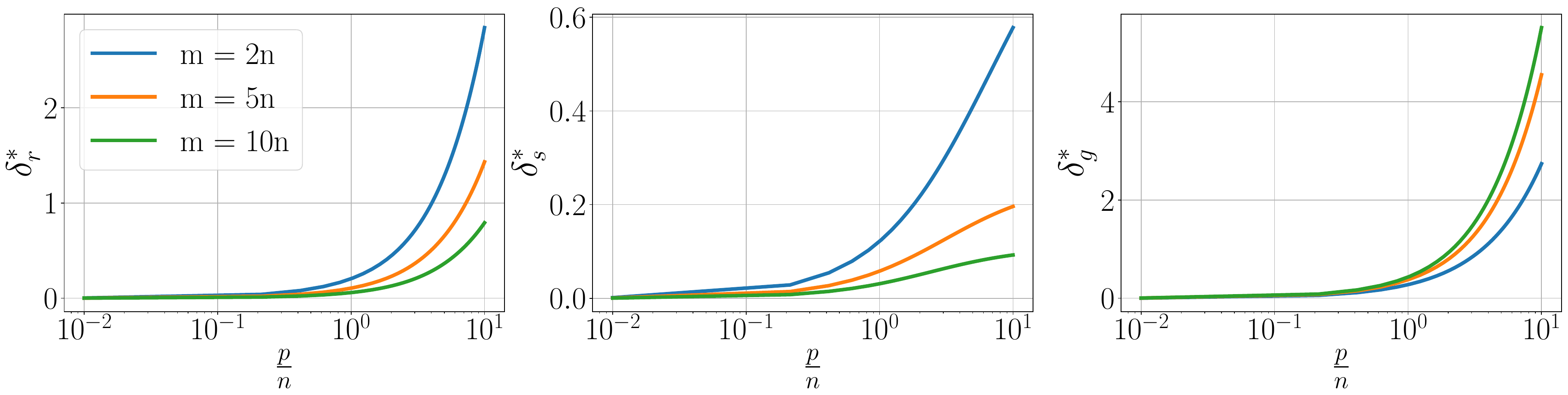}
    \caption{Behavior of $(\delta_r^*, \delta_s^*, \delta_g^*)$ in terms of the ratio $\frac{p}{n}$. For small ratio $\frac{p}{n}$, the values of $\delta_r^*, \delta_s^*, \delta_g^*$ are close to $0$.
    $(\delta_r^*, \delta_s^*, \delta_g^*)$ are computed by iterating the system \ref{eq:deltas} starting from random values. }
    \label{fig:delta-p}
\end{figure}

In our analysis of the classifier's theoretical performance, we found that the effect of high-dimension (and that of distribution shift between real and synthetic samples) is described by three scalar quantities $(\delta_r^*, \delta_s^*, \delta_g^*)$ which are defined as the unique solution of the following fixed point system which is derived from Lemma \ref{lemma:second-det-equiv} in the Appendix.
\begin{equation}
\label{eq:deltas}
	\begin{cases}
		\delta_g = \frac{\alpha(1-\pi)}{1 + \delta_s} \cdot \frac{\hat \eta}{\gamma + \frac{\pi}{1 + \delta_r} + \frac{\alpha (1-\pi)}{(1 + \delta_s) (1 + \delta_g)} },\\
		\delta_r = \frac{\eta}{\hat \eta} \cdot \frac{1 + \delta_s}{\alpha (1 - \pi)} \delta_g,\\
		\delta_s = \frac{\alpha \delta_r }{1 + \delta_g}.
	\end{cases}
\end{equation}
where $\alpha= \phi (1 - \varepsilon) + \rho \varepsilon$. These quantities will be used subsequently in our results. Intuitively,  $\delta_r^*$ captures the contribution of real data, $\delta_s^*$ corresponds to the contribution of synthetic data, and $\delta_g^*$ corresponds to the influence of the generative model. In an infinite sample size regime where  $n,m,\hat{n}\to \infty$ while the dimension $p$ is kept fixed, $(\delta_r^*, \delta_s^*, \delta_g^*)=(0, 0, 0)$ as per Fig. \ref{fig:delta-p}, while under Assumption \ref{assum:growth_rate} these quantities are non zero yielding a counterintuitive behavior in high-dimension. For convenience, we further define a set of scalar quantities that will prove usefull in the next result.
\begin{align*}
	\alpha &= \E[q_i] = \phi (1 - \varepsilon) + \rho \varepsilon, \quad \lambda = \E[q_i \tilde y_i] = \phi (1 - \varepsilon) - \rho \varepsilon, \\
	a &= \frac{\pi}{1 + \delta_r^*} + \frac{\alpha (1-\pi)}{1 + \delta_s^*}, \quad b = \gamma + \frac{\pi}{1 + \delta_r^*} + \frac{\alpha (1-\pi) }{(1 + \delta_s^*)(1 + \delta_g^*)}, \quad c = \frac{\pi}{1 + \delta_r^*} + \frac{\lambda(1-\pi)}{1 + \delta_s^*},\\
     a_1 &= \frac{\pi \eta  }{(1 + \delta_r^*)^2 h_2 b^2}, \quad b_1= \frac{\alpha (1 - \pi) \eta }{ (1 + \delta_s^*)^2 (1 + \delta_g)^2 h_2 b^2  }, \quad b_2= \frac{\alpha(1 - \pi) \eta}{(1 + \delta_s^*)^2 (1 + \delta_g^*)^4 h_2 b^2}, \\
	h_1&= 1 - a_1 - b_2, \quad h_2= 1 - \left( \frac{\alpha (1 - \pi)}{(1 + \delta_s^*)(1 + \delta_g^*)}\right)^2 \frac{\hat{\eta}}{b^2}. \\
\end{align*}

The first set of parameters $(\alpha, \lambda, a, b, c)$ pop out from the expectation of the classifier's decision function while the remaining quantities are related to second-order statistics. Essentially, the main relevant quantities to our analysis are $\hat{\eta}$ and $\varepsilon$ which characterize the quality of synthetic data, with $\phi$ and $\rho$ characterizing the verification process. 
In an idealized scenario, we would have $\hat{\eta}=\varepsilon=0$ which reflects the fact that there is no distribution shift nor label noise, while $\phi = 1 - \rho = 1$ corresponds to a perfect (oracle) verification process.
Our main goal is to study how these parameters influence the classifier's performance hence providing the conditions that make synthetic data relevant for performance boost. The main result brought by this paper is therefore stated as follows.

\begin{theorem}[Theoretical performance]
\label{thm:toy-setting}
    Let $\vw$ be the Ridge classifier as defined in \eqref{eq:w_q} and suppose that Assumption \ref{assum:growth_rate} holds. The decision function $\vw^\top \vx$, on some (real) test sample $\vx \in \gC_a$, with corresponding label $y=(-1)^a$ and independent of $\rmX$, satisfies
    \begin{align*}
    \vw^\top \vx \,\, \toind  \,\, \gN\left( y \cdot \mu,\,  \nu - m^2 \right),
    \end{align*}
    where $\mu=\frac{c \Vert \vmu \Vert^2}{b + a \Vert \vmu \Vert^2}$ and
    \begin{align*}
        \nu &= \frac{c \Vert \vmu \Vert^2}{h_1 (b + a \Vert \vmu \Vert^2)^2} \left( c (1 + b_1 - b_2) \Vert \vmu \Vert^2 + \frac{c}{h_2} - 2 \left(a_1 + \frac{\lambda b_1}{\alpha} \right) (b + a \Vert \vmu \Vert^2) \right) + \frac{a_1 + b_1}{h_1}.
    \end{align*}
    Moreover, the asymptotic test accuracy of the classifier is given by $\Phi \left( (\nu - m^2)^{-\frac12} m \right) $ where $\Phi(x) = \frac{1}{\sqrt{2\pi}} \int_{-\infty}^x e^{- \frac{t^2}{2} }dt$.
\end{theorem}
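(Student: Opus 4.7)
The key observation is that the test sample $\vx = y\vmu + \vz$, with $\vz\sim\gN(\vzero,\rmI_p)$, is independent of $\rmX$ and hence of $\vw$. Conditionally on $\vw$ we therefore have the \emph{exact} Gaussian identity $\vw^\top\vx\mid\vw \sim \gN(y\,\vw^\top\vmu,\,\|\vw\|^2)$. Consequently, to prove the stated convergence in distribution it suffices to show that the two scalar functionals $\vw^\top\vmu$ and $\|\vw\|^2$ concentrate around deterministic limits $\mu$ and $\nu-m^2$ respectively; the unconditional convergence then follows by Slutsky's lemma applied to the conditional Gaussian law. The accuracy formula $\Phi((\nu-m^2)^{-1/2}m)$ is immediate by integrating this limiting Gaussian over $\{\vw^\top\vx>0\}$ and using $m=\mu$.

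\textbf{Computing the mean $\mu$.} Substituting $\vw = N^{-1}\rmQ\rmX\vy$, I would split $\vw^\top\vmu = N^{-1}\vy^\top\rmX^\top\rmQ\vmu$ into a real-data part $\tfrac{1}{N}\sum_{i\le n}y_i\vx_i^\top\rmQ\vmu$ and a synthetic part $\tfrac{1}{N}\sum_{j\le m}q_j\tilde y_j\tilde\vx_j^\top\rmQ\vmu$. Using $y_i\vx_i=\vmu+y_i\vz_i$ together with $\E[q_j\tilde y_j\bar y_j\mid\bar y_j]=\lambda$ and $q_j\tilde y_j\tilde\vx_j = q_j\tilde y_j\bar y_j\hat\vmu + q_j\tilde y_j\hat\rmC^{1/2}\tilde\vz_j$, the leading-order term reduces to $\pi\vmu^\top\rmQ\vmu + (1-\pi)\lambda\hat\vmu^\top\rmQ\vmu$, while the noise contractions $\vz_i^\top\rmQ\vmu$ and $\tilde\vz_j^\top\hat\rmC^{1/2}\rmQ\vmu$ concentrate to $0$ by standard Gaussian concentration for quadratic/linear forms in a resolvent, once $\rmQ$ is decorrelated from the sample being summed via a leave-one-out step. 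I would then invoke the first-order deterministic equivalent for $\rmQ$: writing $\tfrac{1}{N}\rmX\rmX^\top$ as a sum over the two populations and iterating Sherman--Morrison yields, in the regime of Assumption~\ref{assum:growth_rate}, a deterministic equivalent of the form $\bigl(\gamma\rmI_p + \tfrac{\pi}{1+\delta_r^*}\rmI_p + \tfrac{\alpha(1-\pi)}{(1+\delta_s^*)(1+\delta_g^*)}\hat\rmC\bigr)^{-1}$, after which a second layer of deterministic-equivalent reasoning with respect to $\hat\rmC$ (which is itself a Wishart-type matrix with ratio $\hat\eta$) collapses the $\hat\rmC$-dependence and produces the scalar $\delta_g^*$; this is exactly the three-equation fixed-point system~\eqref{eq:deltas}. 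After substitution, $\vmu^\top\rmQ\vmu\to 1/(b+a\|\vmu\|^2)$ and $\hat\vmu^\top\rmQ\vmu$ concentrates to $\|\vmu\|^2/(b+a\|\vmu\|^2)$, yielding $\mu = c\|\vmu\|^2/(b+a\|\vmu\|^2)$.

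\textbf{Computing the variance $\nu-m^2$.} Writing $\|\vw\|^2 = N^{-2}\vy^\top\rmX^\top\rmQ^2\rmX\vy$, the $\rmQ^2$ demands a \emph{second-order} deterministic equivalent for bilinear forms $\vu^\top\rmQ\rmA\rmQ\vv$, which is precisely the role of Lemma~\ref{lemma:second-det-equiv} in the appendix. I would derive it by differentiating the first-order resolvent identity in the regularization parameter $\gamma$ (equivalently, applying leave-one-out twice) and solving the resulting linear system for the correction scalars $a_1,b_1,b_2$; the quantities $h_1,h_2$ appear as the determinants of the $2\times 2$ systems associated with the $\rmX$-level and the $\hat\rmC$-level inversions, explaining their role as denominators. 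Expanding $\vy^\top\rmX^\top\rmQ^2\rmX\vy$ as a double sum over real/real, real/synthetic, and synthetic/synthetic pairs and taking second-order deterministic equivalents of the resulting blocks, each of the form $\tfrac{1}{p}\Tr(\rmQ\rmB\rmQ)$ or a rank-one quadratic form with $\rmB\in\{\rmI_p,\hat\rmC,\vmu\vmu^\top,\hat\vmu\hat\vmu^\top\}$, and collecting contributions proportional to $1$, $\|\vmu\|^2$, and $\|\vmu\|^4$ yields the closed form stated for $\nu-m^2$.

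\textbf{Conclusion and main obstacle.} Given these two concentrations, the deterministic Gaussian limit $\gN(y\mu,\nu-m^2)$ follows from conditional Gaussianity, and the accuracy formula is a one-line integration. The main technical obstacle is the second-order deterministic equivalent under the \emph{two-population} structure: one must track the $\rmI_p$ versus $\hat\rmC$ distinction at the first RMT level (producing the coupled $\delta_r^*,\delta_s^*$ equations) while simultaneously handling that $\hat\rmC$ is itself a sample covariance whose spectrum follows Marchenko--Pastur, which forces a second fixed-point equation for $\delta_g^*$ to be embedded inside the first. Cleanly decoupling the dependence between $\rmQ$ and the pair $(\hat\vmu,\hat\rmC)$ via conditional leave-one-out, while preserving the validity of both layers of deterministic equivalents, is the delicate step of the proof.
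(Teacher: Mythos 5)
Your plan is sound and converges to the same underlying random-matrix computations as the paper, but you take a cleaner route to the distributional conclusion. The paper (Appendices B--D) expands $\vw^\top\vx$ directly and computes the two moments $\E[\vw^\top\vx]$ and $\E[(\vw^\top\vx)^2]$ by leave-one-out and the two-level deterministic equivalents (Lemmas~\ref{lemma:det-equiv}, \ref{lemma:det-eq-QAQ}, \ref{lemma:second-det-equiv}, \ref{lemma:det-eq-Q_bar A Q_bar}), implicitly asserting Gaussianity. You instead observe that, since $\vx=y\vmu+\vz$ is independent of $\vw$, the conditional law $\vw^\top\vx\mid\vw\sim\gN(y\,\vw^\top\vmu,\,\Vert\vw\Vert^2)$ is \emph{exactly} Gaussian, so that proving the stated limit reduces to concentration of the two scalars $\vw^\top\vmu$ and $\Vert\vw\Vert^2$ followed by Slutsky; this makes the Gaussian convergence genuinely rigorous rather than a consequence of two moments alone, and it also directly explains the $\nu-m^2$ form (since $\Var(\vw^\top\vmu)\to 0$, the asymptotic variance collapses to $\lim\Vert\vw\Vert^2=\nu-m^2$). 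The RMT core is the same either way: one must expand $\vw^\top\vmu=N^{-1}\vy^\top\rmX^\top\rmQ\vmu$ and $\Vert\vw\Vert^2=N^{-2}\vy^\top\rmX^\top\rmQ^2\rmX\vy$ over the real/synthetic blocks, decorrelate via Sherman--Morrison, and then nest a second deterministic equivalent over the randomness of $\hat\rmC$ — which is exactly where the fixed-point system~\eqref{eq:deltas} and the scalars $a_1,b_1,b_2,h_1,h_2$ come from; you identify this nesting as the delicate step, which is correct. Two small inaccuracies in your sketch are worth fixing: (i) the intermediate reduction you write as $\pi\,\vmu^\top\rmQ\vmu+(1-\pi)\lambda\,\hat\vmu^\top\rmQ\vmu$ should carry the leave-one-out normalization factors $1/(1+\delta_r^*)$ and $1/(1+\delta_s^*)$ (they are what make $c$ appear rather than $\pi+(1-\pi)\lambda$); and (ii) $\vmu^\top\rmQ\vmu$ concentrates to $\Vert\vmu\Vert^2/(b+a\Vert\vmu\Vert^2)$, not $1/(b+a\Vert\vmu\Vert^2)$.
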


\begin{figure}[t!]
    \centering
    \includegraphics[width=\textwidth]{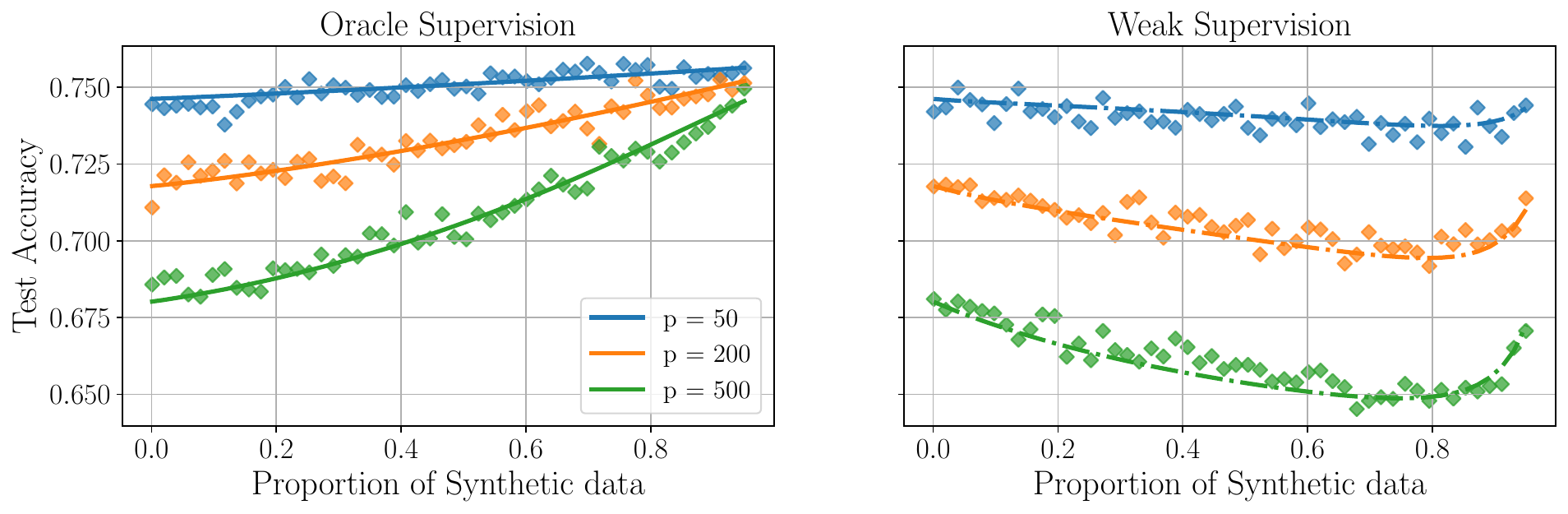}
    \caption{Scatter plots correspond to empirical test accuracy while lines correspond to the theoretical counterpart as per Theorem \ref{thm:toy-setting}.
    The parameters used in this experiments are: $n = \hat n = 1000$, $\Vert \vmu \Vert = 0.7$ and $\gamma = 1$, $(\rho, \phi) = (0, 1)$ for Oracle supervision and $(\rho, \phi) = (1, 0.5)$ for the Weak supervision. The parameter $\varepsilon$ is variable depending on the proportion of synthetic data by taking it equal to the misclassification error corresponding to training a classifier on synthetic data only. As theoretically anticipated, a boost of performance is observed with synthetic data supervision while distribution shift affects negatively the performance.}
    \label{fig:toy-setting}
\end{figure}

Theorem \ref{thm:toy-setting} states that the decision function of the classifier in \eqref{eq:w_q} is asymptotically equivalent to the thresholding of two monovariate Gaussian random variables with respective means $\mu$ and $-\mu$ and standard deviation $\nu$, where the statistics $\mu$ and $\nu$ are expressed in terms of the scalar quantities defined above. Here, $\mu$ represents the signal strength while $\nu$ highlights the classifier's uncertainty or dispersion. To provide some insights into the implications of this theorem, we start by examining it in a low-dimensional regime where $p$ is kept fixed while $n,m, \hat{n}\to \infty$. In this case, we have $\eta, \hat{\eta}\to 0$ and $\delta_r^*, \delta_s^*, \delta_g^*\to 0$ which yields
\begin{align*}
    a &= \pi + \alpha(1-\pi), \quad b = \gamma = \pi + \alpha (1-\pi), \quad c= \pi + \lambda(1-\pi),
\end{align*}
and $a_1 = b_1 = b_2 = 0$ with $h_1=h_2=1$. As such, the accuracy of the classifier increases with $\lambda$, i.e., when the synthetic labels are verified (large $\frac{\phi}{\rho}$) or less noisy (small $\varepsilon$). This is in line with the findings of \cite{feng2024beyond} while extended by our result to training on a mix of real and synthetic data. However, when the dimension scales linearly with the different sample sizes, the values of $\delta_r^*, \delta_s^*, \delta_g^*\not\to 0$ yielding a lower signal strength $\mu$ and higher variance $\nu^2$. This highlights the fact that in high-dimension, even if the synthetic labels are not noisy or equivalently well verified, there is a performance drop due to the feature distribution shift between real and synthetic data.

Fig. \ref{fig:toy-setting} depicts the empirical test accuracy and the theoretical prediction as per Theorem \ref{thm:toy-setting} when varying the proportion of synthetic data. As theoretically anticipated, adding synthetic data does not boost the classifier's performance unless it is verified accurately (oracle supervision versus weak supervision). Moreover, our results show the effect of the distribution shift which heavily affects performance in the case of weak supervision (Fig. \ref{fig:toy-setting} right). 

\subsection{Fully Synthetic: Training on Synthetic Data}
In this section, we study the fully synthetic setting which corresponds to training solely on synthetic data (i.e. $n = 0$ in \eqref{eq:w_q}). For simplicity, we consider only label noise and ignore feature noise in the synthetic data. Essentially, this allows us to exhibit the smooth phase transition of the classifier's accuracy in terms of label noise, which extends the result of \cite{feng2024beyond}. Specifically, we obtain the following corollary of theorem \ref{thm:toy-setting}.

\begin{corollary}[Performance when training only on synthetic data]
\label{corolary:train-synth-only}
Let $\vw_s$ be the Ridge classifier described in \eqref{eq:w_q} trained only on synthetic data with only label noise (i.e., $\hat \rmC = \rmI_p$). Under Assumption \ref{assum:growth_rate}, the decision function $\vw_s^\top \vx$ on a test sample $\vx \in \gC_a$ with corresponding label $y = (-1)^a$ and independent of $\rmX$, satisfies
\begin{align*}
    \vw_s^\top \vx \,\, \toind  \,\, \gN\left( y \cdot \mu_s,\,  \nu_s - \mu_s^2 \right),
    \end{align*}
    where
    \begin{align*}
        &\mu_s=\frac{ \phi(1 - \varepsilon) - \rho \varepsilon }{\alpha \Vert \vmu \Vert^2 + \alpha + \gamma (1 + \delta_s)} \Vert \vmu \Vert^2, \\
        &\nu_s = \frac{\lambda^2 \Vert \vmu \Vert^2}{h (\alpha \Vert \vmu \Vert^2 + \alpha + \gamma (1 + \delta_s))} \left( \frac{\Vert \vmu \Vert^2 + 1}{\alpha \Vert \vmu \Vert^2 + \alpha + \gamma (1 + \delta_s)} - \frac{2 (1 - h)}{\alpha} \right) + \frac{1 - h}{h},
    \end{align*}
    with
    \begin{equation*}
        \eta_s =\lim_{p\to \infty} \frac{p}{m}, \quad h = 1 - \frac{\alpha \eta_s}{(\alpha + \gamma (1 + \delta))^2}, \quad \delta_s = \frac{\eta_s \alpha - \alpha - \gamma + \sqrt{(\alpha + \gamma - \eta_s \alpha)^2 + 4 \eta_s \alpha \gamma}}{2 \gamma}.
    \end{equation*}
\end{corollary}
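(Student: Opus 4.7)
}
The plan is to obtain the corollary as a direct specialization of Theorem \ref{thm:toy-setting} by setting $\pi = 0$ (no real samples in the training mixture, so $\eta = \eta_s$) and imposing $\hat{\rmC} = \rmI_p$, which in the language of Theorem \ref{thm:toy-setting} amounts to taking $\hat{\eta} \to 0$. Under this specialization, the fixed-point system \eqref{eq:deltas} collapses and the auxiliary quantities $(a,b,c,a_1,b_1,b_2,h_1,h_2)$ simplify enough that the formulas for $(\mu,\nu)$ in Theorem \ref{thm:toy-setting} reduce algebraically to the stated $(\mu_s,\nu_s)$.

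\paragraph{Step 1: reduce the fixed-point system.}
First I would plug $\pi=0$ into \eqref{eq:deltas}. The first equation then reads $\delta_g = \frac{\alpha}{1+\delta_s}\cdot\frac{\hat\eta}{\gamma + \alpha/((1+\delta_s)(1+\delta_g))}$, so $\delta_g$ is of order $\hat\eta$ and therefore vanishes as $\hat\eta\to 0$. The second equation, $\delta_r = \frac{\eta}{\hat\eta}\cdot\frac{1+\delta_s}{\alpha}\delta_g$, is of the $0/0$ form and must be handled by substituting the explicit expression for $\delta_g/\hat\eta$ from the first equation; this yields, at $\hat\eta = 0$ and $\delta_g = 0$,
\begin{equation*}
\delta_r \;=\; \frac{\eta_s}{\gamma + \alpha/(1+\delta_s)}, \qquad \delta_s \;=\; \alpha\delta_r.
\end{equation*}
Substituting the first into the second and clearing denominators produces the quadratic $\gamma\delta_s^2 + (\gamma+\alpha-\alpha\eta_s)\delta_s - \alpha\eta_s = 0$, whose positive root is exactly the expression for $\delta_s$ in the statement of the corollary.

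\paragraph{Step 2: simplify the auxiliary constants.}
With $\pi = 0$, $\delta_g = 0$, and $\hat\eta = 0$ substituted into the definitions preceding Theorem \ref{thm:toy-setting}, one immediately gets $a = \alpha/(1+\delta_s)$, $b = \gamma + \alpha/(1+\delta_s)$, $c = \lambda/(1+\delta_s)$, $a_1 = 0$, $h_2 = 1$, and $b_1 = b_2 = \alpha\eta_s/((1+\delta_s)^2 b^2)$, so that $h_1 = 1 - b_1$. A quick check shows that
\begin{equation*}
h_1 \;=\; 1 - \frac{\alpha\eta_s}{(\alpha + \gamma(1+\delta_s))^2} \;=\; h,
\end{equation*}
which is exactly the $h$ appearing in the corollary.

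\paragraph{Step 3: plug into $(\mu,\nu)$ and simplify.}
Substituting the simplified constants into $\mu = c\Vert\vmu\Vert^2/(b + a\Vert\vmu\Vert^2)$ and clearing the common factor $1/(1+\delta_s)$ gives $\mu_s$ directly. For $\nu$, the terms involving $a_1$ drop out and the remaining contributions collect into three pieces: a $c(1+b_1-b_2)\Vert\vmu\Vert^2 = c\Vert\vmu\Vert^2$ term (since $b_1=b_2$), a $c/h_2 = c$ term, and the cross term $-2(\lambda b_1/\alpha)(b+a\Vert\vmu\Vert^2)$. Factoring through the prefactor $c\Vert\vmu\Vert^2/[h_1(b+a\Vert\vmu\Vert^2)^2]$ and using $b_1 = 1-h$ produces the bracket $\frac{\Vert\vmu\Vert^2+1}{\alpha\Vert\vmu\Vert^2+\alpha+\gamma(1+\delta_s)} - \frac{2(1-h)}{\alpha}$, and the trailing $(a_1+b_1)/h_1$ becomes $(1-h)/h$. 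This matches $\nu_s$ verbatim.

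\paragraph{Expected main obstacle.}
The only nontrivial point is the limit $\hat\eta\to 0$ in the fixed-point system: both numerator and denominator of the equation for $\delta_r$ vanish, so one cannot just set $\hat\eta = 0$ naively. The resolution, as noted above, is to eliminate the ratio $\delta_g/\hat\eta$ using the first equation before taking the limit; once this is done, every remaining step is an algebraic simplification of the formulas already given by Theorem \ref{thm:toy-setting}.
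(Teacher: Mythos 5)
Your proposal is correct and follows the route the paper itself indicates (the main text introduces this result as ``the following corollary of theorem~\ref{thm:toy-setting}''): set $\pi=0$ and take $\hat\eta\to 0$, resolving the apparent $0/0$ in the $\delta_r$ equation by first eliminating $\delta_g/\hat\eta$. All the resulting algebra checks out: with $\pi=0$, $\delta_g=0$, $\hat\eta=0$ one indeed gets $a_1=0$, $h_2=1$, $b_1=b_2=\alpha\eta_s/((1+\delta_s)^2 b^2)$, $h_1=1-b_1=h$, the fixed point reduces to the quadratic $\gamma\delta_s^2+(\gamma+\alpha-\alpha\eta_s)\delta_s-\alpha\eta_s=0$, and clearing the common $1/(1+\delta_s)$ factors recovers $\mu_s$ and $\nu_s$ verbatim. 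Worth noting: the paper's appendix also contains Theorem~\ref{corollary:particular}, the deterministic-$\rmC=\sigma^2\rmI_p$ case, from which the same corollary follows by setting $\sigma=1$, $\beta=1$, $\pi=0$ directly; that route sidesteps the $0/0$ limit you identify as the main obstacle, since $\hat\eta$ never appears and Lemma~\ref{lemma:delta_sigma} gives $\delta=\eta/\theta$ at once. Both derivations are valid, and yours matches the one the authors actually advertise in the main text.
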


\begin{wrapfigure}{r}{0.5\textwidth}
    \begin{center}
        \includegraphics[width=.48\textwidth]{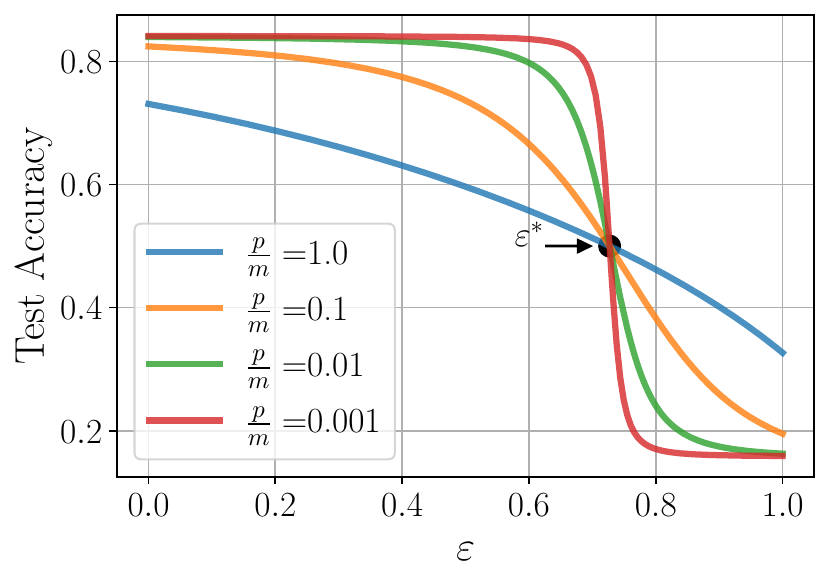}
    \end{center}
    \captionof{figure}{Phase transition in terms of label noise as predicted by Corollary \ref{corolary:train-synth-only}. The critical value for $\varepsilon$ is predicted at $\varepsilon^* = ( 1 + \frac{\rho}{\phi} )^{-1}$. We fix $p = 100$ and vary $m$. The remaining parameters are $\Vert \vmu \Vert = 1$, $\rho = 0.3$ and $\phi = 0.8$, i.e. $\varepsilon^* = 0.73$.}
    \label{fig:smooth-phase-transition}
\end{wrapfigure}

Corollary \ref{corolary:train-synth-only} provides an explicit formulation of Theorem \ref{thm:toy-setting} with synthetic data only and ignoring distribution shift (yielding an explicit expression of $\delta_s$).
This setting provides a clearer interpretation of the effect of label noise since the classifier's performance is directly related to the quantity $\lambda = \phi (1 - \varepsilon) - \rho \varepsilon$. The breaking point of the classifier's performance occurs at $\lambda=0$, which corresponds to the accuracy of random guessing, yielding to the critical value of label noise $\varepsilon^* = ( 1 + \frac{\rho}{\phi} )^{-1}$. This critical value is equivalent to the one obtained by \cite{feng2024beyond}, however, we extend their result to the high-dimensional setting which exhibits a smoother phase transition as depicted in Fig. \ref{fig:smooth-phase-transition}. Essentially, the sharp phase transition of \cite{feng2024beyond} is covered by our result by taking $\eta_s\to 0$. In this sense, the predicted smooth transition better mirrors real-world scenarios where finite sample sizes introduce gradual changes in performance rather than abrupt shifts. This makes our theoretical findings more applicable and reliable for practical scenarios.

\section{Experiments}
In this section, we present our experiments conducted on different real-world tasks and datasets in order to illustrate our theoretical findings presented in the previous section. 

\subsection{Experimental settings}
\paragraph{Amazon Reviews.}
We use the Amazon Reviews datasets (\cite{blitzer2007biographies}) which include several binary classification tasks corresponding to positive versus negative reviews of \texttt{books}, \texttt{electronics} and \texttt{kitchen}. We apply the standard scaler from \texttt{sklearn} \citep{pedregosa2011scikit} and estimate $\Vert \vmu \Vert$ with the normalized data. The synthetic data is generated following the described generative scheme (see equation \ref{eq:generative-model}). We use the Ridge classifier in \eqref{eq:w_q} for this data.

\paragraph{MNIST.}
We also conducted experiments on the MNIST (\cite{mnist}) dataset to illustrate our theoretical insights, by training a simple neural network with one-hidden layer and ReLU activation function. Concerning the synthetic data, we used different values of $\hat n$ to generate new samples in order to highlight the importance of the generation quality, and introduced a label noise $\varepsilon$ to emphasize on the importance of the pruning. Figure \ref{fig:gaussian-mnist-data} shows some examples of MNIST-like synthetic data that has been generated and used in our experiments.

\begin{figure}[t]
    \centering
    \includegraphics[width=\textwidth]{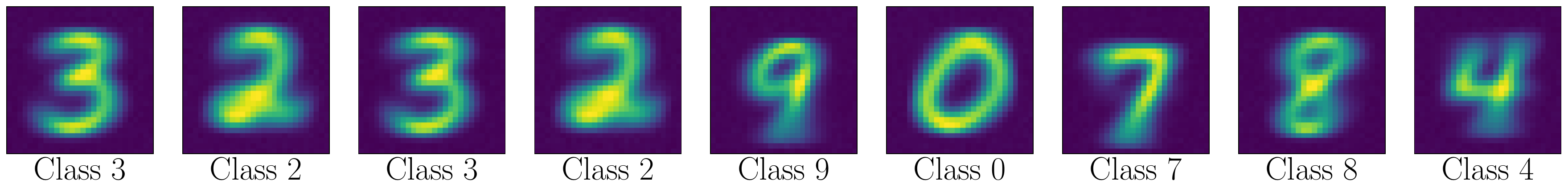}
    \includegraphics[width=\textwidth]{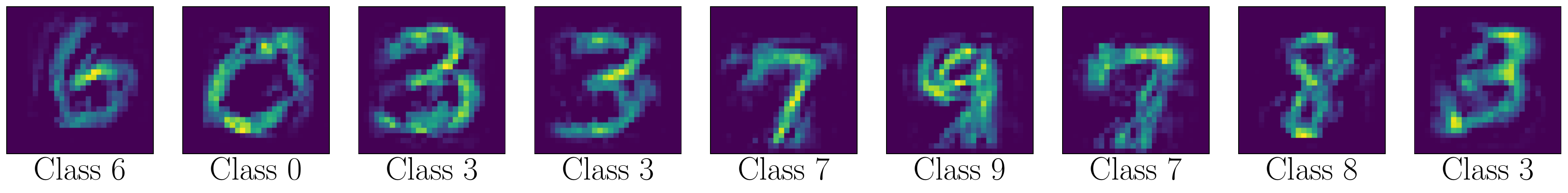}
    \caption{Illustration of two different generation schemes for the MNIST data. \textit{Top figure:} Generating MNIST-like data samples by only estimating the mean of each class $\hat \vmu_a$ for $a\in [10]$ and without estimating the covariance matrix, i.e samples here are generated through the distribution $\gN(\hat \vmu_a, \rmI_p)$. \textit{Bottom figure:} Generating samples by estimating both the mean and covariance of each class, as of our considered generative model defined in \eqref{eq:generative-model}. }
    \label{fig:gaussian-mnist-data}
\end{figure}

\paragraph{LLM Safety Alignment.} We also investigated the impact of synthetic text data for the task of alignment of LLMs with direct preference optimization on safety datasets, using the same approach as in \citep{alami2024alignment}. We finetune the \texttt{Falcon 2-11B} Instruct model \cite{malartic2024falcon2} on $n = 5000$ human data from the Anthropic's \textit{HH-RLHF} dataset\footnote{\url{https://huggingface.co/datasets/yimingzhang/hh-rlhf-safety}}, which correspond to real data, while synthetic data are extracted from the PKU safe RLHF dataset\footnote{\url{https://huggingface.co/datasets/PKU-Alignment/PKU-SafeRLHF}} which are generated using \texttt{Alpaca3-70B}\footnote{\url{https://huggingface.co/PKU-Alignment/alpaca-70b-reproduced-llama-3}}. We increase the amount of synthetic data by injecting gradually five batches of $7000$ samples per batch, to study the performance of the fine-tuned model as we add more synthetic data. In this experiment, we focus only on label noise by randomly perturbing the synthetic dataset. Each entry from the synthetic dataset includes a prompt \(x^{(j)}\), a safe response \(y_{s_w}^{(j)}\) (safety-accepted response), and a less safe response \(y_{s_l}^{(j)}\) (safety-rejected response). We, therefore, perturbed this dataset by swapping safe and less safe responses with a probability $\varepsilon$ (label noise), and selecting the prompts according to a verifier of parameters $\rho$ and $\phi$ described earlier in this paper.

For the evaluation, we use the ALERT dataset\footnote{\url{https://github.com/Babelscape/ALERT/blob/master/data/alert.jsonl}} (\cite{alert}) to test the safety of responses of the finetuned model after being judged by \texttt{LLama-Guard-3-8B} \citep{dubey2024llama}. As in \citep{alami2024alignment}, we compute the safety score as the percentage of safe answers labeled by \texttt{Llama-Guard-3-8B}. We report the results in figure \ref{fig:Safety-scores-plot} for strong supervision $(\rho, \phi) = (0.2, 0.9)$ and weak supervision $(\rho, \phi) = (0.5, 0.5)$ for both $\varepsilon=0.1$ and $\varepsilon=0.5$.

\paragraph{LLM Q\&A Safety Generation.}
\label{sec:llmQA}
This experiment aims to evaluate the impact of synthetically generated prompts (i.e. feature noise). To construct the generative model for this experiment, we fine-tune an LLM $(M)$ with supervised fine-tuning (SFT) on pairs of question-answer (Q\&A) sentences extracted from a safety dataset. Initially, we fine-tune $M$ on $12k$ human annotated Q\&A as safe or unsafe \citep{ji2024pku}, yielding a fine-tuned model on human data denoted as $M_h$. Then, $M_h$ is considered as the generative model to generate a large dataset of synthetic Q\&A prompts (around $120k$ samples) that were further annotated as safe/unsafe using \texttt{Mistral-Nemo}\footnote{\url{https://mistral.ai/news/mistral-nemo/}} and \texttt{Qwen2-7B-Instruct} \citep{yang2024qwen2}, which incorporate a further label noise. To verify the data, we use \texttt{Llama-Guard-3.1} \citep{dubey2024llama}. We conducted this experiment using two LLMs (M) of different sizes (to vary the generative model quality) which are the \texttt{Llama-3.1-8B} and \texttt{Gemma-2-2B-it} \citep{gemma2} instruct models.

\subsection{Effect of label noise}

\begin{figure}[t!]
    \centering
    \includegraphics[width = \textwidth]{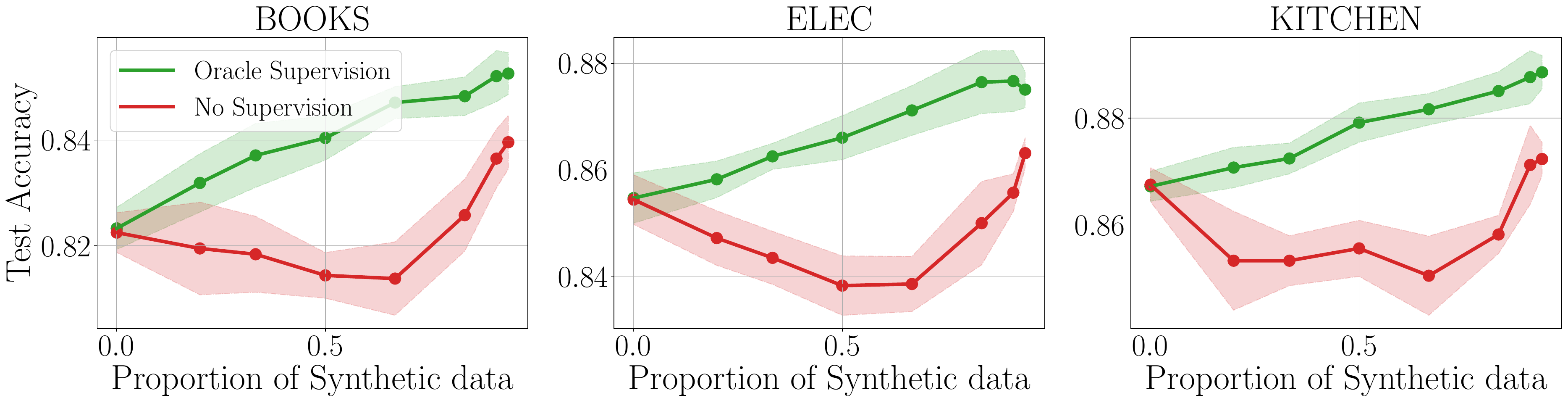}
    \caption{\textbf{Results of the Amazon Reviews setting}: Test Accuracy with the proportion of synthetic data evaluated on Amazon Review \cite{blitzer2007biographies} dataset. The number of real data sample used is $n = 800$, the dimension is $p = 400$, $\gamma = 10^{-1}$ and $\varepsilon = 0.2$ (fixed). The pruning parameters are $(\rho, \phi) = (0, 1)$ for Oracle supervision and $(\rho, \phi) = (1, 1)$ for No supervision.}
    \label{fig:amazon-review}
\end{figure}

\begin{figure}[t!]
    \centering
    \includegraphics[width= \textwidth]{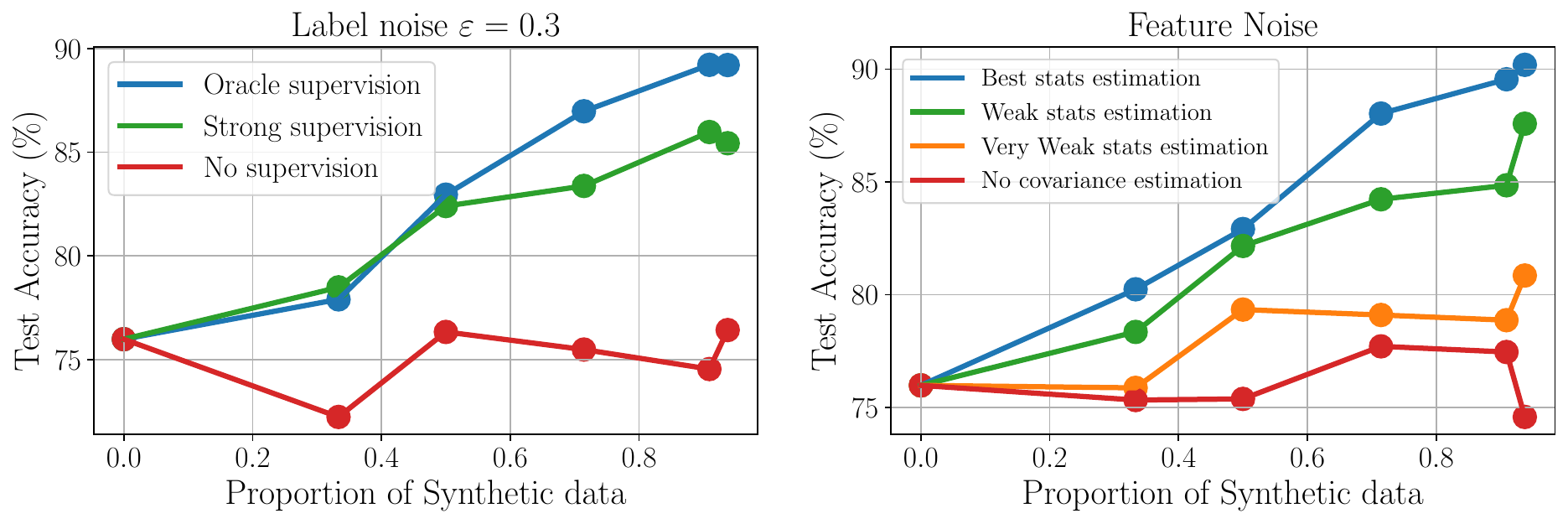}
    \caption{\textbf{Results of the MNIST setting}: Training an NN with one hidden layer and ReLU activation function on a mixture of real ($ n = 500$) and varying the proportion of synthetic Gaussian data.}
    \label{fig:MNIST}
\end{figure}

Figures \ref{fig:amazon-review},  \ref{fig:MNIST} (left plot) \ref{fig:Safety-scores-plot} reflect the effect of label noise. Essentially, as theoretically anticipated, the trained models do not benefit from synthetic data unless it is accurately verified. Specifically, in the case of weak supervision, model performance drops significantly, and the improvement from using synthetic data is only visible with very high synthetic sample sizes. On the contrary, with strong supervision, we observe a monotonous performance boost as the proportion of synthetic data increases.

\subsection{Effect of Feature noise}

\begin{figure}[t!]
     \centering
     \includegraphics[width= \linewidth]{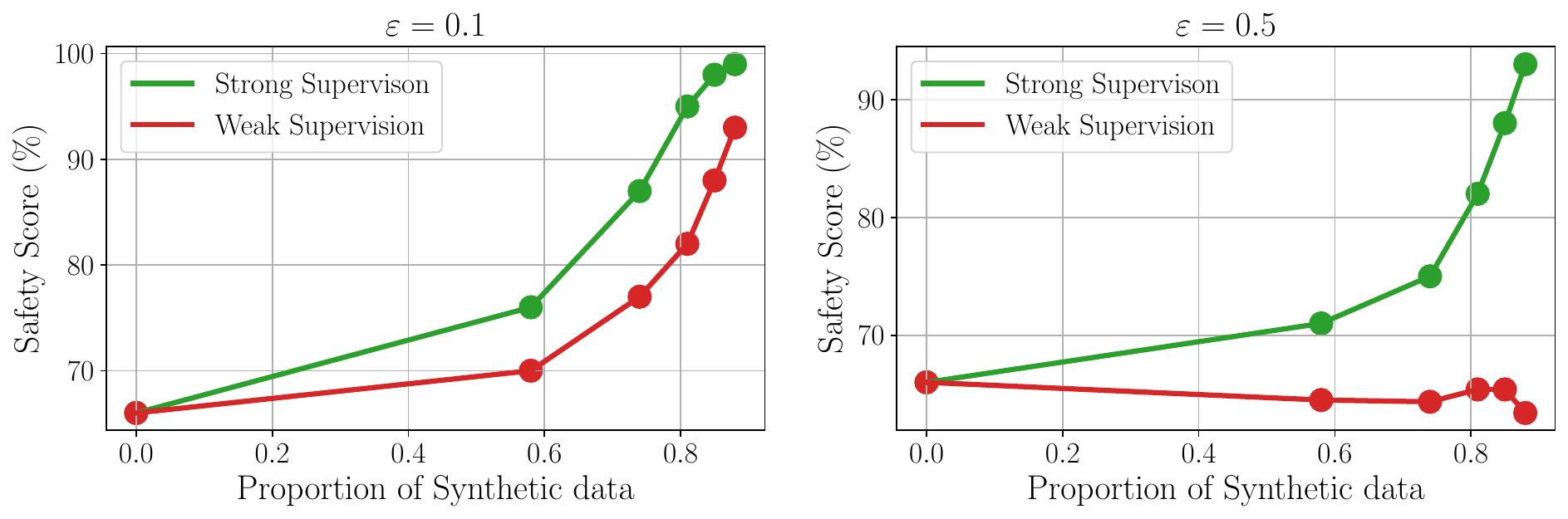}
     \caption{\textbf{Results of LLM Safety Alignment}: Strong supervision corresponds to $(\rho, \phi) = (0.2, 0.9)$ and weak supervision to$(\rho, \phi) = (0.5, 0.5)$.}
     \label{fig:Safety-scores-plot}
\end{figure}

\begin{figure}[t!]
    \centering
    \includegraphics[width=\linewidth]{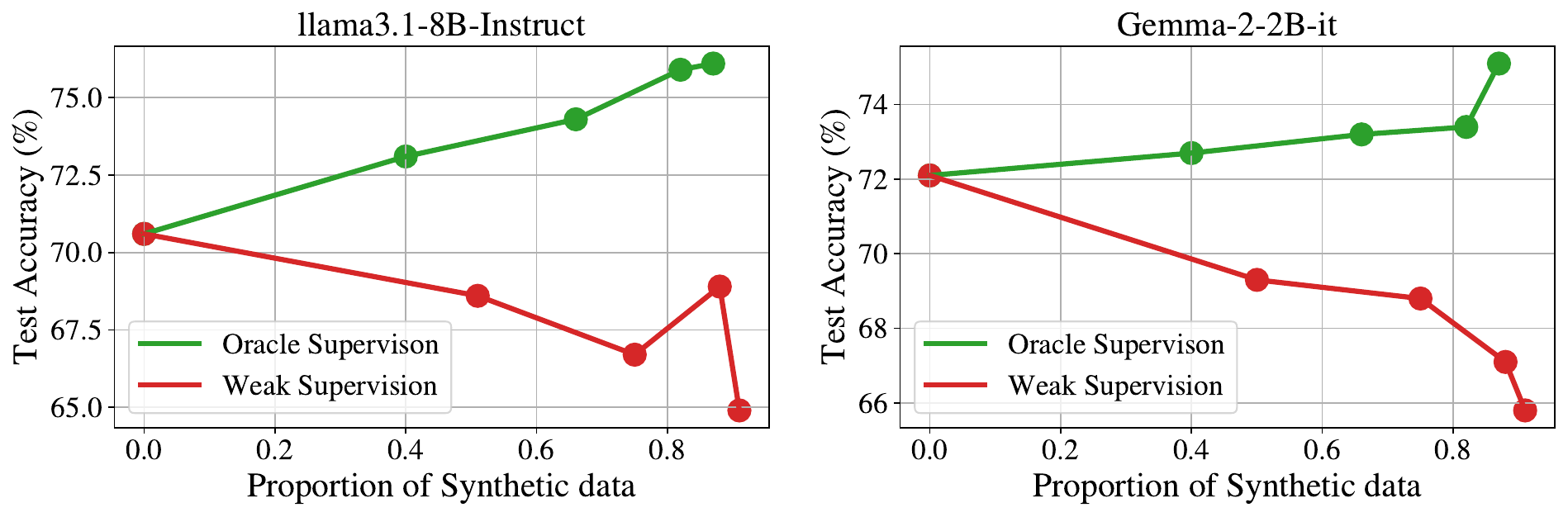}
    \caption{\textbf{Results of LLM Q\&A Safety Generation}: Evaluation of two LLMs trained as presented in section \ref{sec:llmQA} is depicted for both \textbf{(left)} $M$= \texttt{Llama3.1-8B-Instruct}  and \textbf{(right)} $M$= \texttt{Gemma-2-2B-it}. The test accuracy is computed over the testing dataset extracted from \cite{ji2024pku}, with $2.8k$ Q\&A samples. The results shown are the average over 3 runs.}
    \label{fig:qa_exp}
\end{figure}

In this section, we discuss the experiments related to feature noise. In Fig. \ref{fig:MNIST} (right), we depict the performance of a one-hidden layer MLP trained on a mix of real and synthetic MNIST data following our theoretical framework. As we can observe from the figure, the performance boost from synthetic data heavily depends on the generative model quality as predicted by our theoretical results. We further observe the same trend using LLMs as depicted in Fig. \ref{fig:qa_exp}, where we observe that the synthetic data generated by \texttt{Llama3.1-8B-Instruct} yields a better performance boost compared to \texttt{Gemma-2-2B-it} as we increase the amount of synthetic samples.

\section{Conclusion and Limitations}
In this work, we conducted a comprehensive theoretical and empirical analysis of models trained on a mixture of real and synthetic data with verification. By leveraging random matrix theory, we identified critical factors such as distribution shifts and label noise that significantly impact model performance. Our findings demonstrate that synthetic data can enhance model accuracy under specific conditions, particularly when the generative model is of high quality and the verification process is accurate. Additionally, we extended previous research by showing that performance transitions are smooth rather than sharp when synthetic data is incorporated in high-dimensional settings.

Despite these advancements, our current setting is limited to label verification of synthetic data. Incorporating feature verification represents a promising extension for future research, which could provide further insights into the reliability and effectiveness of synthetic data in model training. Another possible extension of our work is to study distributions beyond the Gaussian model and analyze how higher-order statistics can be incorporated into our current framework.

In conclusion, this work provides a foundational understanding of the conditions under which synthetic data can be beneficial for model training in high-dimensional settings. By integrating both theoretical insights and empirical validations, this study provides new insights into the effective utilization of synthetic data, paving the way for more resilient and performant AI models.
\clearpage

\bibliography{iclr2025_conference}
\bibliographystyle{iclr2025_conference}

\appendix
\newpage
\appendix

\section*{Appendix} \label{appendix}

\tableofcontents
\setcounter{tocdepth}{2}
\addtocontents{toc}{\protect\setcounter{tocdepth}{2}}

\section{Useful Lemmas}
\paragraph{Notation:}
For $a \in \{1, 2 \}$, we denote by $\sI_a = \{ i \mid \vx_i \in \gC_a \}$, i.e, the set of indices of vectors belonging to class $\gC_a$. Furthermore, we denote $\Sigma = \vmu \vmu^\top + \rmI_p = \E \left[ \vx \vx^\top \right]$ for $\vx\in \gC_a$, and $\Sigma_\beta = \vmu_\beta \vmu_\beta^\top + \rmC$

Here we will list the most useful lemmas and results used in our analysis.

\subsection{General lemmas}
\begin{lemma}[Inverse identity]
\label{lemma:inverse-identity}
    For invertible matrices $\rmA$ and $\rmB$, we have that:
    \begin{equation*}
        \rmA^{-1} - \rmB^{-1} = \rmA^{-1}(\rmB - \rmA) \rmB^{-1}
    \end{equation*}
\end{lemma}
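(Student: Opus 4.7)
The plan is to verify this identity by direct algebraic manipulation, since both sides are well-defined matrix expressions under the stated invertibility hypothesis. There are essentially two equivalent routes: expand the right-hand side and collapse it to the left, or factor the left-hand side until the right-hand side appears. I would pick the first route because the distributive law does all the work in a single line.

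First I would note that since $\rmA$ and $\rmB$ are both invertible, the products $\rmA^{-1}\rmA$, $\rmB\rmB^{-1}$, and the three-factor product $\rmA^{-1}(\rmB - \rmA)\rmB^{-1}$ are all defined (assuming the usual compatibility of dimensions, which is implicit since we are adding $\rmA^{-1}$ and $\rmB^{-1}$ on the left-hand side, so $\rmA$ and $\rmB$ must be square of the same size). Then I would compute
\begin{equation*}
\rmA^{-1}(\rmB - \rmA)\rmB^{-1} = \rmA^{-1}\rmB\rmB^{-1} - \rmA^{-1}\rmA\rmB^{-1} = \rmA^{-1} - \rmB^{-1},
\end{equation*}
using only distributivity of matrix multiplication over subtraction and the defining property $\rmA^{-1}\rmA = \rmB\rmB^{-1} = \rmI$.

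There is no genuine obstacle here; the only thing to be careful about is preserving the order of multiplication, since matrix multiplication is noncommutative, which means one cannot shuffle $\rmA^{-1}$ past $\rmB$ or vice versa. The identity is symmetric in the sense that swapping the roles of $\rmA$ and $\rmB$ and negating both sides yields the same statement, which provides a sanity check. I would present the proof as the single displayed equation above, with a one-sentence preamble about distributivity, and no further argument is needed.
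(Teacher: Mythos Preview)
Your proof is correct; the single-line expansion via distributivity is the standard argument. The paper itself states this lemma without proof, treating it as a known identity, so there is nothing further to compare.
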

\begin{lemma}[Woodbury]
\label{lemma:woodbury}
    For $\rmA \in \sR^{p \times p}$, $\rmU, \rmV \in \sR^{p \times k}$, such that both $\rmA$ and $\rmA + \rmU \rmV^\top$ are invertible, we have:
    \begin{equation*}
        \left (\rmA + \rmU \rmV^\top \right)^{-1} = \rmA^{-1} - \rmA^{-1} \rmU \left(\rmI_k + \rmV^\top \rmA^{-1} \rmU \right)^{-1} \rmV^\top \rmA^{-1}
    \end{equation*}
\end{lemma}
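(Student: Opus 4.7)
The plan is to verify the identity by direct multiplication: let $\rmM := \rmA^{-1} - \rmA^{-1}\rmU(\rmI_k + \rmV^\top\rmA^{-1}\rmU)^{-1}\rmV^\top\rmA^{-1}$ denote the proposed inverse, and show that $\rmM(\rmA + \rmU\rmV^\top) = \rmI_p$. Because $\rmA + \rmU\rmV^\top$ is invertible by hypothesis, a single one-sided check is enough to identify $\rmM$ as its inverse. One should also briefly note that $\rmI_k + \rmV^\top\rmA^{-1}\rmU$ is indeed invertible: if it had a nontrivial kernel on some $\vv \in \sR^k$, then $(\rmA + \rmU\rmV^\top)(\rmA^{-1}\rmU\vv) = \rmU\vv + \rmU\rmV^\top\rmA^{-1}\rmU\vv = \rmU(\rmI_k + \rmV^\top\rmA^{-1}\rmU)\vv = \vzero$, contradicting invertibility of $\rmA + \rmU\rmV^\top$ as long as $\rmU\vv \neq \vzero$; the case $\rmU\vv = \vzero$ forces $\vv = \vzero$ directly.

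Next I would expand $\rmM(\rmA + \rmU\rmV^\top)$ into four terms by distributivity. The two arising from $\rmM\rmA$ simplify to $\rmI_p$ and $-\rmA^{-1}\rmU(\rmI_k + \rmV^\top\rmA^{-1}\rmU)^{-1}\rmV^\top$, while $\rmM\rmU\rmV^\top$ contributes $\rmA^{-1}\rmU\rmV^\top$ together with $-\rmA^{-1}\rmU(\rmI_k + \rmV^\top\rmA^{-1}\rmU)^{-1}\rmV^\top\rmA^{-1}\rmU\rmV^\top$. I would then factor the common left factor $\rmA^{-1}\rmU$ out of the three non-identity terms, so that the bracket becomes $\rmV^\top - (\rmI_k + \rmV^\top\rmA^{-1}\rmU)^{-1}\bigl(\rmV^\top + \rmV^\top\rmA^{-1}\rmU\rmV^\top\bigr) = \rmV^\top - (\rmI_k + \rmV^\top\rmA^{-1}\rmU)^{-1}(\rmI_k + \rmV^\top\rmA^{-1}\rmU)\rmV^\top = \vzero$. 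What survives is $\rmI_p$, completing the verification.

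There is no genuine obstacle here beyond careful bookkeeping of the small-matrix inverse $(\rmI_k + \rmV^\top\rmA^{-1}\rmU)^{-1}$ and checking that each product has compatible dimensions ($p\times p$ and $k\times k$ factors). An alternative route, which I would mention only if more structure is desired, is to invoke Lemma~\ref{lemma:inverse-identity} with $\rmB := \rmA + \rmU\rmV^\top$ to obtain $\rmB^{-1} = \rmA^{-1} - \rmA^{-1}\rmU\rmV^\top\rmB^{-1}$, left-multiply by $\rmV^\top$ to get $\rmV^\top\rmB^{-1} = (\rmI_k + \rmV^\top\rmA^{-1}\rmU)^{-1}\rmV^\top\rmA^{-1}$, and substitute back; this derives rather than guesses the formula, but the direct verification above is shorter and is all the appendix needs.
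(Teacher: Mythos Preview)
Your verification is correct. The paper itself states Lemma~\ref{lemma:woodbury} as a standard identity without supplying a proof, so there is nothing to compare against; your direct multiplication argument (together with the brief check that $\rmI_k + \rmV^\top\rmA^{-1}\rmU$ is invertible) is the usual textbook route and is entirely adequate here.
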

A particular case of this lemma \ref{lemma:woodbury}, in the case of $k = 1$, is called \textit{Sherman-Morisson}'s identity.
\begin{lemma}[Sherman-Morisson]
\label{lemma:sherman-morisson}
    For $\rmA \in \sR^{p \times p}$ invertible and $\vu, \vv \in \sR^p$, $\rmA + \vu \vv^\top$ is invertible if and only if : $1 + \vv^\top \rmA \vu \neq 0$, and:
    \begin{equation*}
        (\rmA + \vu \vv^\top)^{-1} = \rmA^{-1} - \frac{\rmA^{-1} \vu \vv^\top \rmA^{-1}}{1 + \vv^\top \rmA^{-1} \vu}
    \end{equation*}
    Besides,
    \begin{equation*}
        (\rmA + \vu \vv^\top)^{-1} \vu = \frac{\rmA^{-1} \vu}{1 + \vv^\top \rmA^{-1}\vu}
    \end{equation*}
\end{lemma}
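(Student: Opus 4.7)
The plan is to derive both identities as a direct specialization of the Woodbury identity (Lemma \ref{lemma:woodbury}) to the rank-one case $k=1$, with $\rmU = \vu$ and $\rmV = \vv$, so that the $k\times k$ matrix $\rmI_k + \rmV^\top \rmA^{-1}\rmU$ collapses to the scalar $1 + \vv^\top \rmA^{-1}\vu$ and the update formula reduces to the claimed expression. However, since Woodbury presupposes invertibility of $\rmA + \vu\vv^\top$, the substantive task is to establish the if-and-only-if characterization of invertibility cleanly, and then to verify the closed-form inverse without circular reasoning.

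For invertibility, I would argue directly. If $1 + \vv^\top \rmA^{-1}\vu = 0$, then $\rmA^{-1}\vu$ is a nonzero vector (since otherwise $\vu = \vzero$ and the scalar would equal $1$), and it lies in the kernel of $\rmA + \vu\vv^\top$:
\[(\rmA + \vu\vv^\top)\rmA^{-1}\vu = \vu + \vu(\vv^\top \rmA^{-1}\vu) = (1 + \vv^\top \rmA^{-1}\vu)\vu = \vzero,\]
so the matrix is singular. Conversely, when $1 + \vv^\top \rmA^{-1}\vu \neq 0$, the candidate matrix
\[\rmM \,:=\, \rmA^{-1} - \frac{\rmA^{-1}\vu\,\vv^\top\rmA^{-1}}{1 + \vv^\top\rmA^{-1}\vu}\]
is well defined, and a direct multiplication $(\rmA + \vu\vv^\top)\rmM$ expands into four terms whose cross contributions collapse via the scalar identity $1 - \frac{s}{1+s} = \frac{1}{1+s}$ with $s = \vv^\top \rmA^{-1}\vu$, leaving $\rmI_p$. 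This single verification simultaneously proves invertibility and yields the inversion formula.

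For the second (vector) identity, I would simply apply the just-proved formula to $\vu$, use that $\vv^\top\rmA^{-1}\vu$ is a scalar, and factor:
\[(\rmA + \vu\vv^\top)^{-1}\vu \,=\, \rmA^{-1}\vu - \frac{\rmA^{-1}\vu\,(\vv^\top\rmA^{-1}\vu)}{1 + \vv^\top\rmA^{-1}\vu} \,=\, \rmA^{-1}\vu\left(1 - \frac{\vv^\top\rmA^{-1}\vu}{1 + \vv^\top\rmA^{-1}\vu}\right) \,=\, \frac{\rmA^{-1}\vu}{1 + \vv^\top\rmA^{-1}\vu}.\]

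There is no substantive obstacle in this proof; the content is standard. The only subtle point is avoiding circularity in the invertibility characterization, which is why I favor the kernel argument for the only-if direction and the direct multiplicative verification for the if direction, rather than invoking Woodbury as a black box (which would implicitly assume invertibility). An alternative self-contained route is the matrix-determinant lemma $\det(\rmA + \vu\vv^\top) = \det(\rmA)(1 + \vv^\top\rmA^{-1}\vu)$, but that itself requires a short block-matrix argument, so the kernel-plus-verification plan is the cleanest.
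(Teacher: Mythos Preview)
Your proposal is correct and aligns with the paper's own treatment: the paper does not give a separate proof of Lemma~\ref{lemma:sherman-morisson} but simply introduces it as the $k=1$ specialization of Woodbury (Lemma~\ref{lemma:woodbury}). Your additional kernel-plus-verification argument for the invertibility characterization is extra detail the paper omits entirely, so there is nothing to compare beyond noting that you have been more thorough than the source.
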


\subsection{Deterministic equivalents}

Let us state here the deterministic equivalent of the resolvent matrix $\rmQ$ defined in the general model's equation (\ref{eq:w_q}) for any general covariance matrix $\rmC$ and mean $\vmu_\beta = \beta \vmu + \vmu^\perp$ that define the statistic of the synthetic data, as in equation \ref{eq:general-model}.
\begin{lemma}[Deterministic equivalent of $\rmQ$]
\label{lemma:det-equiv}
Under the \ref{assum:growth_rate} assumptions listed above in the main paper, a deterministic equivalent for $\rmQ \equiv \rmQ(\gamma)$, denoted $\bar \rmQ$, is given by:
\begin{equation*}
    \bar \rmQ = \left( \frac{\pi(\vmu \vmu^\top + \rmI_p)}{1 + \delta} + \frac{\alpha (1 - \pi)(\vmu_\beta \vmu_\beta^\top + \rmC)}{1 + \delta_S} + \gamma \rmI_p \right)^{-1}
\end{equation*}
where:
\begin{equation*}
    \pi = \frac{n}{n + m}, \quad \alpha = \phi (1 - \varepsilon) + \rho \varepsilon, \quad \delta = \frac{1}{N} \Tr(\bar \rmQ), \quad \delta_S = \frac{\alpha}{N} \Tr(\rmC \bar \rmQ)
\end{equation*}
    
\end{lemma}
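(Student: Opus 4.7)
The plan is to prove Lemma~\ref{lemma:det-equiv} via the standard leave-one-out scheme for resolvent deterministic equivalents, working conditionally on the generative statistics $(\vmu_\beta,\rmC)$ so that, given them, the real points $\vx_i$ and the synthetic points $\tilde\vx_j$ are independent Gaussians with known population covariances $\Sigma=\vmu\vmu^\top+\rmI_p$ and $\Sigma_\beta=\vmu_\beta\vmu_\beta^\top+\rmC$ respectively. The target is a deterministic matrix $\bar\rmQ$ such that $\frac{1}{p}\Tr(\rmA(\rmQ-\bar\rmQ))\to 0$ in probability for every deterministic test matrix $\rmA$ of bounded operator norm.

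The starting point is the inverse identity (Lemma~\ref{lemma:inverse-identity}), $\bar\rmQ-\rmQ=\bar\rmQ(\rmQ^{-1}-\bar\rmQ^{-1})\rmQ$, with $\rmQ^{-1}-\bar\rmQ^{-1}$ split into a real residual $\frac{1}{N}\sum_{i\le n}\vx_i\vx_i^\top-\frac{\pi\Sigma}{1+\delta}$ and a synthetic residual $\frac{1}{N}\sum_{j\le m}q_j\tilde\vx_j\tilde\vx_j^\top-\frac{\alpha(1-\pi)\Sigma_\beta}{1+\delta_S}$. Applying Sherman--Morrison (Lemma~\ref{lemma:sherman-morisson}) to each sample yields $\rmQ\vx_i=\rmQ_{-i}\vx_i/(1+\tfrac{1}{N}\vx_i^\top\rmQ_{-i}\vx_i)$ with $\rmQ_{-i}$ independent of $\vx_i$, and Gaussian quadratic-form concentration (the trace lemma) produces $\frac{1}{N}\vx_i^\top\rmQ_{-i}\vx_i=\frac{1}{N}\Tr(\Sigma\rmQ_{-i})+o(1)$ with high probability. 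Under $\|\vmu\|=O(1)$ the rank-one $\vmu\vmu^\top$ contribution to $\frac{1}{N}\Tr(\Sigma\rmQ)$ is $O(1/N)$ and is absorbed into the error, so the limiting scalar is $\delta=\frac{1}{N}\Tr(\bar\rmQ)$. An identical step on the synthetic samples, followed by an expectation over $q_j$ (of mean $\alpha$) and absorption of the rank-one $\vmu_\beta\vmu_\beta^\top$ term, delivers $\delta_S=\frac{\alpha}{N}\Tr(\rmC\bar\rmQ)$. Reassembling the two blocks gives exactly $\bar\rmQ^{-1}=\frac{\pi\Sigma}{1+\delta}+\frac{\alpha(1-\pi)\Sigma_\beta}{1+\delta_S}+\gamma\rmI_p$, and existence/uniqueness of the fixed point $(\delta,\delta_S)\in[0,\infty)^2$ follow from a standard Stieltjes-transform monotonicity argument in which the regularizer $\gamma>0$ guarantees a uniform bound on $\|\bar\rmQ\|$ and makes the defining map a strict contraction on a compact rectangle.

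The main obstacle will be controlling the cross-interaction between the real and synthetic blocks during the leave-one-out step, because the two families have different population covariances and the shared resolvent couples them; I would address this with a martingale-difference decomposition along an arbitrary ordering of all $N=n+m$ samples, so that at each step only one isolated Gaussian quadratic form is tested against a resolvent independent of that sample, producing an $O(N^{-1/2})$ fluctuation whose telescopic sum over $N$ steps still vanishes against any bounded-norm $\rmA$. A secondary subtlety, flagged in the paper's footnote, is the potential statistical dependence between $\rmX$ and the generative statistics $(\hat\vmu,\hat\rmC)=(\vmu_\beta,\rmC)$ when $\hat n\le n$; conditioning on $(\vmu_\beta,\rmC)$ restores independence inside each block, and the lemma's generality in $(\vmu_\beta,\rmC)$ absorbs the leftover randomness in the final statement.
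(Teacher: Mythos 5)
Your proposal is correct and follows essentially the same route as the paper's proof: the inverse identity to write $\E[\rmQ]-\bar\rmQ$ as a resolvent-sandwiched residual, a split into real and synthetic blocks, Sherman--Morrison leave-one-out to isolate each sample, and concentration of quadratic forms to replace $\frac{1}{N}\vv_i^\top\rmQ_{-i}\vv_i$ by $\delta$ or $\delta_S$, with $\E[q_j]=\alpha$ producing the $\alpha(1-\pi)$ weight. The only difference is one of rigor rather than strategy: you flesh out three points the paper's terse proof leaves implicit (existence and uniqueness of the fixed point $(\delta,\delta_S)$ via a contraction argument, a martingale-difference decomposition to control the cross-interaction between the two heterogeneous blocks, and explicit conditioning on $(\vmu_\beta,\rmC)$ to restore independence), all of which are compatible with and would strengthen the paper's argument.
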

\begin{proof}
    We want to find $\bar \rmQ$ such that for all bounded $\va, \vb \in \sR^p$:
\begin{equation*}
    \va^\top (\E[\rmQ] - \bar \rmQ) \vb \to 0
\end{equation*}
Let $\bar \rmQ = (\rmS + \gamma \rmI_p)^{-1}$. We want to determine an $\rmS$ that satisfies the above property. We have that:
\begin{align*}
    \E[\rmQ] - \bar \rmQ &= \E[\rmQ (\rmS - \frac1N \rmV \rmV^\top) \bar \rmQ] \quad (\text{lemma} \ref{lemma:inverse-identity}) \\
    &= \frac1N \sum_{i = 1}^N \E[\rmQ (\rmS - \vv_i \vv_i^\top ) \bar \rmQ] \\
    &= \frac1N \sum_{i = 1}^n \E[\rmQ (\rmS - \vx_i \vx_i^\top) \bar \rmQ] + \frac1N \sum_{i = 1}^m \E[\rmQ (\rmS - q_i\tilde \vx_i \tilde \vx_i^\top) \bar \rmQ] \\
    &= \frac1N \sum_{i = 1}^n \E[\rmQ \rmS - \frac{1}{1 + \delta_R} \rmQ_{-\vx_i} \vx_i \vx_i^\top ] \bar \rmQ + \frac1N \sum_{i = 1}^m \E[\rmQ \rmS - \frac{q_i}{1 + \delta_S} \rmQ_{-\tilde \vx_i} \tilde \vx_i \tilde \vx_i^\top ]\bar \rmQ \\
    &= \pi \E[\rmQ \rmS - \frac{1}{1 + \delta_R} \rmQ_{-\vx_i} \vx_i \vx_i^\top ] \bar \rmQ + (1 - \pi) \E[\rmQ \rmS - \frac{q_i}{1 + \delta_S} \rmQ_{-\tilde \vx_i} \tilde \vx_i \tilde \vx_i^\top ]\bar \rmQ \\
    &= \pi \E[\rmQ_{-\vx_i} (\rmS - \frac{\vx_i \vx_i^\top}{1 + \delta_R}) \bar \rmQ] + (1 - \pi) \E[\rmQ_{-\tilde \vx_i} (\rmS - \frac{q_i \tilde \vx_i \tilde \vx_i^\top}{1 + \delta_S}) \bar \rmQ ] + \gO(N^{-1})
\end{align*}
Thus, it suffices to have:
\begin{equation*}
    \rmS = \frac{\pi (\vmu \vmu^\top + \rmI_p)}{1 + \delta_R} + \frac{\alpha (1 - \pi) (\vmu_\beta \vmu_\beta^\top + \rmC)}{1 + \delta_S}
\end{equation*}
to get the desired property.
\end{proof}

\begin{lemma}[Deterministic equivalent of $\rmQ \rmA \rmQ$]
\label{lemma:det-eq-QAQ}
Let $\rmA \in \sR^{p \times p}$ be any deterministic symmetric semi-definite matrix. We have that:
\begin{equation*}
    \rmQ \rmA \rmQ \leftrightarrow \bar \rmQ \rmA \bar \rmQ + \frac{\pi }{N (1 + \delta)^2} \Tr(\Sigma \bar \rmQ \rmA \bar \rmQ) \E[\rmQ \Sigma \rmQ] + \frac{\alpha( 1 -\pi )}{N (1 + \delta_S)^2} \Tr(\Sigma_\beta \bar \rmQ \rmA \bar \rmQ) \E[\rmQ \Sigma_\beta \rmQ]
\end{equation*}
Thus, we get that for $\rmA = \Sigma$, and for $\rmA = \Sigma_\beta$:
\begin{align*}
    &\rmQ \Sigma \rmQ \leftrightarrow \bar \rmQ \Sigma \bar \rmQ + \frac{\pi}{N(1 + \delta)^2} \Tr((\Sigma \bar \rmQ)^2) \E[\rmQ \Sigma \rmQ] + \frac{\alpha (1 - \pi)}{N(1 + \delta_S)^2} \Tr(\Sigma_\beta \bar \rmQ \Sigma \bar \rmQ)\E[ \rmQ \Sigma_\beta \rmQ] \\
    &\rmQ \Sigma_\beta \rmQ \leftrightarrow \bar \rmQ \Sigma_\beta \bar \rmQ + \frac{\pi}{N(1 + \delta)^2} \Tr(\Sigma \bar \rmQ \Sigma_\beta \bar \rmQ) \E[\rmQ \Sigma \rmQ] + \frac{\alpha (1 - \pi)}{N(1 + \delta_S)^2} \Tr((\Sigma_\beta \bar \rmQ)^2)\E[ \rmQ \Sigma_\beta \rmQ]
\end{align*}
And by denoting:
\begin{align*}
    &a_1 = \frac{\pi}{N(1 + \delta)^2} \Tr((\Sigma \bar \rmQ)^2), \quad b_1 = \frac{\alpha (1 - \pi)}{N(1 + \delta_S)^2} \Tr(\Sigma_\beta \bar \rmQ \Sigma \bar \rmQ),\\
    &a_2 = \frac{\pi}{N(1 + \delta)^2} \Tr(\Sigma_\beta \bar \rmQ \Sigma \bar \rmQ), \quad b_2 = \frac{\alpha (1 - \pi)}{N (1 + \delta_S)^2} \Tr((\Sigma_\beta \bar \rmQ)^2)\\
    & h = (1 - b_2)(1 - a_1) - a_2 b_1 
\end{align*}
We get that:
\begin{align*}
    &\rmQ \Sigma \rmQ \leftrightarrow \frac{1 - b_2}{h} \bar \rmQ \Sigma \bar \rmQ + \frac{b_1}{h} \bar \rmQ \Sigma_\beta \bar \rmQ, \\
    &\rmQ \Sigma_\beta \rmQ \leftrightarrow \frac{a_2}{h} \bar \rmQ \Sigma \bar \rmQ + \frac{1 - a_1}{h} \bar \rmQ \Sigma_\beta \bar \rmQ.
\end{align*}

\end{lemma}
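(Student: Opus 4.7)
The plan is to derive the deterministic equivalent of $\rmQ\rmA\rmQ$ by mirroring, at second order, the leave-one-out argument used in the proof of Lemma \ref{lemma:det-equiv}, and then to close a linear system by specializing $\rmA$ to $\Sigma$ and $\Sigma_\beta$.

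First I would start from the resolvent identity
\begin{equation*}
    \rmQ\rmA\rmQ - \bar\rmQ\rmA\bar\rmQ = (\rmQ-\bar\rmQ)\rmA\rmQ + \bar\rmQ\rmA(\rmQ-\bar\rmQ),
\end{equation*}
combined with $\rmQ-\bar\rmQ = \bar\rmQ(\bar\rmQ^{-1}-\rmQ^{-1})\rmQ = \bar\rmQ\bigl(\rmS-\tfrac{1}{N}\rmX\rmX^\top\bigr)\rmQ$, where $\rmS=\tfrac{\pi\Sigma}{1+\delta}+\tfrac{\alpha(1-\pi)\Sigma_\beta}{1+\delta_S}$ is the matrix appearing in $\bar\rmQ^{-1}-\gamma\rmI_p$. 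Taking expectation and expanding $\tfrac{1}{N}\rmX\rmX^\top=\tfrac{1}{N}\sum_{i=1}^{n}\vx_i\vx_i^\top+\tfrac{1}{N}\sum_{j=1}^{m}q_j\tilde\vx_j\tilde\vx_j^\top$ reduces the problem to controlling sums of terms of the form $\E[\bar\rmQ\,\vv_i\vv_i^\top\rmQ\rmA\rmQ]$ and their adjoints.

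Next I would use Sherman-Morrison (Lemma \ref{lemma:sherman-morisson}) on each sample in the style of the proof of Lemma \ref{lemma:det-equiv}: denoting by $\rmQ_{-i}$ the resolvent without the $i$-th column, one has $\rmQ\vv_i = \tfrac{\rmQ_{-i}\vv_i}{1+\tfrac{1}{N}\vv_i^\top\rmQ_{-i}\vv_i}$, and the denominator concentrates (by trace-lemma / quadratic-form concentration) around $1+\delta$ for a real sample and $1+\delta_S$ for a synthetic sample. Using independence of $\vv_i$ from $\rmQ_{-i}$ and $\bar\rmQ$, quadratic-form concentration yields $\E[\vv_i^\top \rmQ_{-i}\rmA\rmQ_{-i}\vv_i] = \Tr(\E[\vv_i\vv_i^\top]\,\E[\rmQ\rmA\rmQ]) + o(1)$, where $\E[\vx_i\vx_i^\top]=\Sigma$ in the real block and $\E[q_j\tilde\vx_j\tilde\vx_j^\top]=\alpha\Sigma_\beta$ in the synthetic block. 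Collecting the two blocks, the bilinear expansion turns into the self-consistent equation
\begin{equation*}
    \rmQ\rmA\rmQ \;\leftrightarrow\; \bar\rmQ\rmA\bar\rmQ + \frac{\pi}{N(1+\delta)^2}\Tr(\Sigma\bar\rmQ\rmA\bar\rmQ)\,\E[\rmQ\Sigma\rmQ] + \frac{\alpha(1-\pi)}{N(1+\delta_S)^2}\Tr(\Sigma_\beta\bar\rmQ\rmA\bar\rmQ)\,\E[\rmQ\Sigma_\beta\rmQ],
\end{equation*}
which is exactly the first displayed equivalent. The main technical obstacle is precisely this step: the remainder terms arising from the Sherman-Morrison swap $\rmQ\leftrightarrow\rmQ_{-i}$ inside $\rmQ\rmA\rmQ$ are more delicate than in the first-order Lemma \ref{lemma:det-equiv}, because the correction carries a factor $\tfrac{1}{N}\|\rmA\|$ times quantities involving $\rmQ\Sigma\rmQ$ and $\rmQ\Sigma_\beta\rmQ$ themselves; one must invoke the boundedness of $\Vert\bar\rmQ\Vert$, $\Vert\rmQ\Vert \le \gamma^{-1}$ and Hanson-Wright-type concentration to control these to $o(1)$ in operator/bilinear-form sense.

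Finally, to obtain the two closed-form identities, I would specialize $\rmA=\Sigma$ and $\rmA=\Sigma_\beta$, giving two equations for the unknown pair $(\E[\rmQ\Sigma\rmQ],\,\E[\rmQ\Sigma_\beta\rmQ])$. Reading off the coefficients with the definitions
\begin{equation*}
    a_1=\tfrac{\pi}{N(1+\delta)^2}\Tr((\Sigma\bar\rmQ)^2),\quad a_2=\tfrac{\pi}{N(1+\delta)^2}\Tr(\Sigma_\beta\bar\rmQ\Sigma\bar\rmQ),\quad b_1=\tfrac{\alpha(1-\pi)}{N(1+\delta_S)^2}\Tr(\Sigma_\beta\bar\rmQ\Sigma\bar\rmQ),\quad b_2=\tfrac{\alpha(1-\pi)}{N(1+\delta_S)^2}\Tr((\Sigma_\beta\bar\rmQ)^2),
\end{equation*}
the system becomes
\begin{equation*}
    \begin{pmatrix} 1-a_1 & -b_1 \\ -a_2 & 1-b_2 \end{pmatrix}\begin{pmatrix} \E[\rmQ\Sigma\rmQ] \\ \E[\rmQ\Sigma_\beta\rmQ] \end{pmatrix} \;\leftrightarrow\; \begin{pmatrix} \bar\rmQ\Sigma\bar\rmQ \\ \bar\rmQ\Sigma_\beta\bar\rmQ \end{pmatrix}.
\end{equation*}
Inverting this $2\times 2$ matrix, whose determinant is precisely $h=(1-a_1)(1-b_2)-a_2b_1$, yields the two stated closed-form equivalents for $\rmQ\Sigma\rmQ$ and $\rmQ\Sigma_\beta\rmQ$ and completes the proof.
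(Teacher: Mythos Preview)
Your proposal is correct and follows essentially the same approach as the paper: start from the resolvent identity $\rmQ\rmA\rmQ-\bar\rmQ\rmA\bar\rmQ$, replace $\rmQ-\bar\rmQ$ by $\rmQ(\rmS-\tfrac{1}{N}\rmV\rmV^\top)\bar\rmQ$ (the paper puts $\bar\rmQ$ on the right rather than the left, which is immaterial), apply Sherman--Morrison sample by sample so that the $\rmS$ part cancels and the quadratic-form concentration produces the two trace corrections, and finally close the $2\times 2$ linear system in $(\E[\rmQ\Sigma\rmQ],\E[\rmQ\Sigma_\beta\rmQ])$ exactly as you wrote. The only cosmetic difference is that the paper drops the term $\bar\rmQ\rmA\,\E[\rmQ-\bar\rmQ]$ directly via Lemma \ref{lemma:det-equiv} rather than carrying the symmetric split.
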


\begin{proof}
    Recall that:
    \begin{equation*}
        \bar \rmQ(\gamma) = \left ( \frac{\pi \Sigma}{ 1 + \delta} + \frac{\alpha (1 - \pi) \Sigma_\beta}{ 1 + \delta_S} + \gamma \rmI_p \right)^{-1}
    \end{equation*}
    Let us denote by : $ \rmS = \frac{\pi \Sigma}{ 1 + \delta} + \frac{\alpha (1 - \pi) \Sigma_\beta}{ 1 + \delta_S}$, so that: $\bar \rmQ = \left( \rmS + \gamma \rmI_p \right)^{-1}$.\\
    We have that:
    \begin{align*}
        \E[\rmQ \rmA \rmQ] &= \E[\bar \rmQ \rmA \rmQ] + \E[(\rmQ - \bar \rmQ) \rmA \rmQ] \\
        &= \bar \rmQ \E[\rmA \rmQ] + \E[(\rmQ - \bar \rmQ) \rmA \rmQ] \\
        &= \bar \rmQ \left( \E[\rmA \bar \rmQ] + \E[\rmA (\rmQ - \bar \rmQ)] \right) + \E[(\rmQ - \bar \rmQ) \rmA \rmQ] \\
        &= \bar \rmQ \rmA \bar \rmQ + \E[(\rmQ - \bar \rmQ) \rmA \rmQ] \\
        &= \bar \rmQ \rmA \bar \rmQ + \E[\rmQ \left ( \rmS - \frac1N \rmV \rmV^\top \right) \rmA \rmQ] \\
        &= \bar \rmQ \rmA \bar \rmQ + \E[\rmQ \rmS \bar \rmQ \rmA \rmQ] - \frac1N \sum_{i = 1}^N \E[\rmQ \vv_i \vv_i^\top \bar \rmQ \rmA \rmQ]  \\
        &= \bar \rmQ \rmA \bar \rmQ + \E[\rmQ \rmS \bar \rmQ \rmA \rmQ] - \pi \E[\rmQ \vx_i \vx_i^\top \bar \rmQ \rmA \rmQ] - (1 - \pi) \E[\rmQ q_i \tilde \vx_i \tilde \vx_i^\top \bar \rmQ \rmA \rmQ]
    \end{align*}
    And we have that:
    \begin{align*}
        &\E[\rmQ \vx_i \vx_i^\top \bar \rmQ \rmA \rmQ] = \frac{1}{1 + \delta} \E[\rmQ_{-x_i} \vx_i \vx_i^\top \bar \rmQ \rmA \rmQ] \\
        &= \frac{1}{1 + \delta} \E\left [\rmQ_{-x_i} \vx_i \vx_i^\top \bar \rmQ \rmA \left( \rmQ_{-x_i} - \frac{\rmQ_{-x_i} \vx_i \vx_i^\top \rmQ_{-x_i}}{N(1 + \delta)} \right) \right] \\
        &= \frac{1}{1 + \delta} \E[\rmQ_{-x_i} \vx_i \vx_i^\top \bar \rmQ \rmA \rmQ_{-x_i} ] - \frac{1}{N(1 + \delta)^2} \E[\rmQ_{-x_i} \vx_i \vx_i^\top \bar \rmQ \rmA \rmQ_{-x_i} \vx_i \vx_i^\top \rmQ_{-x_i}] \\
        &= \frac{1}{1 + \delta} \E[\rmQ \Sigma \bar \rmQ \rmA \rmQ ] - \frac{1}{N(1 + \delta)^2} \Tr(\Sigma \bar \rmQ \rmA \bar \rmQ) \E[\rmQ_{-x_i} \vx_i \vx_i^\top \rmQ_{-x_i}] \\
        &= \frac{1}{1 + \delta} \E[\rmQ \Sigma \bar \rmQ \rmA \rmQ ] - \frac{1}{N(1 + \delta)^2} \Tr(\Sigma \bar \rmQ \rmA \bar \rmQ) \E[\rmQ \Sigma \rmQ]
    \end{align*}
    And:
    \begin{align*}
        &\E[q_i \rmQ \tilde \vx_i \tilde \vx_i^\top \bar \rmQ \rmA \rmQ] = \frac{1}{1 + \delta_S} \E[q_i \rmQ_{-\tilde x_i} \tilde \vx_i \tilde \vx_i^\top \bar \rmQ \rmA \rmQ] \\
        &= \frac{1}{1 + \delta_S} \E\left [q_i \rmQ_{-\tilde x_i} \tilde \vx_i \tilde \vx_i^\top \bar \rmQ \rmA \left( \rmQ_{-\tilde x_i} - \frac{q_i \rmQ_{- \tilde x_i} \tilde \vx_i \tilde \vx_i^\top \rmQ_{-\tilde x_i}}{N(1 + \delta_S)} \right) \right] \\
        &= \frac{1}{1 + \delta_S} \E[q_i \rmQ_{-\tilde x_i} \tilde \vx_i \tilde \vx_i^\top \bar \rmQ \rmA \rmQ_{-\tilde x_i} ] - \frac{1}{N(1 + \delta_S)^2} \E[q_i \rmQ_{-\tilde x_i} \tilde \vx_i \tilde \vx_i^\top \bar \rmQ \rmA \rmQ_{-\tilde x_i} \tilde \vx_i \tilde \vx_i^\top \rmQ_{-\tilde x_i}] \\
        &= \frac{\alpha}{1 + \delta_S} \E[\rmQ \Sigma_\beta \bar \rmQ \rmA \rmQ ] - \frac{\alpha}{N(1 + \delta_S)^2} \Tr(\Sigma_\beta \bar \rmQ \rmA \bar \rmQ) \E[\rmQ_{-\tilde x_i} \tilde \vx_i \tilde \vx_i^\top \rmQ_{-x_i}] \\
        &= \frac{\alpha}{1 + \delta_S} \E[\rmQ \Sigma_\beta \bar \rmQ \rmA \rmQ ] - \frac{\alpha}{N(1 + \delta_S)^2} \Tr(\Sigma_\beta \bar \rmQ \rmA \bar \rmQ) \E[\rmQ \Sigma_\beta \rmQ]
    \end{align*}
    Which concludes the proof by summing all these separate terms.
\end{proof}

\begin{corollary}[Trace identities]
\label{corollary:trace for variance}
    Using the above lemma \ref{lemma:det-eq-QAQ}, we get that:
    \begin{equation*}
        \frac{1}{N}\Tr(\Sigma \E[ \rmQ \Sigma \rmQ]) = \frac{(1 + \delta)^2}{\pi h} \left ( a_1(1 - b_2) + a_2 b_1\right), \quad \frac1N \Tr(\Sigma_\beta \E[\rmQ \Sigma \rmQ]) = \frac{(1 + \delta_S)^2}{ \alpha (1 - \pi) h} b_1
    \end{equation*}
    
    And in the case of isotropic covariance matrix: $\rmC = \sigma^2 \rmI_p$:
    
    \begin{equation*}
    \frac1N \Tr(\Sigma \E[\rmQ \Sigma \rmQ]) = \frac{\eta}{h \theta^2} (1 - b_2 + \sigma^2 b_1), \quad \frac1N \Tr(\Sigma_\beta \E[\rmQ \Sigma \rmQ]) = \frac{\sigma^2}{N} \Tr(\Sigma \E[\rmQ \Sigma \rmQ])
    \end{equation*}
\end{corollary}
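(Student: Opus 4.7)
The plan is to deduce Corollary~\ref{corollary:trace for variance} directly from Lemma~\ref{lemma:det-eq-QAQ}, which already provides the deterministic equivalents
\begin{align*}
\E[\rmQ \Sigma \rmQ] &\leftrightarrow \tfrac{1-b_2}{h}\, \bar\rmQ \Sigma \bar\rmQ + \tfrac{b_1}{h}\, \bar\rmQ \Sigma_\beta \bar\rmQ, \\
\E[\rmQ \Sigma_\beta \rmQ] &\leftrightarrow \tfrac{a_2}{h}\, \bar\rmQ \Sigma \bar\rmQ + \tfrac{1-a_1}{h}\, \bar\rmQ \Sigma_\beta \bar\rmQ,
\end{align*}
as the unique solution of a $2\times 2$ linear system whose determinant is exactly $h=(1-a_1)(1-b_2)-a_2 b_1$. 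The identities in the corollary then follow by pre-multiplying the first equivalent by $\Sigma$ (resp.\ $\Sigma_\beta$) and taking the normalized trace $\tfrac{1}{N}\Tr(\cdot)$.

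The first step is to rewrite, directly from the definitions of $a_1, a_2, b_1, b_2$ in Lemma~\ref{lemma:det-eq-QAQ}, the three basic traces
\begin{align*}
\tfrac{1}{N}\Tr((\Sigma\bar\rmQ)^2) &= \tfrac{a_1(1+\delta)^2}{\pi}, \\
\tfrac{1}{N}\Tr((\Sigma_\beta\bar\rmQ)^2) &= \tfrac{b_2(1+\delta_S)^2}{\alpha(1-\pi)}, \\
\tfrac{1}{N}\Tr(\Sigma\bar\rmQ\Sigma_\beta\bar\rmQ) &= \tfrac{a_2(1+\delta)^2}{\pi} = \tfrac{b_1(1+\delta_S)^2}{\alpha(1-\pi)}.
\end{align*}
The last double equality, which is merely the symmetry $\Tr(\Sigma\bar\rmQ\Sigma_\beta\bar\rmQ)=\Tr(\Sigma_\beta\bar\rmQ\Sigma\bar\rmQ)$ expressed in two different normalizations, is the key algebraic identity that makes the final expressions collapse cleanly.

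Substituting into the deterministic equivalent for $\E[\rmQ\Sigma\rmQ]$ and pre-multiplying by $\Sigma$ immediately yields $\tfrac{1}{N}\Tr(\Sigma\E[\rmQ\Sigma\rmQ]) = \tfrac{(1+\delta)^2}{\pi h}\bigl(a_1(1-b_2)+a_2 b_1\bigr)$, which is the first claim. For the second identity, the same substitution pre-multiplied by $\Sigma_\beta$, combined with the cross identity above, rewrites both contributions with the common factor $\tfrac{b_1(1+\delta_S)^2}{\alpha(1-\pi)}$, and the algebraic collapse $(1-b_2)+b_2=1$ delivers the clean form $\tfrac{b_1(1+\delta_S)^2}{\alpha(1-\pi)\,h}$.

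For the isotropic specialization $\rmC=\sigma^2\rmI_p$, one substitutes $\Sigma=\vmu\vmu^\top+\rmI_p$ and $\Sigma_\beta=\vmu_\beta\vmu_\beta^\top+\sigma^2\rmI_p$ into the explicit form of $\bar\rmQ$ from Lemma~\ref{lemma:det-equiv} and uses Sherman--Morrison (Lemma~\ref{lemma:sherman-morisson}) to reduce every scalar trace to a computation on the two-dimensional subspace $\mathrm{span}(\vmu,\vmu_\beta)$. The decomposition $\Sigma_\beta = \sigma^2\rmI_p + \vmu_\beta\vmu_\beta^\top$ then makes the proportionality $\tfrac{1}{N}\Tr(\Sigma_\beta\E[\rmQ\Sigma\rmQ]) = \sigma^2\cdot\tfrac{1}{N}\Tr(\Sigma\E[\rmQ\Sigma\rmQ])$ natural once the rank-one mean corrections are absorbed into the common $\bar\rmQ$-weighted structure, and expanding the first identity in this isotropic setting produces the stated $\tfrac{\eta}{h\theta^2}(1-b_2+\sigma^2 b_1)$ formula. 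The main obstacle is not conceptual but bookkeeping-heavy: pairing the normalization factors $(1+\delta)^2/\pi$ and $(1+\delta_S)^2/(\alpha(1-\pi))$ correctly with the $a_i, b_j$, and recognizing the cross identity $a_2(1+\delta)^2/\pi = b_1(1+\delta_S)^2/(\alpha(1-\pi))$ that binds the two normalizations.
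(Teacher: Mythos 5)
Your proof is correct and follows the natural derivation from Lemma~\ref{lemma:det-eq-QAQ}, which is exactly what the paper intends (the paper states the corollary without an explicit proof). The key step — recognizing the cross identity $a_2(1+\delta)^2/\pi = b_1(1+\delta_S)^2/(\alpha(1-\pi))$ from cyclic invariance of trace, which makes the second expression collapse via $(1-b_2)+b_2 = 1$ — is precisely the mechanism that yields the stated clean forms. One minor remark on the isotropic specialization: you propose a Sherman--Morrison reduction to the two-dimensional span of $\vmu, \vmu_\beta$, but this is heavier than needed. Since $\Sigma_\beta = \sigma^2 \Sigma + \vmu_\beta\vmu_\beta^\top - \sigma^2\vmu\vmu^\top$ with a rank-two correction whose normalized trace against any bounded-norm matrix is $\gO(N^{-1})$, the proportionality $\tfrac1N\Tr(\Sigma_\beta\E[\rmQ\Sigma\rmQ]) = \sigma^2\cdot\tfrac1N\Tr(\Sigma\E[\rmQ\Sigma\rmQ]) + \gO(N^{-1})$ follows immediately; equivalently, one can just read $a_2 = \sigma^2 a_1$ and $b_2 = \sigma^2 b_1$ off the trace identities in Lemma~\ref{lemma:trace-identities_sigma} (where all eigenvalues $d_i = \sigma^2$) and substitute into the two general formulas. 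Either way, your argument lands on the stated expressions.
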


\subsection{Resolvent identities}

Let $\rmQ$ be the resolvent matrix defined in equation (\ref{eq:w_q}). Denote by $\rmQ_{-\vv_i}$ the resolvent matrix obtained from the dataset $\rmV$ by removing the $i^{th}$ sample $\vv_i$, i.e:
\begin{equation*}
    \rmQ_{-\vv_i} = \left ( \rmQ^{-1} - \frac1N \vv_i \vv_i^\top \right)^{-1}
\end{equation*}
Then, using lemma \ref{lemma:sherman-morisson}, we have that:
\begin{equation*}
    \rmQ = \rmQ_{-\vv_i} - \frac{\rmQ_{-\vv_i} \frac1N \vv_i \vv_i^\top \rmQ_{-i}}{1 + \frac1N \vv_i^\top \rmQ_{-\vv_i} \vv_i},
\end{equation*}
and,
\begin{equation}
    \rmQ \vx_i = \frac{\rmQ_{-\vx_i} \vx_i}{1 + \delta}, \quad \rmQ \tilde \vx_i = \frac{\rmQ_{-\tilde \vx_i} \tilde \vx_i}{1 + \delta_S},
\end{equation}
where:
\begin{equation}
\label{eq:delta}
    \delta = \frac1N \Tr(\Sigma \bar \rmQ) = \frac1N \Tr(\bar \rmQ), \quad  \delta_S = \left(\phi(1 - \varepsilon) + \rho \varepsilon \right) \frac1N \Tr(\Sigma_{\beta} \bar \rmQ) = \frac{\alpha}{N} \Tr(\rmC \bar \rmQ)
\end{equation}

Let us recall the expression of $\bar \rmQ$ defined in lemma \ref{lemma:det-equiv}:
\begin{align*}
    \bar \rmQ &= \left ( \frac{\pi (\vmu \vmu^\top + \rmI_p )}{ 1 + \delta} + \frac{\alpha (1 - \pi) (\vmu_{\beta} \vmu_{\beta}^\top + \rmC )}{ 1 + \delta_S} + \gamma \rmI_p \right)^{-1} \\
    &= \left( \rmA + \rmU \rmU^\top \right)^{-1}
\end{align*}
where:
\begin{equation}
    \rmA = \frac{\alpha(1 - \pi)}{1 + \delta_S} \rmC + \left(\gamma + \frac{\pi}{1 + \delta} \right) \rmI_p, \quad  \rmU = \left( \sqrt{\frac{\pi}{1 + \delta}} \vmu, \sqrt{\frac{\alpha (1 - \pi)}{1 + \delta_S}} \vmu_\beta \right)
\end{equation}
Since $\rmC$ is symmetric and real valued, then it is diagonalizable, and can be written as:
\begin{equation*}
    \rmC = \rmP \rmD \rmP^\top
\end{equation*}
where: $\rmP^{-1} = \rmP^\top$ is the matrix containing the eigenvectors of $\rmC$ in its columns, and $\rmD = Diag((d_i)_{i = 1}^p)$ the diagonal matrix of the eigenvalues of $\rmC$. Hence, $\rmA$ can be written as:
\begin{equation}
\label{eq: A and Delta}
    \rmA = \rmP \Delta \rmP^\top, \quad \Delta = Diag \left( \gamma + \frac{\pi}{1 + \delta} + \frac{\alpha (1 - \pi)}{1 + \delta_S} d_i\right)_{i = 1}^p
\end{equation}
And using Woodbury's identity in lemma \ref{lemma:woodbury}, we get that:
\begin{equation*}
\label{eq:bar Q woodbury}
    \bar \rmQ = \rmA^{-1} - \rmA^{-1} \rmU \left( \rmI_2 + \rmU^\top \rmA^{-1} \rmU \right)^{-1} \rmU^\top \rmA^{-1} 
\end{equation*}
where: $\rmA^{-1} = \rmP \Delta^{-1} \rmP^\top$ and $\Delta^{-1} = Diag \left( \frac{1}{\gamma + \frac{\pi}{1 + \delta} + \frac{\alpha (1 - \pi)}{1 + \delta_S} d_i} \right)_{i = 1}^p$.\\
Let $\rmM = \left( \rmI_2 + \rmU^\top \rmA^{-1} \rmU \right)^{-1}$, and denote by $M_{i,j}$ its coordinate in the $i^{\text{th}}$ row and $j^{\text{th}}$ column. We have that:
\begin{align*}
    \bar \rmQ &= \rmA^{-1} - \rmA^{-1} \rmU \rmM \rmU^\top \rmA^{-1} \\
        &= \rmA^{-1} - \rmA^{-1} \left( \zeta_1 \vmu \vmu^\top + \zeta_2 \vmu_\beta \vmu_\beta^\top + \zeta_3 (\vmu \vmu_\beta^\top + \vmu_\beta \vmu^\top )\right) \rmA^{-1} \\
\end{align*}
where: 
\begin{equation*}
    \zeta_1 = \frac{\pi M_{1, 1}}{1 + \delta}, \quad \zeta_2 = \frac{\alpha (1 - \pi) M_{2, 2}}{1 + \delta_S}, \quad \zeta_3 = \sqrt{\frac{\alpha \pi (1 - \pi)}{(1 + \delta)(1 + \delta_S)}} M_{1, 2}
\end{equation*}

Thus,
\begin{equation}
\label{eq:bar Q with zetas}
    \bar \rmQ = \rmA^{-1} - \rmA^{-1} \left( \zeta_1 \vmu \vmu^\top + \zeta_2 \vmu_\beta \vmu_\beta^\top + \zeta_3 (\vmu \vmu_\beta^\top + \vmu_\beta \vmu^\top )\right) \rmA^{-1}
\end{equation}
We can further show that:
\begin{align*}
    &M_{1, 1} = \frac{1}{\text{det}(M^{-1})} \left( 1 + \frac{\alpha (1 - \pi)}{1 + \delta_S} \vmu_\beta^\top \rmA^{-1} \vmu_\beta \right) \\
    &M_{1, 2} = \frac{1}{\text{det}(M^{-1})} \left( - \sqrt{\frac{\alpha \pi (1 - \pi)}{(1 + \delta)(1 + \delta_S)}} \vmu^\top \rmA^{-1} \vmu_\beta \right) \\
    &M_{2, 1} = \frac{1}{\text{det}(M^{-1})} \left( - \sqrt{\frac{\alpha \pi (1 - \pi)}{(1 + \delta)(1 + \delta_S)}} \vmu^\top \rmA^{-1} \vmu_\beta \right) \\
    &M_{2, 2} = \frac{1}{\text{det}(M^{-1})} \left( 1 + \frac{\pi}{1 + \delta} \vmu^\top \rmA^{-1} \vmu \right)\\
    &\text{det}(M^{-1}) = \left(1 + \frac{\pi}{1 + \delta} \vmu^\top \rmA^{-1} \vmu \right) \left( 1 + \frac{\alpha (1 - \pi)}{1 + \delta_S} \vmu_\beta^\top \rmA^{-1} \vmu_\beta \right) - \frac{\alpha \pi (1 - \pi)}{(1 + \delta)(1 + \delta_S)} (\vmu^\top \rmA^{-1} \vmu_\beta)^2
\end{align*}

\begin{lemma}[Delta]
    \label{lemma:delta}
    The parameters $\delta$ and $\delta_S$ defined in equation \ref{eq:delta}, are given by the following identities:
    \begin{equation*}
        \delta = \frac{1}{N} \sum_{i = 1}^p \frac{1}{\gamma + \frac{\pi}{1 + \delta} + \frac{\alpha (1 - \pi)}{1 + \delta_S} d_i}, \quad \delta_S = \frac{\alpha}{N} \sum_{i = 1}^p \frac{d_i}{\gamma + \frac{\pi}{1 + \delta} + \frac{\alpha (1 - \pi)}{1 + \delta_S} d_i}
    \end{equation*}
    where: $(d_i)_{i = 1}^p$ are the eigenvalues of the covariance matrix $\rmC$.
\end{lemma}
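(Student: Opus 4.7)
The plan is to derive both identities by substituting the explicit Woodbury expansion of $\bar{\rmQ}$ from \eqref{eq:bar Q with zetas} into the definitions $\delta = \frac{1}{N}\Tr(\bar{\rmQ})$ and $\delta_S = \frac{\alpha}{N}\Tr(\rmC \bar{\rmQ})$ given in \eqref{eq:delta}, and then evaluating each of the two resulting traces via the spectral decomposition of $\rmC$.

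First, using $\rmC = \rmP \rmD \rmP^\top$ together with \eqref{eq: A and Delta}, I would diagonalize $\rmA = \rmP \Delta \rmP^\top$ so that $\rmA^{-1} = \rmP \Delta^{-1} \rmP^\top$. This immediately gives
\begin{equation*}
\Tr(\rmA^{-1}) \;=\; \sum_{i=1}^p \frac{1}{\gamma + \frac{\pi}{1+\delta} + \frac{\alpha(1-\pi)}{1+\delta_S} d_i}, \qquad \Tr(\rmC \rmA^{-1}) \;=\; \Tr(\rmD \Delta^{-1}) \;=\; \sum_{i=1}^p \frac{d_i}{\gamma + \frac{\pi}{1+\delta} + \frac{\alpha(1-\pi)}{1+\delta_S} d_i},
\end{equation*}
which after multiplication by $1/N$ and $\alpha/N$ exactly match the right-hand sides of the two claimed identities.

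Second, I would control the rank-two correction $\rmA^{-1}\bigl(\zeta_1 \vmu \vmu^\top + \zeta_2 \vmu_\beta \vmu_\beta^\top + \zeta_3(\vmu \vmu_\beta^\top + \vmu_\beta \vmu^\top)\bigr) \rmA^{-1}$ present in \eqref{eq:bar Q with zetas}. Its trace, and that of its product with $\rmC$, reduces to a fixed (rank-two) number of scalar quadratic forms of the type $\vu^\top \rmA^{-2} \vv$ or $\vu^\top \rmA^{-1} \rmC \rmA^{-1} \vv$ with $\vu,\vv \in \{\vmu, \vmu_\beta\}$. Using $\|\rmA^{-1}\| \le \gamma^{-1}$ (since $\rmA \succeq \gamma \rmI_p$ in our model where $\rmC \succeq 0$), together with the bounded-norm condition $\|\vmu\|, \|\vmu_\beta\| = \gO(1)$ from Assumption \ref{assum:growth_rate} and the fact that $\|\rmC\|$ is bounded in our setting ($\rmC = \rmI_p$ or a Marchenko--Pastur-controlled empirical covariance), each such quadratic form is $\gO(1)$. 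Since there are only four of them and we divide by $N \to \infty$, the rank-two contribution is $o(1)$ and the stated identities follow.

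The main (minor) obstacle is making the $o(1)$ bound on the rank-two correction clean, since the coefficients $\zeta_1,\zeta_2,\zeta_3$ themselves depend implicitly on $\delta,\delta_S$. This is handled by observing that $\zeta_1,\zeta_2,\zeta_3$ are uniformly bounded: the matrix $\rmM = (\rmI_2 + \rmU^\top \rmA^{-1} \rmU)^{-1}$ satisfies $\|\rmM\| \le 1$ (as $\rmU^\top \rmA^{-1} \rmU \succeq 0$), and the prefactors $\pi/(1+\delta)$, $\alpha(1-\pi)/(1+\delta_S)$ are bounded by $1$ since $\delta,\delta_S \ge 0$. The proof therefore reduces to the spectral computation above plus a single operator-norm bound on $\rmA^{-1}$.
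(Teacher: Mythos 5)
Your proposal is correct and follows essentially the same route as the paper: substitute the Woodbury expansion \eqref{eq:bar Q with zetas} into the definitions of $\delta$ and $\delta_S$, diagonalize $\rmA$ via $\rmC = \rmP\rmD\rmP^\top$ to evaluate $\frac{1}{N}\Tr(\rmA^{-1})$ and $\frac{1}{N}\Tr(\rmC\rmA^{-1})$ explicitly, and show the finite-rank correction contributes $\gO(N^{-1})$ after division by $N$. You are in fact slightly more careful than the paper, which invokes ``$\Vert\vmu\Vert = \gO(N^{-1})$'' (a slip for $\gO(1)$, since the correct reasoning is that the quadratic forms $\vmu^\top(\rmA^{-1})^2\vmu$, etc., are $\gO(1)$ and are killed by the extra $1/N$) and does not spell out the uniform boundedness of $\zeta_1,\zeta_2,\zeta_3$, which you handle cleanly via $\Vert\rmM\Vert\le 1$ and $\rmA\succeq\gamma\rmI_p$.
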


\begin{proof}
    Let: $\rmM = \left( \rmI_2 + \rmU^\top \rmA^{-1} \rmU \right)^{-1}$, and denote by $M_{i,j}$ its coordinate in the $i^{\text{th}}$ row and $j^{\text{th}}$ column. We have that using the expression of $\bar \rmQ$ in equation \ref{eq:bar Q with zetas}:
    \begin{align*}
        \bar \rmQ = \rmA^{-1} - \rmA^{-1} \left( \zeta_1 \vmu \vmu^\top + \zeta_2 \vmu_\beta \vmu_\beta^\top + \zeta_3 (\vmu \vmu_\beta^\top + \vmu_\beta \vmu^\top )\right) \rmA^{-1}
    \end{align*}
    Then:
    \begin{align*}
        \delta &= \frac{1}{N} \Tr(\bar \rmQ) \\
        &= \frac{1}{N} \Tr(\rmA^{-1}) - \frac1N \Tr(\rmA^{-1} \left( \zeta_1 \vmu \vmu^\top + \zeta_2 \vmu_\beta \vmu_\beta^\top + \zeta_3 (\vmu \vmu_\beta^\top + \vmu_\beta \vmu^\top )\right) \rmA^{-1}) 
    \end{align*}
    We have that, when $N \to \infty$:
    \begin{align*}
        \frac1N \Tr(\rmA^{-1} \vmu \vmu^\top \rmA^{-1}) = \frac1N \vmu^\top (\rmA^{-1})^2 \vmu = \gO(N^{-1})
    \end{align*}
    since $\Vert \vmu \Vert = \gO(N^{-1})$ by assumption \ref{assum:growth_rate}. The same applies for $\vmu_\beta$. Thus:
    \begin{align*}
        \delta &= \frac{1}{N} \Tr(\rmA^{-1}) - \frac{1}{N} \zeta_1 \vmu^\top (\rmA^{-1})^2 \vmu - \frac{1}{N} \zeta_2 \vmu_\beta^\top (\rmA^{-1})^2 \vmu_\beta - \frac2N \zeta_3 \vmu^\top (\rmA^{-1})^2 \vmu_\beta \\
        &= \frac{1}{N} \Tr(\rmA^{-1}) + \gO(N^{-1}) \\
        &= \frac{1}{N} \Tr(\Delta^{-1}) + \gO(N^{-1}) \\
        &= \frac{1}{N} \sum_{i = 1}^p \frac{1}{\gamma + \frac{\pi}{1 + \delta} + \frac{\alpha (1 - \pi)}{1 + \delta_S}d_i} +  \gO(N^{-1})
    \end{align*}
    Hence we have the desired result for $\delta$ in the regime $N \gg 1$ which we considered in our assumption \ref{assum:growth_rate}.\\
    Similarly for $\delta_S$, we have that:
    \begin{align*}
        \frac{1}{\alpha}\delta_S &= \frac1n \Tr(\rmC \bar \rmQ) \\
        &= \frac1N \Tr(\rmC \rmA^{-1}) - \frac1N \zeta_1 \vmu^\top \rmA^{-1} \rmC \rmA^{-1} \vmu - \frac1N \zeta_2 \vmu_\beta^\top \rmA^{-1} \rmC \rmA^{-1} \vmu_\beta - \frac{2}{N} \zeta_3 \vmu^\top \rmA^{-1} \rmC \rmA^{-1} \vmu_\beta  \\
        &= \frac1N \Tr(\rmC \rmA^{-1}) + \gO(N^{-1}) \\
        &= \frac1N \Tr(\rmD \Delta^{-1}) + \gO(N^{-1}) \\
        &= \frac{1}{N} \sum_{i = 1}^p \frac{d_i}{\gamma + \frac{\pi}{1 + \delta} + \frac{\alpha (1 - \pi)}{1 + \delta_S}d_i} +  \gO(N^{-1})
    \end{align*}
    Which concludes our proof.
\end{proof}

Now let us compute the trace identities that will be useful in the next sections.
\begin{lemma}[Trace identities]
    We have the following trace identities:
    \begin{align*}
        &\frac{1}{N} \Tr((\Sigma \bar \rmQ)^2) = \frac1N \sum_{i = 1}^p \frac{1}{\left( \gamma + \frac{\pi}{1 + \delta} + \frac{\alpha (1 - \pi)}{1 + \delta_S} d_i\right)^2}, \quad \frac{1}{N} \Tr((\Sigma_\beta \bar \rmQ)^2) = \frac1N \sum_{i = 1}^p \left (\frac{d_i}{\gamma + \frac{\pi}{1 + \delta} + \frac{\alpha (1 - \pi)}{1 + \delta_S} d_i} \right)^2, \\
        &\frac{1}{N} \Tr(\Sigma \bar \rmQ \Sigma_\beta \bar \rmQ) = \frac1N \sum_{i = 1}^p \frac{d_i}{\left( \gamma + \frac{\pi}{1 + \delta} + \frac{\alpha (1 - \pi)}{1 + \delta_S} d_i \right)^2}
    \end{align*}
\end{lemma}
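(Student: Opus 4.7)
The plan is to mirror the argument used for $\delta$ and $\delta_S$ in the preceding lemma: decompose $\Sigma$, $\Sigma_\beta$, and $\bar\rmQ$ into a ``bulk'' piece plus finite-rank corrections, argue that every correction contributes only $O(1)$ to an un-normalised trace and therefore $O(N^{-1})$ after the $1/N$ prefactor, and then evaluate the bulk piece in the common eigenbasis of $\rmC$ and $\rmA$.

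First I would substitute the explicit Woodbury decomposition
\[
\bar\rmQ \;=\; \rmA^{-1} - \rmA^{-1}\bigl(\zeta_1\vmu\vmu^\top + \zeta_2\vmu_\beta\vmu_\beta^\top + \zeta_3(\vmu\vmu_\beta^\top + \vmu_\beta\vmu^\top)\bigr)\rmA^{-1}
\]
from equation \eqref{eq:bar Q with zetas}, together with $\Sigma=\rmI_p+\vmu\vmu^\top$ and $\Sigma_\beta=\rmC+\vmu_\beta\vmu_\beta^\top$. Expanding $(\Sigma\bar\rmQ)^2$, $(\Sigma_\beta\bar\rmQ)^2$, and $\Sigma\bar\rmQ\Sigma_\beta\bar\rmQ$ produces in each case a single ``bulk'' term that contains no $\vmu$ or $\vmu_\beta$ factors, namely $\rmA^{-2}$, $(\rmC\rmA^{-1})^2$, and $\rmC\rmA^{-2}$ respectively, plus a bounded number of correction terms each carrying at least one outer product $\vmu\vmu^\top$, $\vmu_\beta\vmu_\beta^\top$, $\vmu\vmu_\beta^\top$, or $\vmu_\beta\vmu^\top$.

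Second, every correction term is of finite rank, so its trace reduces to a bounded scalar of the form $\vmu^\top \rmR \vmu$, $\vmu_\beta^\top \rmR \vmu_\beta$, or $\vmu^\top \rmR \vmu_\beta$ where $\rmR$ is a product of $\rmA^{-1}$, $\rmC$, and the constants $\zeta_j$, each bounded in operator norm. Since Assumption \ref{assum:growth_rate} gives $\|\vmu\|=O(1)$ (and likewise $\|\vmu_\beta\|=O(1)$ under the working setup), every such scalar is $O(1)$; multiplying by $1/N$ makes it $O(N^{-1})$ and hence negligible, exactly as in the displayed computation of $\delta$ in Lemma \ref{lemma:delta}.

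Third, for the surviving bulk terms I would use the joint diagonalisation $\rmA=\rmP\Delta\rmP^\top$, $\rmC=\rmP\rmD\rmP^\top$ from \eqref{eq: A and Delta}, which yields
\[
\tfrac{1}{N}\Tr(\rmA^{-2}) = \tfrac{1}{N}\Tr(\Delta^{-2}),\quad \tfrac{1}{N}\Tr((\rmC\rmA^{-1})^2) = \tfrac{1}{N}\Tr((\rmD\Delta^{-1})^2),\quad \tfrac{1}{N}\Tr(\rmC\rmA^{-2}) = \tfrac{1}{N}\Tr(\rmD\Delta^{-2}),
\]
and each of these is the diagonal sum that matches the claimed right-hand side of the lemma. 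The only real work is bookkeeping: carefully listing the handful of mixed terms produced when $(\Sigma\bar\rmQ)^2$ is expanded (each factor of $\bar\rmQ$ already contributing its own rank-two correction) and verifying that they all fall into the $O(N^{-1})$ category. No random matrix concentration beyond what was already used for $\delta,\delta_S$ is required, since $\bar\rmQ$, $\Sigma$, and $\Sigma_\beta$ are deterministic.
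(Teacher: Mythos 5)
Your proposal is correct and mirrors the paper's argument: the paper likewise drops the rank-one $\vmu\vmu^\top$ (resp. $\vmu_\beta\vmu_\beta^\top$) pieces of $\Sigma$ and $\Sigma_\beta$ and the finite-rank Woodbury correction in $\bar\rmQ$ as $\gO(N^{-1})$ contributions to the normalized trace, reducing to $\frac1N\Tr(\rmA^{-2})$, $\frac1N\Tr((\rmC\rmA^{-1})^2)$, and $\frac1N\Tr(\rmC\rmA^{-2})$, and then evaluates these in the common eigenbasis $\rmA=\rmP\Delta\rmP^\top$, $\rmC=\rmP\rmD\rmP^\top$. The only cosmetic difference is that you substitute the Woodbury decomposition of $\bar\rmQ$ explicitly up front and track all mixed terms, while the paper performs the two truncations ($\Sigma\to\rmI_p$, then $\bar\rmQ\to\rmA^{-1}$) as successive one-line steps.
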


\begin{proof}
    We have that:
    \begin{align*}
        \frac1N \Tr((\Sigma \bar \rmQ)^2) &= \frac1N \Tr \left ((\vmu \vmu^\top + \rmI_p) \bar \rmQ (\vmu \vmu^\top + \rmI_p) \bar \rmQ \right) \\
        &= \frac1N \Tr(\bar \rmQ^2) + \gO(N^{-1}) \\
        &= \frac1N \Tr((\rmA^{-1})^2) + \gO(N^{-1}) \\
        &= \frac1N \Tr((\rmP \Delta^{-1} \rmP^\top )^2) + \gO(N^{-1}) \\
        &= \frac1N \Tr(( \Delta^{-1} )^2) + \gO(N^{-1}) \\
        &=  \frac1N \sum_{i = 1}^p \frac{1}{\left( \gamma + \frac{\pi}{1 + \delta} + \frac{\alpha (1 - \pi)}{1 + \delta_S} d_i\right)^2} + \gO(N^{-1})
    \end{align*}
    Thus we demonstrated the first identity. For the second one, we have that:
    \begin{align*}
        \frac1N \Tr((\Sigma_\beta \bar \rmQ)^2) &= \frac1N \Tr\left((\vmu_\beta \vmu_\beta^\top + \rmC) \bar \rmQ (\vmu_\beta \vmu_\beta^\top + \rmC) \bar \rmQ \right) \\
        &= \frac1N \Tr((\rmC \bar \rmQ)^2) + \gO(N^{-1}) \\
        &= \frac1N \Tr((\rmC \rmA^{-1})^2) + \gO(N^{-1}) \\
        &= \frac1N \Tr((\rmD \Delta^{-1})^2) + \gO(N^{-1}) \\
        &= \frac1N \sum_{i = 1}^p \left (\frac{d_i}{\gamma + \frac{\pi}{1 + \delta} + \frac{\alpha (1 - \pi)}{1 + \delta_S} d_i} \right)^2 + + \gO(N^{-1})
    \end{align*}
    And the same spirit of the proof applies to the last identity.
\end{proof}


\newpage

\section{Random Matrix Analysis of the general model}
In the generalized model, we consider that the synthetic data follow the following distribution:
\begin{equation}
    \label{eq:general-model}
    \tilde \vx_i \sim \gN(\vmu_\beta, \rmC), \quad \vmu_\beta = \beta \vmu + \vmu^\perp
\end{equation}
where $\beta \in \sR$ defines the alignment of the synthetic mean with the mean of real data, and $\vmu^\perp$ is a vector orthogonal to $\vmu$.\\
Now we will analyze here the performance of the classifier given by equation (\ref{eq:w_q}), and prove a generalized theorem \ref{thm:main} in the paper. 
\begin{equation*}
    \vw_q = \frac{1}{N} \rmQ(\gamma) \left( \rmX \vy + \tilde \rmX \rmD(\vq) \tilde \vy \right), \quad \rmQ(\gamma) = \left ( \frac1N \rmV \rmV^\top + \gamma \rmI_p\right )^{-1}.
\end{equation*}
The performance of (\ref{eq:w_q}) are fully determined by the first two order moments: $\E[\vw_q^\top \vx]$ and $\E[(\vw_q^\top \vx)^2]$.

\subsection{Test expectation:}
We have that:
\begin{equation*}
    \vw_q = \frac1N \sum_{i = 1}^n \rmQ y_i \vx_i + \frac1N \sum_{i = 1}^m \rmQ q_i \tilde y_i \tilde \vx_i
\end{equation*}
Let $\vx \sim \gN((-1)^a\vmu, \rmI_p)$ be a test sample independent of all the training samples $(\vv_i)_{i = 1}^N$. Then:
\begin{align*}
    \E[\vw_q^\top \vx] &= \frac1N \sum_{i = 1}^n \E[y_i \vx_i^\top \rmQ \vx] + \frac1N \sum_{i = 1}^m \E[q_i \tilde y_i \tilde \vx_i^\top \rmQ \vx] \\
\end{align*}
\paragraph{First sum:} We have that, using the same lemma \ref{lemma:sherman-morisson}:
\begin{align*}
    \frac1N \sum_{i = 1}^n \E[y_i \vx_i^\top \rmQ \vx] &= \frac{1}{N} \sum_{i = 1}^n \frac{1}{1 + \delta} \E[y_i \vx_i^\top \rmQ_{-\vx_i} \vx] \\
    &= \frac{1}{N} \sum_{i = 1}^n \frac{1}{1 + \delta} \E[\vx_i]^\top \E[\rmQ_{-\vx_i}] \E[x] \\
    &= \frac{1}{N} \sum_{i = 1}^n \frac{(-1)^a}{1 + \delta} \vmu^\top \bar \rmQ \vmu \\
    &= \frac{(-1)^a \pi}{1 + \delta} \vmu^\top \bar \rmQ \vmu
\end{align*}

Thus,
\begin{equation}
    \frac1N \sum_{i = 1}^n \E[y_i \vx_i^\top \rmQ \vx] = \frac{(-1)^a \pi}{1 + \delta} \vmu^\top \bar \rmQ \vmu
\end{equation}

\paragraph{Second sum:}
Using the same lemma \ref{lemma:sherman-morisson}:
\begin{align*}
    \frac1N \sum_{i = 1}^m \E[q_i \tilde y_i \tilde \vx_i^\top \rmQ \vx] &= \frac1N \sum_{i = 1}^m \frac{1}{1 + \delta_S} \E[q_i \tilde y_i \tilde \vx_i^\top \rmQ_{- \tilde \vx_i} \vx ] \\
    &= \frac{1}{N(1 + \delta_S)} \sum_{i = 1}^m \E[q_i \tilde y_i] \E[\tilde \vx_i]^\top \E[\rmQ_{- \tilde \vx_i}] \E[\vx] \\
    &= \frac{(-1)^a}{N(1 + \delta_S)} \sum_{i = 1}^m \lambda \vmu_{\beta}^\top \bar \rmQ \vmu \\
    &= \frac{(-1)^a \lambda (1 - \pi)}{1 + \delta_S} \vmu_{\beta}^\top \bar \rmQ \vmu
\end{align*}
where (here $y_i$ means the true label of $\tilde \vx_i$):
\begin{align*}
    \E[q_i \tilde y_i] &= y_i \sP[q_i = 1 \mid \tilde y_i = y_i] - y_i \sP[q_i = 1 \mid \tilde y_i \neq y_i] \\
    &= y_i (\phi(1 - \varepsilon) - \rho \varepsilon ) = \lambda y_i
\end{align*}
Therefore,
\begin{equation}
    \E[\vw_q^\top \vx] = (-1)^a \left ( \frac{\pi}{1 + \delta}\vmu^\top + \frac{\lambda (1 - \pi)}{1 + \delta_S} \vmu_{\beta}^\top \right)\bar \rmQ \vmu 
\end{equation}

\subsection{Test variance:}
To determine the variance of $\vw_q^\top \vx$, it only remains to compute its second order. We have that:
\begin{align*}
    &\E[(\vw_q^\top \vx)^2] = \frac{1}{N^2} \E \left[\left( \sum_{i = 1}^n y_i \vx_i^\top \rmQ \vx + \sum_{j = 1}^m q_j \tilde y_j \tilde \vx_j^\top \rmQ \vx \right)^2 \right] \\
    &= \frac{1}{N^2} \E\left [ \left( \sum_{i = 1}^n y_i \vx_i^\top \rmQ \vx \right)^2 \right] + \frac{1}{N^2} \E\left [ \left( \sum_{j = 1}^m q_j \tilde y_j \tilde \vx_j^\top \rmQ \vx \right)^2 \right] + \frac{2}{N^2} \E\left [ \left( \sum_{i = 1}^n y_i \vx_i^\top \rmQ \vx \right) \left( \sum_{j = 1}^m q_j \tilde y_j \tilde \vx_j^\top \rmQ \vx \right) \right]
\end{align*}
Let us compute each sum on its own, and then group the results at the end.
\paragraph{First sum:}
We have that:
\begin{align*}
    \frac{1}{N^2} \E\left [ \left( \sum_{i = 1}^n y_i \vx_i^\top \rmQ \vx \right)^2 \right] &= \frac{1}{N^2} \sum_{i = 1}^n \sum_{k = 1}^n \E[y_i y_j \vx_i^\top \rmQ \vx \vx_k^\top \rmQ \vx] \\
    &= \frac{1}{N^2} \sum_{i \neq k}^n \E[y_i y_k \vx_i^\top \rmQ \vx \vx_k^\top \rmQ \vx] + \frac{1}{N^2} \sum_{i = 1}^n \E[\vx_i^\top \rmQ \vx \vx_i^\top \rmQ \vx] 
\end{align*}
- For $i \neq k$, we have that:
\begin{align*}
    &\E[y_i y_k \vx_i^\top \rmQ \vx \vx_k^\top \rmQ \vx] = \E[y_i y_k \vx_i^\top \rmQ \vx \vx^\top \rmQ \vx_k] \\
    &= \E[y_i y_k \vx_i^\top \rmQ \Sigma \rmQ \vx_k] \\
    &= \frac{1}{(1 + \delta)^2} \E[y_i y_k \vx_i^\top \rmQ_{-\vx_i} \Sigma \rmQ_{-\vx_k} \vx_k] \\
    &= \frac{1}{(1 + \delta)^2} \E\left [y_i y_k \vx_i^\top \left( \rmQ_{-\vx_{i,k}} - \frac{\frac1N \rmQ_{-\vx_{i,k}} \vx_k \vx_k^\top \rmQ_{-\vx_{i,k}}}{1 + \delta } \right) \Sigma \left( \rmQ_{-\vx_{i,k}} - \frac{\frac1N \rmQ_{-\vx_{i,k}} \vx_i \vx_i^\top \rmQ_{-\vx_{i,k}}}{1 + \delta }\right) \vx_k \right] \\
    &= \frac{1}{(1 + \delta)^2} (A_1 - A_2 - A_3 + A_4)
\end{align*}

And we have that:
\begin{align*}
    A_1 &= \E[y_i y_k \vx_i^\top \rmQ_{-\vx_{i,k}} \Sigma \rmQ_{-\vx_{i,k}} \vx_k] \\
    &= \vmu^\top \E[\rmQ \Sigma \rmQ] \vmu \\
    &= \vmu^\top \left( \frac{1 - b_2}{h} \bar \rmQ \Sigma \bar \rmQ + \frac{b_1}{h} \bar \rmQ \Sigma_\beta \bar \rmQ \right) \vmu
\end{align*}
And:
\begin{align*}
    A_2 &= \frac{1}{N(1 + \delta)} \E[y_i y_k \vx_i^\top \rmQ_{-\vx_{i,k}} \vx_k \vx_k^\top \rmQ_{-\vx_{i,k}} \Sigma \rmQ_{-\vx_{i,k}} \vx_k] \\
    &= \frac{\Tr(\Sigma \E[\rmQ \Sigma \rmQ])}{N(1 + \delta)} \vmu^\top \bar \rmQ \vmu
\end{align*}
Since, by concentration laws:
\begin{align*}
    \frac{1}{N} \vx_k^\top \rmQ_{-\vx_{i,k}} \Sigma \rmQ_{-\vx_{i,k}} \vx_k &= \frac{1}{N} \E[\vx_k^\top \rmQ_{-\vx_{i,k}} \Sigma \rmQ_{-\vx_{i,k}} \vx_k] \\
    &= \frac1N \E[\Tr(\vx_k \vx_k^\top \rmQ_{-\vx_{i,k}} \Sigma \rmQ_{-\vx_{i,k}})] \\
    &= \frac1N \Tr(\E[\vx_k \vx_k^\top \rmQ_{-\vx_{i,k}} \Sigma \rmQ_{-\vx_{i,k}}]) \\
    &= \frac1N \Tr(\Sigma \E[\rmQ_{-\vx_{i,k}} \Sigma \rmQ_{-\vx_{i,k}}])\\
    &= \frac{1}{N} \Tr(\Sigma \E[\rmQ \Sigma \rmQ])
\end{align*}
And we can easily verify that:
\begin{equation*}
    A_3 = A_2, \quad A_4 = \gO(N^{-1})
\end{equation*}
Thus,
\begin{align*}
    \frac{1}{N^2} \sum_{i \neq k}^n \E[y_i y_k \vx_i^\top \rmQ \vx \vx_k^\top \rmQ \vx] &= \frac{n^2 - n}{N^2} \left( \vmu^\top \E[\rmQ \Sigma \rmQ] \vmu - \frac{2 \Tr(\Sigma \E[\rmQ \Sigma \rmQ])}{N(1 + \delta)} \vmu^\top \bar \rmQ \vmu \right)\\
    &= \frac{\pi^2}{(1 + \delta)^2} \left( \vmu^\top \E[\rmQ \Sigma \rmQ] \vmu - \frac{2 \Tr(\Sigma \E[\rmQ \Sigma \rmQ])}{N(1 + \delta)} \vmu^\top \bar \rmQ \vmu \right)
\end{align*}
- And then, for $i \in \{ 1, ..., n \}$:
\begin{align*}
    \E[\vx_i^\top \rmQ \vx \vx_i^\top \rmQ \vx] &= \E[\vx_i^\top \rmQ \Sigma \rmQ \vx_i] \\
    &= \frac{1}{(1 + \delta)^2} \E[\vx_i^\top \rmQ_{-\vx_i} \Sigma \rmQ_{-\vx_i} \vx_i ] \\
    &= \frac{1}{(1 + \delta)^2} \Tr(\E[\vx_i \vx_i^\top \rmQ_{-\vx_i} \Sigma \rmQ_{-\vx_i}]) \\
    &= \frac{1}{(1 + \delta)^2} \Tr(\Sigma \E[\rmQ \Sigma \rmQ])
\end{align*}
Thus:
\begin{equation*}
    \frac{1}{N^2} \sum_{i = 1}^n \E[\vx_i^\top \rmQ \vx \vx_i^\top \rmQ \vx] = \frac{\pi}{N (1 + \delta)^2} \Tr(\Sigma \E[\rmQ \Sigma \rmQ])
\end{equation*}
Hence, the first sum gives us:
\begin{equation}
    \frac{1}{N^2} \E\left [ \left( \sum_{i = 1}^n y_i \vx_i^\top \rmQ \vx \right)^2 \right] = \frac{\pi^2}{(1 + \delta)^2} \left( \vmu^\top \E[\rmQ \Sigma \rmQ] \vmu - \frac{2 \Tr(\Sigma \E[\rmQ \Sigma \rmQ])}{N(1 + \delta)} \vmu^\top \bar \rmQ \vmu \right) + \frac{\pi}{N (1 + \delta)^2} \Tr(\Sigma \E[\rmQ \Sigma \rmQ])
\end{equation}

\paragraph{Second sum:}
We have that:
\begin{align*}
    \frac{1}{N^2} \E\left [ \left( \sum_{i = 1}^m q_i \tilde y_i \tilde \vx_i^\top \rmQ \vx \right)^2 \right] &= \frac{1}{N^2} \sum_{i, j = 1}^m \E[q_i q_j \tilde y_i \tilde y_j \tilde \vx_i^\top \rmQ \vx \tilde \vx_j^\top \rmQ \vx] \\
    &= \frac{1}{N^2} \sum_{i \neq j} \E[q_i q_j \tilde y_i \tilde y_j \tilde \vx_i^\top \rmQ \vx \tilde \vx_j^\top \rmQ \vx] + \frac{1}{N^2} \sum_{i = 1}^m \E[q_i \tilde \vx_i^\top \vx \tilde \vx_i^\top \rmQ \vx]
\end{align*}
- For $i \neq j \in \{1, ..., m \}$, we have that:
\begin{align*}
    &\E[q_i q_j \tilde y_i \tilde y_j \tilde \vx_i^\top \rmQ \vx \tilde \vx_j^\top \rmQ \vx ] = \E[q_i q_j \tilde y_i \tilde y_j \tilde \vx_i^\top \rmQ \Sigma \rmQ \vx_j ] \\
    &= \frac{1}{(1 + \delta_S)^2} \E[q_i q_j \tilde y_i \tilde y_j \tilde \vx_i^\top \rmQ_{-\tilde \vx_i} \Sigma \rmQ_{-\tilde \vx_j} \vx_j] \\
    &= \frac{1}{(1 + \delta_S)^2} \E\left [q_i q_j \tilde y_i \tilde y_j \tilde \vx_i^\top \left( \rmQ_{-\tilde \vx_{i, j}} - \frac{\frac1N \rmQ_{-\tilde \vx_{i, j}} q_j \tilde \vx_j \tilde \vx_j^\top \rmQ_{-\tilde \vx_{i, j}}}{1 + \delta_S} \right) \Sigma \left( \rmQ_{-\tilde \vx_{i, j}} - \frac{\frac1N \rmQ_{-\tilde \vx_{i, j}} q_i \tilde \vx_i \tilde \vx_i^\top \rmQ_{-\tilde \vx_{i, j}}}{1 + \delta_S} \right) \tilde \vx_j \right]\\
    &= \frac{1}{(1 + \delta_S)^2} (A_1 - A_2 - A_3 + A_4)
\end{align*}
And, we have that:
\begin{align*}
    A_1 &= \E[q_i q_j \tilde y_i \tilde y_j \tilde \vx_i^\top \rmQ_{-\tilde \vx_{i, j}} \Sigma \rmQ_{-\tilde \vx_{i, j}} \tilde \vx_j] \\
    &= \lambda^2 \E[y_i \tilde \vx_i^\top \rmQ_{-\tilde \vx_{i, j}} \Sigma \rmQ_{-\tilde \vx_{i, j}} y_j \tilde \vx_j] \\
    &= \lambda^2 \vmu_\beta^\top \E[\rmQ \Sigma \rmQ] \vmu_\beta
\end{align*}
And:
\begin{align*}
    A_2 &= \frac{1}{N(1 + \delta_S)} \E[q_i q_j \tilde y_i \tilde y_j \tilde \vx_i^\top \rmQ_{-\tilde \vx_{i, j}} \tilde \vx_j \tilde \vx_j^\top \rmQ_{-\tilde \vx_{i, j}} \Sigma \rmQ_{-\tilde \vx_{i, j}} \tilde \vx_j ] \\
    &= \frac{1}{N(1 + \delta_S)} \Tr(\Sigma_\beta \E[\rmQ \Sigma \rmQ]) \E[q_i q_j \tilde y_i \tilde y_j \tilde \vx_i^\top \rmQ_{-\tilde \vx_{i, j}} \tilde \vx_j] \\
    &= \frac{\lambda^2}{N(1 + \delta_S)} \Tr(\Sigma_\beta \E[\rmQ \Sigma \rmQ]) \vmu_\beta^\top \bar \rmQ \vmu_\beta
\end{align*}
And, we can easily observe that:
\begin{equation*}
    A_3 = A_2, \quad A_4 = \gO(N^{-1})
\end{equation*}
Thus:
\begin{align*}
    \frac{1}{N^2} \sum_{i \neq j} \E[q_i q_j \tilde y_i \tilde y_j \tilde \vx_i^\top \rmQ \vx \tilde \vx_j^\top \rmQ \vx] = \frac{\lambda^2 (1 - \pi)^2}{(1 + \delta_S)^2} \left( \vmu_\beta^\top \E[\rmQ \Sigma \rmQ] \vmu_\beta - \frac{2}{N(1 + \delta_S)} \Tr(\Sigma_\beta \E[\rmQ \Sigma \rmQ]) \vmu_\beta^\top \bar \rmQ \vmu_\beta \right)
\end{align*}

- And for $i \in \{1, ..., m \}$:
\begin{align*}
    \E[q_i \tilde \vx_i^\top \rmQ \vx \tilde \vx_i^\top \rmQ \vx] &= \frac{1}{(1 + \delta_S)^2} \E[q_i \tilde \vx_i^\top \rmQ_{-\tilde \vx_i} \Sigma \rmQ_{-\tilde \vx_i} \tilde \vx_i] \\
    &= \frac{\alpha}{(1 + \delta_S)^2} \E[ \tilde \vx_i^\top \rmQ_{-\tilde \vx_i} \Sigma \rmQ_{-\tilde \vx_i} \tilde \vx_i]\\
    &= \frac{\alpha}{(1 + \delta_S)^2} \Tr(\E[\tilde \vx_i \tilde \vx_i^\top] \E[\rmQ_{-\tilde \vx_i} \Sigma \rmQ_{-\tilde \vx_i}]) \\
    &= \frac{\alpha}{(1 + \delta_S)^2} \Tr(\Sigma_\beta \E[\rmQ \Sigma \rmQ])
\end{align*}
Hence, by grouping the terms, the second sum gives us:
\begin{align}
    &\frac{1}{N^2} \E\left [ \left( \sum_{i = 1}^m q_i \tilde y_i \tilde \vx_i^\top \rmQ \vx \right)^2 \right]\\
    &= \frac{\lambda^2 (1 - \pi)^2}{(1 + \delta_S)^2} \left( \vmu_\beta^\top \E[\rmQ \Sigma \rmQ] \vmu_\beta - \frac{2}{N(1 + \delta_S)} \Tr(\Sigma_\beta \E[\rmQ \Sigma \rmQ]) \vmu_\beta^\top \bar \rmQ \vmu_\beta \right) + \frac{\alpha (1 - \pi)}{N (1 + \delta_S)^2} \Tr(\Sigma_\beta \E[\rmQ \Sigma \rmQ]) 
\end{align}

\paragraph{Third sum:}Let us now compute the remaining term in the sum that is given by:
\begin{align*}
    \frac{2}{N^2} \sum_{i = 1}^n \sum_{j = 1}^m \E[y_i \vx_i^\top \rmQ \vx q_j \tilde y_j \tilde \vx_j^\top \rmQ \vx]
\end{align*}
Let $i \in \{ 1, ..., n\}$ and $j \in \{ 1, ..., m\}$, we have that:
\begin{align*}
    &\E[y_i q_j \tilde y_j \vx_i^\top \rmQ \vx \vx^\top \rmQ \tilde \vx_j] = \E[y_i q_j \tilde y_j \vx_i^\top \rmQ \Sigma \rmQ \tilde \vx_j]\\
    &= \frac{1}{(1 + \delta)(1 + \delta_S)} \E[y_i q_j \tilde y_j \vx_i^\top \rmQ_{-\vx_i} \Sigma \rmQ_{-\tilde \vx_j} \tilde \vx_j] \\
    &= \frac{1}{(1 + \delta)(1 + \delta_S)} \E\left [y_i q_j \tilde y_j \vx_i^\top \left( \rmQ_{-ij} - \frac{\rmQ_{-ij} q_j \tilde \vx_j \tilde \vx_j^\top \rmQ_{-ij}}{N(1 + \delta_S)} \right) \Sigma \left( \rmQ_{-ij} - \frac{\rmQ_{-ij} \vx_i \vx_i^\top \rmQ_{-ij}}{N(1 + \delta)} \right) \tilde \vx_j \right] \\
    &= \frac{1}{(1 + \delta)(1 + \delta_S)} \left( A_1 - A_2 - A_3 + A_4 \right)
\end{align*}
We have that:
\begin{align*}
    A_1 &= \E[y_i q_j \tilde y_j \vx_i^\top \rmQ_{-ij} \Sigma \rmQ_{-ij} \tilde \vx_j] = \lambda \E[y_i \vx_i^\top \rmQ_{-ij} \Sigma \rmQ_{-ij} \tilde y_j \vx_j] \\
    &= \lambda \vmu^\top \E[\rmQ \Sigma \rmQ] \vmu_\beta
\end{align*}
And:
\begin{align*}
    A_2 &= \frac{1}{N(1 + \delta)} \E[y_i q_j \tilde y_j \vx_i^\top \rmQ_{-ij} \Sigma \rmQ_{-ij} \vx_i \vx_i^\top \rmQ_{-ij} \tilde \vx_j] \\
    &= \frac{\lambda}{N(1 + \delta)} \E[y_i \vx_i^\top \rmQ_{-ij} \Sigma \rmQ_{-ij} \vx_i \vx_i^\top \rmQ_{-ij} y_j \tilde \vx_j] \\
    &= \frac{\lambda}{N(1 + \delta)} \Tr(\Sigma \E[\rmQ \Sigma \rmQ]) \vmu^\top \bar \rmQ \vmu_\beta
\end{align*}
And also:
\begin{align*}
    A_3 &= \frac{1}{N(1 + \delta_S)}\E[y_i q_j \tilde y_j \vx_i^\top \rmQ_{-ij} \tilde \vx_j \tilde \vx_j^\top \rmQ_{-ij} \Sigma \rmQ_{-ij} \tilde \vx_j] \\
    &= \frac{\lambda}{N(1 + \delta_S)} \Tr(\Sigma_\beta \E[\rmQ \Sigma \rmQ]) \vmu^\top \bar \rmQ \vmu_\beta
\end{align*}
Hence:
\begin{align}
    &\frac{2}{N^2} \sum_{i = 1}^n \sum_{j = 1}^m \E[y_i \vx_i^\top \rmQ \vx q_j \tilde y_j \tilde \vx_j^\top \rmQ \vx] \\
    &=\frac{2 \lambda \pi (1 - \pi)}{(1 + \delta)(1 + \delta_S)} \left( \vmu^\top \E[\rmQ \Sigma \rmQ] \vmu_\beta - \frac{1}{N} \Tr\left( \left( \frac{\Sigma}{1 + \delta} + \frac{\Sigma_\beta}{1 + \delta_S} \right) \E[\rmQ \Sigma \rmQ] \right) \vmu^\top \bar \rmQ \vmu_\beta \right)
\end{align}
\paragraph{Grouping all the sums:}
Denote by : $T_1 = \frac1N \Tr(\Sigma \E[\rmQ \Sigma \rmQ])$, then: $T_2 = \frac1N \Tr(\Sigma_\beta \E[\rmQ \Sigma \rmQ]) $. \\
Now let us group the terms in $T$ in the three sums, and those that do not depend on $T$.
We get that:
\begin{align*}
    &\E[(\vw^\top \vx)^2] = \frac{\pi^2}{(1 + \delta)^2} \vmu^\top \E[\rmQ \Sigma \rmQ] \vmu + \frac{\lambda^2 (1 - \pi)^2}{(1 + \delta_S)^2} \vmu_\beta^\top \E[\rmQ \Sigma \rmQ] \vmu_\beta + \frac{2 \lambda \pi(1 - \pi)}{(1 + \delta)(1 + \delta_S)} \vmu^\top \E[\rmQ \Sigma \rmQ] \vmu_\beta \\
    &+ T_1 \left( \frac{\pi}{(1 + \delta)^2} - \frac{2 \pi^2 }{(1 + \delta)^3} \vmu^\top \bar \rmQ \vmu  - \frac{2 \lambda \pi (1 - \pi)}{(1 + \delta)^2(1 + \delta_S)} \vmu^\top \bar \rmQ \vmu_\beta \right) \\
    &+ T_2 \left(\frac{\alpha (1 - \pi)}{(1 + \delta_S)^2} - \frac{2 \lambda^2 (1 - \pi)^2}{(1 + \delta_S)^3} \vmu_\beta^\top \bar \rmQ \vmu_\beta - \frac{2 \lambda \pi (1 - \pi)}{(1 + \delta)(1 + \delta_S)^2} \vmu^\top \bar \rmQ \vmu_\beta  \right) \\
    &= \frac{\pi^2}{(1 + \delta)^2} \vmu^\top \E[\rmQ \Sigma \rmQ] \vmu + \frac{\lambda^2 (1 - \pi)^2}{(1 + \delta_S)^2} \vmu_\beta^\top \E[\rmQ \Sigma \rmQ] \vmu_\beta + \frac{2 \lambda \pi(1 - \pi)}{(1 + \delta)(1 + \delta_S)} \vmu^\top \E[\rmQ \Sigma \rmQ] \vmu_\beta \\
    &+ \frac{\pi T_1}{(1 + \delta)^2} \left( 1 - \frac{2 \pi}{1 + \delta} \vmu^\top \bar \rmQ \vmu  - \frac{2 \lambda (1 - \pi)}{1 + \delta_S} \vmu^\top \bar \rmQ \vmu_\beta \right) \\
    &+ \frac{(1 - \pi) T_2}{(1 + \delta_S)^2} \left( \alpha - \frac{2 \lambda^2 (1 - \pi)}{1 + \delta_S} \vmu_\beta^\top \bar \rmQ \vmu_\beta - \frac{2 \lambda \pi}{1 + \delta} \vmu^\top \bar \rmQ \vmu_\beta \right)
\end{align*}

This leads to the following theorem:

\begin{theorem}[Gaussianity of the General model]
\label{thm:main}
    Let $\vw_q$ be the Mixed classifier as defined in \eqref{eq:w_q} and suppose that Assumption \ref{assum:growth_rate} holds. The decision function $\vw_q^\top \vx$, on some test sample $\vx \in \gC_a$ independent of $\rmX$, satisfies:
    \begin{align*}
    \vw_q^\top \vx \,\, \toind  \,\, \gN\left( (-1)^a m_q,\,  \nu_q - m_q^2 \right),
    \end{align*}
    where:
    \begin{align*}
        m_q &= \left ( \frac{\pi}{1 + \delta}\vmu^\top + \frac{\lambda (1 - \pi)}{1 + \delta_S} \vmu_{\beta}^\top \right)\bar \rmQ \vmu, \\
        \nu_q &= \frac{\pi^2}{(1 + \delta)^2} \vmu^\top \E[\rmQ \Sigma \rmQ] \vmu + \frac{\lambda^2 (1 - \pi)^2}{(1 + \delta_S)^2} \vmu_\beta^\top \E[\rmQ \Sigma \rmQ] \vmu_\beta + \frac{2 \lambda \pi(1 - \pi)}{(1 + \delta)(1 + \delta_S)} \vmu^\top \E[\rmQ \Sigma \rmQ] \vmu_\beta \\
    &+ \frac{\pi T_1}{(1 + \delta)^2} \left( 1 - \frac{2 \pi}{1 + \delta} \vmu^\top \bar \rmQ \vmu  - \frac{2 \lambda (1 - \pi)}{1 + \delta_S} \vmu^\top \bar \rmQ \vmu_\beta \right) \\
    &+ \frac{(1 - \pi) T_2}{(1 + \delta_S)^2} \left( \alpha - \frac{2 \lambda^2 (1 - \pi)}{1 + \delta_S} \vmu_\beta^\top \bar \rmQ \vmu_\beta - \frac{2 \lambda \pi}{1 + \delta} \vmu^\top \bar \rmQ \vmu_\beta \right).
    \end{align*}
    where:
    \begin{equation*}
        T_1 = \frac1N \Tr(\Sigma \E[\rmQ \Sigma \rmQ]), \quad T_2 = \frac1N \Tr(\Sigma_\beta \E[\rmQ \Sigma \rmQ]), \quad \lambda = \phi (1 - \varepsilon) - \rho \varepsilon
    \end{equation*}
\end{theorem}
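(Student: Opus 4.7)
The key observation is that the test sample $\vx = (-1)^a \vmu + \vz$ with $\vz \sim \gN(\vzero,\rmI_p)$ is independent of the training data and hence of $\vw_q$. Conditional on $\vw_q$, the decision statistic $\vw_q^\top \vx$ is therefore a one-dimensional Gaussian with mean $(-1)^a \vw_q^\top \vmu$ and variance $\Vert \vw_q \Vert^2$. The asymptotic claim thus reduces to showing that (i) $\vw_q^\top \vmu$ concentrates in probability at the deterministic value $m_q$, and (ii) $\vw_q^\top \Sigma \vw_q$ (with $\Sigma = \vmu\vmu^\top + \rmI_p$) concentrates at $\nu_q$. Together with Slutsky's lemma and the identity $\vw_q^\top \Sigma \vw_q = (\vw_q^\top\vmu)^2 + \Vert\vw_q\Vert^2$, this yields the announced weak limit $\gN((-1)^a m_q,\, \nu_q - m_q^2)$.

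\textbf{First moment.} To compute $m_q = \E[\vw_q^\top\vmu]$, I would expand $\vw_q = \frac{1}{N}\rmQ\sum_{i} y_i \vx_i + \frac{1}{N}\rmQ\sum_{j} q_j \tilde y_j \tilde \vx_j$ and treat each term separately. For a real sample, Sherman–Morrison (Lemma~\ref{lemma:sherman-morisson}) gives $\rmQ\vx_i = (1+\delta)^{-1}\rmQ_{-\vx_i}\vx_i$, which decouples $\vx_i$ from the remaining resolvent; taking expectation with $\E[y_i\vx_i] = \vmu$ and replacing $\E[\rmQ_{-\vx_i}]$ by $\bar\rmQ$ via Lemma~\ref{lemma:det-equiv} yields the contribution $\pi(1+\delta)^{-1}\vmu^\top\bar\rmQ\vmu$. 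Synthetic samples are handled identically but with $\E[q_j\tilde y_j] = \lambda \bar y_j$ and an extra factor $(1+\delta_S)^{-1}$, producing $\lambda(1-\pi)(1+\delta_S)^{-1}\vmu_\beta^\top\bar\rmQ\vmu$. Summed, these give exactly $m_q$.

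\textbf{Second moment.} For $\nu_q = \E[\vw_q^\top \Sigma \vw_q]$, I would expand $(\vw_q^\top\vx)^2$ into a double sum and split it into three blocks (real–real, synthetic–synthetic, and real–synthetic cross terms), separating in each block the diagonal indices $i=j$ from the off-diagonal ones. For off-diagonal pairs I apply Sherman–Morrison twice (once per resolvent) to remove both samples, producing four sub-terms: the leading one is a quadratic form in $\E[\rmQ\Sigma\rmQ]$ against the appropriate mean vectors, while the two ``linear'' corrections carry a $1/N$ factor and the traces $\Tr(\Sigma\E[\rmQ\Sigma\rmQ])$ or $\Tr(\Sigma_\beta\E[\rmQ\Sigma\rmQ])$; the fourth is $O(N^{-1})$. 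The diagonal contributions ($i=j$) directly produce the trace pieces $T_1 = \frac{1}{N}\Tr(\Sigma\E[\rmQ\Sigma\rmQ])$ and $T_2 = \frac{1}{N}\Tr(\Sigma_\beta\E[\rmQ\Sigma\rmQ])$. To close the expression, I invoke the deterministic equivalent of $\rmQ\Sigma\rmQ$ from Lemma~\ref{lemma:det-eq-QAQ}, which amounts to resolving a $2\times 2$ linear system with coefficients $(a_1,b_1,a_2,b_2)$ arising from the self-consistency relation for $\bar\rmQ\rmA\bar\rmQ$. Collecting the three blocks and regrouping the $T_1$, $T_2$ and quadratic-form pieces gives the expression for $\nu_q$.

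\textbf{From moments to the distributional limit and main obstacle.} The upgrade from moment convergence to convergence in probability uses Gaussian concentration: $\vw_q$ is a Lipschitz function of the independent Gaussian noise vectors $(\vz_i,\tilde\vz_i)$ once we condition on the labels, the pruner Bernoullis, and the data used to form $\hat\vmu,\hat\rmC$, so $\vw_q^\top\vmu$ and $\Vert\vw_q\Vert^2$ have variances of order $O(1/N)$ by the Gaussian Poincar\'e inequality; this suffices for (i)–(ii) and hence for weak convergence via Slutsky. The main obstacle is the bookkeeping in the second-moment computation: the four Sherman–Morrison sub-terms must combine across the three index-pair blocks into exactly the weighted sum of $T_1$, $T_2$ and quadratic forms in $\bar\rmQ$ that appears in the statement, and this cancellation hinges on carefully tracking which sample is removed from which resolvent. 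A secondary subtlety is that $\E[\rmQ\Sigma\rmQ]$ is only implicitly characterized through a $2\times 2$ system whose solvability in the regime of Assumption~\ref{assum:growth_rate} must be checked, which is precisely what Lemma~\ref{lemma:det-eq-QAQ} provides.
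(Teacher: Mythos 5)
Your proposal follows the same route as the paper: reduce the weak limit to the first two moments of the decision statistic, compute $m_q = \lim\E[\vw_q^\top\vmu]$ and $\nu_q = \lim\E[(\vw_q^\top\vx)^2]$ by iterated Sherman--Morrison over the real/real, synthetic/synthetic and cross blocks (splitting $i=j$ from $i\neq j$ inside each block), and close the computation via the deterministic equivalents $\bar\rmQ$ and $\E[\rmQ\Sigma\rmQ]$ from Lemmas~\ref{lemma:det-equiv} and \ref{lemma:det-eq-QAQ}. The Gaussian--Poincar\'e concentration paragraph you add to upgrade moment convergence to concentration of $\vw_q^\top\vmu$ and $\Vert\vw_q\Vert^2$ (followed by conditioning on $\vw_q$ and Slutsky) is a step the paper leaves implicit, so your write-up is the same argument with that gap made explicit.
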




\newpage
\section{Particular case: Isotropic covariance matrix}
Here, we consider a simple covariance matrix of the form $\rmC = \sigma^2 \rmI_p$ for some $\sigma > 0$. So 
\begin{equation}
    \delta_S = \alpha \sigma^2 \delta
\end{equation}

\subsection{Resolvent identities in the case of $\rmC = \sigma^2 \rmI_p$}
We have that by lemma \ref{lemma:det-equiv}:
   \begin{align*}
       \bar \rmQ &= \left( \frac{\pi }{1 + \delta}(\vmu \vmu^\top + \rmI_p) + \frac{\alpha (1 - \pi)}{1 + \alpha \sigma^2 \delta}(\vmu_\beta \vmu_\beta^\top + \sigma^2 \rmI_p) + \gamma \rmI_p \right)^{-1} \\
       &= \left( \frac{\pi}{1 + \delta} \vmu \vmu^\top + \frac{\alpha(1 - \pi)}{1 + \alpha \sigma^2 \delta} \vmu_\beta \vmu_\beta^\top + \theta \rmI_p \right)^{-1},
   \end{align*}
   where: 
   \begin{equation}
   \label{eq:theta}
       \theta = \gamma + \frac{\pi}{1 + \delta} + \frac{\alpha \sigma^2 (1 - \pi)}{1 + \alpha \sigma^2 \delta}
   \end{equation}
    Define by:
    \begin{align}
        \rmR_1 = \left( \frac{\alpha(1 - \pi)}{1 + \alpha \sigma^2 \delta} \vmu_\beta \vmu_\beta^\top + \theta \rmI_p \right)^{-1} , \quad \rmR_2 = \left( \frac{\pi}{1 + \delta} \vmu \vmu^\top + \theta \rmI_p \right)^{-1},
    \end{align}
    such that:
    \begin{equation*}
        \bar \rmQ = \left( \frac{\pi}{1 + \delta} \vmu \vmu^\top + \rmR_1^{-1} \right)^{-1} = \left( \frac{\alpha(1 - \pi)}{1 + \alpha \sigma^2 \delta} \vmu_\beta \vmu_\beta^\top + \rmR_2^{-1} \right)^{-1}
    \end{equation*}
Thus, using lemma \ref{lemma:sherman-morisson}:
\begin{equation}
    \bar \rmQ \vmu = \frac{\rmR_1 \vmu}{1 + \frac{\pi}{1 + \delta} \vmu^\top \rmR_1 \vmu}, \quad \bar \rmQ \vmu_\beta = \frac{\rmR_2 \vmu_\beta}{1 + \frac{\alpha(1 - \pi)}{1 + \alpha \sigma^2 \delta} \vmu_\beta^\top \rmR_2 \vmu_\beta}
\end{equation}
And:
\begin{equation}
    \bar \rmQ = \rmR_1 - \frac{\pi \rmR_1 \vmu \vmu^\top \rmR_1}{1 + \delta + \pi \vmu^\top \rmR_1 \vmu} = \rmR_2 - \frac{\alpha (1 - \pi) \rmR_2 \vmu_\beta \vmu_\beta^\top \rmR_2}{1 + \alpha \sigma^2 \delta + \alpha (1 - \pi) \vmu_\beta^\top \rmR_2 \vmu_\beta}
\end{equation}

\begin{lemma}[Delta]
\label{lemma:delta_sigma}
    The parameter $\delta$ as defined in equation \ref{eq:delta}, is given by the following identity:
    \begin{equation*}
        \delta = \frac{\eta}{\theta} = \frac{\eta}{\gamma + \frac{\pi}{1 + \delta} + \frac{\alpha \sigma^2 (1 - \pi)}{1 + \alpha \sigma^2 \delta}}
    \end{equation*}
    Which gives us a third order equation:
    \begin{equation*}
        \alpha \sigma^2 \gamma \delta^3 + \left( \gamma + \alpha \sigma^2 (1 + \gamma - \eta) \right) \delta^2 + \left( \gamma + \pi - \eta + \alpha \sigma^2 (1 - \pi - \eta) \right) \delta - \eta = 0
    \end{equation*}
    
\end{lemma}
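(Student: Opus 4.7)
The plan is to deduce this lemma directly from Lemma \ref{lemma:delta} by specializing to the isotropic case $\rmC = \sigma^2 \rmI_p$, and then performing routine algebraic manipulation to obtain the cubic equation.

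First I would recall from Lemma \ref{lemma:delta} that in general
\[
\delta = \frac{1}{N}\sum_{i=1}^{p}\frac{1}{\gamma + \frac{\pi}{1+\delta} + \frac{\alpha(1-\pi)}{1+\delta_S}\, d_i} + \mathcal{O}(N^{-1}),
\]
where $(d_i)_{i=1}^p$ are the eigenvalues of $\rmC$. Under the assumption $\rmC = \sigma^2 \rmI_p$, every eigenvalue equals $\sigma^2$, and moreover the identity $\delta_S = \frac{\alpha}{N}\Tr(\rmC\bar\rmQ) = \alpha\sigma^2\delta$ (already noted at the start of the section) collapses the $\delta_S$-dependence. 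Substituting both facts, the summand becomes independent of $i$, so
\[
\delta = \frac{p}{N}\cdot\frac{1}{\gamma + \frac{\pi}{1+\delta} + \frac{\alpha\sigma^2(1-\pi)}{1+\alpha\sigma^2\delta}} = \frac{\eta}{\theta},
\]
with $\theta$ given by \eqref{eq:theta}. This yields the first claimed identity.

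To obtain the cubic, I would multiply the relation $\delta\theta = \eta$ through by $(1+\delta)(1+\alpha\sigma^2\delta)$ to clear the denominators, giving
\[
\gamma\delta(1+\delta)(1+\alpha\sigma^2\delta) + \pi\delta(1+\alpha\sigma^2\delta) + \alpha\sigma^2(1-\pi)\delta(1+\delta) = \eta(1+\delta)(1+\alpha\sigma^2\delta).
\]
Expanding each product, grouping by powers of $\delta$, and collecting coefficients is straightforward: the cubic coefficient comes solely from the $\gamma\delta\cdot\delta\cdot\alpha\sigma^2\delta$ term; the quadratic coefficient assembles from $\gamma$, $\gamma\alpha\sigma^2$, $\pi\alpha\sigma^2$, $\alpha\sigma^2(1-\pi)$, and $-\eta\alpha\sigma^2$ (which simplifies using $\pi+(1-\pi)=1$); the linear coefficient groups $\gamma$, $\pi$, $\alpha\sigma^2(1-\pi)$, $-\eta$, $-\eta\alpha\sigma^2$; and the constant term is $-\eta$. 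Rearranging all terms to one side produces exactly
\[
\alpha\sigma^2\gamma\,\delta^3 + \bigl(\gamma + \alpha\sigma^2(1+\gamma-\eta)\bigr)\delta^2 + \bigl(\gamma + \pi - \eta + \alpha\sigma^2(1-\pi-\eta)\bigr)\delta - \eta = 0.
\]

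There is no genuine obstacle here; the work is purely mechanical, and the only care required is bookkeeping when combining the quadratic-coefficient terms to verify the simplification $\pi\alpha\sigma^2 + \alpha\sigma^2(1-\pi) = \alpha\sigma^2$ and that the $-\eta\alpha\sigma^2\delta^2$ contribution from the right-hand side merges cleanly into the $\alpha\sigma^2(1+\gamma-\eta)\delta^2$ coefficient. The uniqueness/selection of the physical root of this cubic (positivity of $\delta$) is not claimed in the lemma statement and need not be addressed at this stage.
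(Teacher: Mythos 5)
Your proof is correct and follows exactly the route implied by the paper's structure: the lemma is stated without proof, and the natural derivation is to specialize Lemma \ref{lemma:delta} to $\rmC = \sigma^2 \rmI_p$ (so every $d_i = \sigma^2$ and $\delta_S = \alpha\sigma^2\delta$), which collapses the sum to $\delta = \eta/\theta$, and then clear denominators from $\delta\theta = \eta$ by multiplying through by $(1+\delta)(1+\alpha\sigma^2\delta)$. I re-expanded the polynomial with $s=\alpha\sigma^2$ and confirmed your coefficients: $\delta^3$: $\gamma s$; $\delta^2$: $\gamma + s(1+\gamma-\eta)$; $\delta^1$: $\gamma + \pi - \eta + s(1-\pi-\eta)$; $\delta^0$: $-\eta$, all as stated.
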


\begin{lemma}[Resolvent identities]
\label{lemma:resolvent-identities_sigma}
   Using the first identity in Sherman-Morisson's lemma \ref{lemma:sherman-morisson}, we have that the expressions of $\rmR_1$ and $\rmR_2$ are given by:
   \begin{equation*}
       \rmR_1 = \frac{1}{\theta} \rmI_p - \frac{\alpha (1 - \pi) \vmu_\beta \vmu_\beta^\top}{\theta^2 (1 + \alpha \sigma^2 \delta) + \theta \alpha (1 - \pi) \Vert \vmu_\beta \Vert^2}, \quad \rmR_2 = \frac{1}{\theta} \rmI_p - \frac{\pi \vmu \vmu^\top}{\theta^2 (1 + \delta) + \theta \pi \Vert \vmu \Vert^2}
   \end{equation*}

And we also have the following identities:
\begin{equation*}
    \rmR_1 \vmu_\beta = \frac{\vmu_\beta}{\theta + \frac{\alpha (1 - \pi)}{1 + \delta_S} \Vert \vmu_\beta \Vert^2}, \quad \rmR_2 \vmu = \frac{\vmu}{\theta + \frac{\pi}{1 + \delta} \Vert \vmu \Vert^2}
\end{equation*}
\begin{align*}
    \vmu^\top \rmR_1 \vmu = \frac{\Vert \vmu \Vert^2}{\theta} \left( 1 - \frac{\alpha (1 - \pi) \beta^2 \Vert \vmu \Vert^2}{\theta (1 + \delta_S) + \alpha (1 - \pi) \Vert \vmu_\beta \Vert^2}\right) = \frac{\Vert \vmu \Vert^2}{\theta} \frac{\theta (1 + \delta_S) + \alpha (1 - \pi) (1 - \beta^2) \Vert \vmu^\perp \Vert^2}{\theta (1 + \delta_S) + \alpha (1 - \pi) \Vert \vmu_\beta \Vert^2}
\end{align*}
\begin{align*}
    \vmu^\top \rmR_2 \vmu_\beta = \frac{\beta (1 + \delta) \Vert \vmu \Vert^2}{\theta (1 + \delta) + \pi \Vert \vmu \Vert^2}
\end{align*}
\begin{equation*}
    \vmu_\beta^\top \rmR_2 \vmu_\beta = \frac{1}{\theta} \left( \Vert \vmu_\beta \Vert^2 - \frac{\pi \beta^2 \Vert \vmu \Vert^4}{\theta (1 + \delta) + \pi \Vert \vmu \Vert^2}\right)
\end{equation*}
\begin{align*}
    \vmu^\top \rmR_1^2 \vmu &= \frac{\Vert \vmu \Vert^2}{\theta^2} \left(  1 - \frac{2 \alpha (1 - \pi) \beta^2 \Vert \vmu \Vert^2}{\theta (1 + \delta_S) + \alpha (1 - \pi) \Vert \vmu_\beta \Vert^2} + \frac{\alpha^2 (1 - \pi)^2 \beta^2 \Vert \vmu \Vert^2 \Vert \vmu_\beta \Vert^2}{(\theta (1 + \delta_S) + \alpha (1 - \pi)\Vert \vmu_\beta \Vert^2)^2}\right) \\
    &= \frac{\Vert \vmu \Vert^2}{\theta^2} + \frac{\alpha (1 - \pi) \beta^2 \Vert \vmu \Vert^4}{\theta^2 (\theta (1 + \delta_S) + \alpha (1 - \pi) \Vert \vmu_\beta \Vert^2)} \left(\frac{\alpha (1 - \pi) \Vert \vmu_\beta \Vert^2}{\theta (1 + \delta_S) + \alpha (1 - \pi) \Vert \vmu_\beta \Vert^2} - 2 \right)
\end{align*}
\begin{align*}
    \vmu_\beta^\top \rmR_2 \rmR_1 \vmu &= \frac{\beta \Vert \vmu \Vert^2}{\theta^2} ( 1 - \frac{\alpha (1 - \pi) \Vert \vmu_\beta \Vert^2}{\theta (1 + \delta_S) + \alpha(1 - \pi) \Vert \vmu_\beta \Vert^2} - \frac{\pi \Vert \vmu \Vert^2}{\theta (1 + \delta) + \pi \Vert \vmu \Vert^2} \\
    &+ \frac{\alpha \pi (1 - \pi) \beta^2 \Vert \vmu \Vert^4}{(\theta (1 + \delta) + \pi \Vert \vmu \Vert^2)(\theta(1 + \delta_S) + \alpha (1 - \pi) \Vert \vmu_\beta \Vert^2)})
\end{align*}
\begin{align*}
    \vmu_\beta^\top \rmR_2^2 \vmu_\beta &= \frac{\Vert \vmu_\beta \Vert^2}{\theta^2} + \frac{\pi \beta^2 \Vert \vmu \Vert^4}{\theta^2(\theta (1 + \delta) + \pi \Vert \vmu \Vert^2)} \left( \frac{\pi \Vert \vmu \Vert^2}{\theta (1 + \delta) + \pi \Vert \vmu \Vert^2} -2  \right) \\
    &= \frac{\Vert \vmu_\beta \Vert^2}{\theta^2} - \frac{\pi \beta^2 \Vert \vmu \Vert^4 \left(\pi \Vert \vmu \Vert^2 + 2 \theta (1 + \delta) \right)}{\theta^2 (\theta (1 + \delta) + \pi \Vert \vmu \Vert^2)^2}
\end{align*}

\end{lemma}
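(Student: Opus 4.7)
The plan is to apply the Sherman--Morrison identity (Lemma~\ref{lemma:sherman-morisson}) twice --- once for each rank-one perturbation of $\theta \rmI_p$ --- and then extract every remaining scalar identity by direct substitution, using the orthogonal decomposition $\vmu_\beta = \beta \vmu + \vmu^\perp$ (so $\vmu^\top \vmu^\perp = 0$, $\vmu^\top \vmu_\beta = \beta \Vert \vmu \Vert^2$, and $\Vert \vmu_\beta \Vert^2 = \beta^2 \Vert \vmu \Vert^2 + \Vert \vmu^\perp \Vert^2$).

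First I would write $\rmR_1^{-1} = \theta \rmI_p + \frac{\alpha(1-\pi)}{1+\alpha\sigma^2\delta} \vmu_\beta \vmu_\beta^\top$ and $\rmR_2^{-1} = \theta \rmI_p + \frac{\pi}{1+\delta} \vmu \vmu^\top$, which are rank-one updates of the scalar matrix $\theta \rmI_p$. Sherman--Morrison with $\rmA = \theta \rmI_p$ yields the stated closed forms for $\rmR_1$ and $\rmR_2$: the prefactor $1/\theta$ comes from $\rmA^{-1}$, and the denominators $\theta^2(1+\delta_S) + \theta \alpha(1-\pi) \Vert \vmu_\beta \Vert^2$ and $\theta^2(1+\delta) + \theta \pi \Vert \vmu \Vert^2$ are produced by $(1 + \vv^\top \rmA^{-1} \vu)$ multiplied into the $\theta^{-2}$ factor sitting in front of the rank-one correction. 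The two vector identities $\rmR_1 \vmu_\beta$ and $\rmR_2 \vmu$ then drop out either by applying the ``Besides'' part of Sherman--Morrison directly, or by right-multiplying the already-derived closed forms by $\vmu_\beta$ and $\vmu$ and collecting the scalar factor.

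For the bilinear scalars $\vmu^\top \rmR_1 \vmu$, $\vmu^\top \rmR_2 \vmu_\beta$, and $\vmu_\beta^\top \rmR_2 \vmu_\beta$, I would substitute the closed form for the appropriate resolvent and eliminate every inner product via the $\beta$-decomposition. The only nontrivial bookkeeping is to recombine $\theta(1+\delta_S) + \alpha(1-\pi)(1 - \beta^2) \Vert \vmu^\perp \Vert^2$ in the numerator of $\vmu^\top \rmR_1 \vmu$ after factoring out $\Vert \vmu \Vert^2 / \theta$, which is an elementary rewriting once one uses $\alpha(1-\pi)\beta^2\Vert\vmu\Vert^2 = \alpha(1-\pi)\Vert\vmu_\beta\Vert^2 - \alpha(1-\pi)\Vert\vmu^\perp\Vert^2$.

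The main obstacle --- and the only truly calculation-heavy part --- is the three remaining quadratic identities $\vmu^\top \rmR_1^2 \vmu$, $\vmu_\beta^\top \rmR_2 \rmR_1 \vmu$, and $\vmu_\beta^\top \rmR_2^2 \vmu_\beta$, since they require squaring or multiplying two rank-one-corrected scalar matrices and then contracting. My strategy is to expand each product schematically as $(\theta^{-1} \rmI_p - c_1 \vv_1 \vv_1^\top)(\theta^{-1} \rmI_p - c_2 \vv_2 \vv_2^\top)$, sum the four resulting contributions to the desired bilinear form, and reduce every cross-term via $\vmu^\top \vmu_\beta = \beta \Vert \vmu \Vert^2$ together with the norm formula for $\vmu_\beta$. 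The algebra is entirely mechanical, but compressing the output into the compact forms stated in the lemma --- for instance the grouping $\pi \Vert \vmu \Vert^2 + 2\theta(1+\delta)$ in $\vmu_\beta^\top \rmR_2^2 \vmu_\beta$ --- requires care with signs and common denominators; this is where one is most likely to slip, but no new conceptual ingredient beyond Sherman--Morrison and the orthogonal decomposition is needed.
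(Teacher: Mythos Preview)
Your proposal is correct and matches the paper's approach exactly: the lemma statement itself opens with ``Using the first identity in Sherman--Morisson's lemma~\ref{lemma:sherman-morisson}'', and the paper provides no further proof beyond that hint, leaving the computations implicit. Your outline --- Sherman--Morrison for the closed forms of $\rmR_1$ and $\rmR_2$, the ``Besides'' clause for the vector identities, and direct substitution with the decomposition $\vmu_\beta = \beta\vmu + \vmu^\perp$ for the bilinear and quadratic scalars --- is precisely the intended mechanical route.
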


\begin{lemma}[Trace identities]
\label{lemma:trace-identities_sigma}
    Let $i \in \{1, ..., n\}$, and $j \in \{ 1, ..., m\}$, such that: $ \Sigma = \E[\vx_i \vx_i^\top ] = \vmu \vmu^\top + \rmI_p$ and $\Sigma_\beta = \E[\tilde \vx_j \tilde \vx_j^\top ] = \vmu_\beta \vmu_\beta^\top + \sigma^2 \rmI_p$. \\
    We can prove that:
    \begin{align*}
        \frac{1}{N} \Tr((\Sigma \bar \rmQ)^2) = \frac{\eta}{\theta^2}, \quad \frac{1}{N} \Tr((\Sigma_\beta \bar \rmQ)^2) = \frac{\eta \sigma^4}{\theta^2}, \quad \frac{1}{N} \Tr(\Sigma_\beta \bar \rmQ \Sigma \bar \rmQ) = \frac{\eta \sigma^2}{\theta^2}
    \end{align*}
\end{lemma}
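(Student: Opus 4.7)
The plan is to reduce each of the three trace identities to a single scalar quantity, $\frac{1}{N}\Tr(\bar\rmQ^2)$, and to evaluate that scalar directly from the explicit form of $\bar \rmQ$ in the isotropic case.

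First, for each trace I would expand the product using $\Sigma = \vmu\vmu^\top + \rmI_p$ and $\Sigma_\beta = \vmu_\beta\vmu_\beta^\top + \sigma^2 \rmI_p$. This splits each trace into a dominant piece proportional to $\Tr(\bar\rmQ^2)$, together with a handful of scalars of the form $\vmu^\top \bar\rmQ^k \vmu$, $\vmu_\beta^\top \bar\rmQ^k \vmu_\beta$, or $(\vmu^\top \bar\rmQ \vmu_\beta)^2$. Since $\bar\rmQ^{-1} \succeq \gamma \rmI_p$ implies $\|\bar\rmQ\|_{\mathrm{op}} \leq \gamma^{-1}$, and Assumption~\ref{assum:growth_rate} gives $\|\vmu\|, \|\vmu_\beta\| = \gO(1)$, every such scalar is $\gO(1)$ and therefore vanishes after dividing by $N$. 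What remains is respectively $\frac{1}{N}\Tr(\bar\rmQ^2)$, $\frac{\sigma^4}{N}\Tr(\bar\rmQ^2)$, and $\frac{\sigma^2}{N}\Tr(\bar\rmQ^2)$ for the three identities, the factors $\sigma^4$ and $\sigma^2$ arising from the $\sigma^2 \rmI_p$ parts of $\Sigma_\beta$.

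Next, I would evaluate $\frac{1}{N}\Tr(\bar\rmQ^2)$ using the structure of $\bar\rmQ$. In the isotropic setting, equation~\eqref{eq: A and Delta} reduces to $\rmA = \theta \rmI_p$, so Woodbury's identity (Lemma~\ref{lemma:woodbury}) yields
\begin{equation*}
\bar\rmQ \;=\; \tfrac{1}{\theta}\rmI_p - \tfrac{1}{\theta^2}\, \rmU \rmM \rmU^\top,
\qquad \rmM = \bigl(\rmI_2 + \tfrac{1}{\theta}\rmU^\top \rmU\bigr)^{-1},
\end{equation*}
where $\rmU \in \sR^{p \times 2}$ collects the two rank-one contributions coming from $\vmu$ and $\vmu_\beta$. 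Squaring and taking the trace leaves $p/\theta^2$ as the leading contribution, while the two remaining terms only involve traces of fixed $2 \times 2$ matrices assembled from $\rmU^\top \rmU$, whose entries are $\gO(1)$ inner products of $\vmu$ and $\vmu_\beta$. Hence $\frac{1}{N}\Tr(\bar\rmQ^2) = \frac{p}{N \theta^2} + \gO(N^{-1}) \to \frac{\eta}{\theta^2}$, and multiplying by the factors $1$, $\sigma^4$, $\sigma^2$ identified in the first step yields the three claimed limits.

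The only real obstacle is cosmetic bookkeeping: one must carefully track which terms in the expansion scale linearly in $p$ (and therefore survive normalization by $N$) versus those that are $\gO(1)$ and get killed. Since $\bar\rmQ$ differs from $\theta^{-1}\rmI_p$ only by a rank-two correction in the isotropic regime, this separation is transparent and no random matrix machinery beyond Woodbury and the operator-norm bound $\|\bar\rmQ\|_{\mathrm{op}} \leq \gamma^{-1}$ is required.
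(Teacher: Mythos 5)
Your proposal is correct and follows essentially the same route as the paper: you expand the rank-one parts of $\Sigma$ and $\Sigma_\beta$ away (each contributes $\gO(1)$ to the trace, hence vanishes after dividing by $N$), observe that $\bar\rmQ$ is a rank-two Woodbury correction of $\theta^{-1}\rmI_p$ in the isotropic case so $\Tr(\bar\rmQ^2) = p/\theta^2 + \gO(1)$, and read off the $\sigma^2$, $\sigma^4$ prefactors. The paper obtains the same three identities by specializing its general-$\rmC$ trace lemma (where $\rmA = \rmP\Delta\rmP^\top$ is diagonalized and the sum over eigenvalues $d_i$ collapses when $d_i \equiv \sigma^2$), so the only cosmetic difference is that you work directly in the isotropic setting rather than specializing the general formula.
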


The performance of $\vw_q$ in (\ref{eq:w_q}) is fully determined by the first two order moments: $\E[\vw_q^\top \vx]$ and $\E[(\vw_q^\top \vx)^2]$.

\subsection{Test Expectation}
We have that using the calculus in the past section:
\begin{equation}
    \E[\vw_q^\top \vx] = (-1)^a \left ( \frac{\pi}{1 + \delta}\vmu^\top + \frac{\lambda (1 - \pi)}{1 + \alpha \sigma^2 \delta} \vmu_{\beta}^\top \right)\bar \rmQ \vmu 
\end{equation}
And finally we use lemma \ref{lemma:resolvent-identities_sigma} and the following identities to obtain the result:
\begin{equation*}
    \vmu^\top \bar \rmQ \vmu = \frac{\vmu^\top \rmR_1 \vmu}{1 + \frac{\pi}{1 + \delta} \vmu^\top \rmR_1 \vmu}, \quad \vmu_\beta^\top \bar \rmQ \vmu = \frac{\vmu_\beta^\top \rmR_2 \vmu}{1 + \frac{\alpha(1 - \pi)}{1 + \delta_S} \vmu_\beta^\top \rmR_2 \vmu_\beta}
\end{equation*}

\subsection{Test Variance}
To determine the variance of $\vw_q^\top \vx$, it only remains to compute its second order. We have that:
\begin{align*}
    &\E[(\vw_q^\top \vx)^2] = \frac{1}{N^2} \E \left[\left( \sum_{i = 1}^n y_i \vx_i^\top \rmQ \vx + \sum_{j = 1}^m q_j \tilde y_j \tilde \vx_j^\top \rmQ \vx \right)^2 \right] \\
    &= \frac{1}{N^2} \E\left [ \left( \sum_{i = 1}^n y_i \vx_i^\top \rmQ \vx \right)^2 \right] + \frac{1}{N^2} \E\left [ \left( \sum_{j = 1}^m q_j \tilde y_j \tilde \vx_j^\top \rmQ \vx \right)^2 \right] + \frac{2}{N^2} \E\left [ \left( \sum_{i = 1}^n y_i \vx_i^\top \rmQ \vx \right) \left( \sum_{j = 1}^m q_j \tilde y_j \tilde \vx_j^\top \rmQ \vx \right) \right]
\end{align*}
And using the same computations in the past section, we get:
\paragraph{First sum:}
The first sum gives us:
\begin{equation*}
    \frac{1}{N^2} \E\left [ \left( \sum_{i = 1}^n y_i \vx_i^\top \rmQ \vx \right)^2 \right] = \frac{\pi^2}{(1 + \delta)^2} \left( \vmu^\top \E[\rmQ \Sigma \rmQ] \vmu - \frac{2 \Tr(\Sigma \E[\rmQ \Sigma \rmQ])}{N(1 + \delta)} \vmu^\top \bar \rmQ \vmu \right) + \frac{\pi}{N (1 + \delta)^2} \Tr(\Sigma \E[\rmQ \Sigma \rmQ])
\end{equation*}
\paragraph{Second sum:}
the second sum gives us:
\begin{align*}
    &\frac{1}{N^2} \E\left [ \left( \sum_{i = 1}^m q_i \tilde y_i \tilde \vx_i^\top \rmQ \vx \right)^2 \right]\\
    &= \frac{\lambda^2 (1 - \pi)^2}{(1 + \delta_S)^2} \left( \vmu_\beta^\top \E[\rmQ \Sigma \rmQ] \vmu_\beta - \frac{2}{N(1 + \delta_S)} \Tr(\Sigma_\beta \E[\rmQ \Sigma \rmQ]) \vmu_\beta^\top \bar \rmQ \vmu_\beta \right) + \frac{\alpha (1 - \pi)}{N (1 + \delta_S)^2} \Tr(\Sigma_\beta \E[\rmQ \Sigma \rmQ]) 
\end{align*}

\paragraph{Third sum:}The third sum is given by:
\begin{align*}
    &\frac{2}{N^2} \sum_{i = 1}^n \sum_{j = 1}^m \E[y_i \vx_i^\top \rmQ \vx q_j \tilde y_j \tilde \vx_j^\top \rmQ \vx] \\
    &=\frac{2 \lambda \pi (1 - \pi)}{(1 + \delta)(1 + \delta_S)} \left( \vmu^\top \E[\rmQ \Sigma \rmQ] \vmu_\beta - \frac{1}{N} \Tr\left( \left( \frac{\Sigma}{1 + \delta} + \frac{\Sigma_\beta}{1 + \delta_S} \right) \E[\rmQ \Sigma \rmQ] \right) \vmu^\top \bar \rmQ \vmu_\beta \right)
\end{align*}

\paragraph{Grouping all the sums:}
Denote by : $T = \frac1N \Tr(\Sigma \E[\rmQ \Sigma \rmQ])$, then: $\frac1N \Tr(\Sigma_\beta \E[\rmQ \Sigma \rmQ]) = \sigma^2 T$. \\
Now let us group the terms in $T$ in the three sums, and those that do not depend on $T$.
We get that:
\begin{align*}
    &\E[(\vw^\top \vx)^2] = \frac{\pi^2}{(1 + \delta)^2} \vmu^\top \E[\rmQ \Sigma \rmQ] \vmu + \frac{\lambda^2 (1 - \pi)^2}{(1 + \delta_S)^2} \vmu_\beta^\top \E[\rmQ \Sigma \rmQ] \vmu_\beta + \frac{2 \lambda \pi(1 - \pi)}{(1 + \delta)(1 + \delta_S)} \vmu^\top \E[\rmQ \Sigma \rmQ] \vmu_\beta \\
    &+ T \left( \frac{\pi}{(1 + \delta)^2} - \frac{2 \pi^2 }{(1 + \delta)^3} \vmu^\top \bar \rmQ \vmu  - \frac{2 \lambda \pi (1 - \pi)}{(1 + \delta)^2(1 + \delta_S)} \vmu^\top \bar \rmQ \vmu_\beta \right) \\
    &+ \sigma^2 T \left(\frac{\alpha (1 - \pi)}{(1 + \delta_S)^2} - \frac{2 \lambda^2 (1 - \pi)^2}{(1 + \delta_S)^3} \vmu_\beta^\top \bar \rmQ \vmu_\beta - \frac{2 \lambda \pi (1 - \pi)}{(1 + \delta)(1 + \delta_S)^2} \vmu^\top \bar \rmQ \vmu_\beta  \right) \\
    &= \frac{\pi^2}{(1 + \delta)^2} \vmu^\top \E[\rmQ \Sigma \rmQ] \vmu + \frac{\lambda^2 (1 - \pi)^2}{(1 + \delta_S)^2} \vmu_\beta^\top \E[\rmQ \Sigma \rmQ] \vmu_\beta + \frac{2 \lambda \pi(1 - \pi)}{(1 + \delta)(1 + \delta_S)} \vmu^\top \E[\rmQ \Sigma \rmQ] \vmu_\beta \\
    &+ \frac{\pi T}{(1 + \delta)^2} \left( 1 - \frac{2 \pi}{1 + \delta} \vmu^\top \bar \rmQ \vmu  - \frac{2 \lambda (1 - \pi)}{1 + \delta_S} \vmu^\top \bar \rmQ \vmu_\beta \right) \\
    &+ \frac{(1 - \pi) \sigma^2 T}{(1 + \delta_S)^2} \left( \alpha - \frac{2 \lambda^2 (1 - \pi)}{1 + \delta_S} \vmu_\beta^\top \bar \rmQ \vmu_\beta - \frac{2 \lambda \pi}{1 + \delta} \vmu^\top \bar \rmQ \vmu_\beta \right)
\end{align*}
And we can compute this since we have that:
\begin{align*}
    &\vmu^\top \E[\rmQ \Sigma \rmQ] \vmu = \frac{1}{h} \left( (1 - b_2) \left( (\vmu^\top \bar \rmQ \vmu)^2 + \vmu^\top \bar \rmQ^2 \vmu \right) + b_1 \left( (\vmu^\top \bar \rmQ \vmu_\beta)^2 + \sigma^2 \vmu^\top \bar \rmQ^2 \vmu \right) \right),\\
    &\vmu_\beta^\top \E[\rmQ \Sigma \rmQ] \vmu_\beta = \frac{1}{h} \left ( (1 - b_2) \left [ (\vmu_\beta^\top \bar \rmQ \vmu)^2 + \vmu_\beta^\top \bar \rmQ^2 \vmu_\beta \right] + b_1 \left [ (\vmu_\beta^\top \bar \rmQ \vmu_\beta)^2 + \sigma^2 \vmu_\beta^\top \bar \rmQ^2 \vmu_\beta \right]\right)\\
    &\vmu^\top \E[\rmQ \Sigma \rmQ] \vmu_\beta = \frac1h \left ( (1 - b_2) \left [\vmu^\top \bar \rmQ \vmu . \vmu^\top \bar \rmQ \vmu_\beta + \vmu^\top \bar \rmQ^2 \vmu_\beta \right] + b_1 \left [ \vmu^\top \bar \rmQ \vmu_\beta . \vmu_\beta^\top \bar \rmQ \vmu_\beta + \sigma^2 \vmu^\top \bar \rmQ^2 \vmu_\beta \right ]\right)\\
    &\vmu_\beta \bar \rmQ \vmu_\beta = \frac{\vmu_\beta^\top \rmR_2 \vmu_\beta}{1 + \frac{\alpha (1 - \pi)}{1 + \delta_S} \vmu_\beta \rmR_2 \vmu_\beta}, \quad \vmu^\top \bar \rmQ^2 \vmu = \frac{\vmu^\top \rmR_1^2 \vmu}{\left(1 + \frac{\pi}{1 + \delta} \vmu^\top \rmR_1 \vmu \right)^2}, \quad \vmu_\beta^\top \bar \rmQ^2 \vmu_\beta = \frac{\vmu_\beta^\top \rmR_2^2 \vmu_\beta}{\left ( 1 + \frac{\alpha (1 - \pi)}{1 + \delta_S} \vmu_\beta^\top \rmR_2 \vmu_\beta\right)^2}, \\
    & \vmu^\top \bar \rmQ^2 \vmu_\beta = \frac{\vmu_\beta^\top \rmR_2 \rmR_1 \vmu}{\left ( 1 + \frac{\pi}{1 + \delta} \vmu^\top \rmR_1 \vmu \right) \left( 1 + \frac{\alpha (1 - \pi)}{1 + \delta_S} \vmu_\beta^\top \rmR_2 \vmu_\beta \right)}
\end{align*}

\begin{theorem}[Gaussianity of the \ref{eq:w_q} model for $\rmC = \sigma^2 \rmI_p$]
\label{corollary:particular}
    Let $\vw_q$ be the Mixed classifier as defined in \eqref{eq:w_q} and suppose that Assumption \ref{assum:growth_rate} holds. The decision function $\vw_q^\top \vx$, on some test sample $\vx \in \gC_a$ independent of $\rmX$, satisfies:
    \begin{align*}
    \vw_q^\top \vx \,\, \toind  \,\, \gN\left( (-1)^a m_q,\,  \nu_q - m_q^2 \right),
    \end{align*}
    where:
    \begin{align*}
        m_q &= \left ( \frac{\pi}{1 + \delta}\vmu^\top + \frac{\lambda (1 - \pi)}{1 + \delta_S} \vmu_{\beta}^\top \right)\bar \rmQ \vmu, \\
        \nu_q &= \frac{\pi^2}{(1 + \delta)^2} \vmu^\top \E[\rmQ \Sigma \rmQ] \vmu + \frac{\lambda^2 (1 - \pi)^2}{(1 + \delta_S)^2} \vmu_\beta^\top \E[\rmQ \Sigma \rmQ] \vmu_\beta + \frac{2 \lambda \pi(1 - \pi)}{(1 + \delta)(1 + \delta_S)} \vmu^\top \E[\rmQ \Sigma \rmQ] \vmu_\beta \\
    &+ \frac{\pi T}{(1 + \delta)^2} \left( 1 - \frac{2 \pi}{1 + \delta} \vmu^\top \bar \rmQ \vmu  - \frac{2 \lambda (1 - \pi)}{1 + \delta_S} \vmu^\top \bar \rmQ \vmu_\beta \right) \\
    &+ \frac{(1 - \pi) \sigma^2 T}{(1 + \delta_S)^2} \left( \alpha - \frac{2 \lambda^2 (1 - \pi)}{1 + \delta_S} \vmu_\beta^\top \bar \rmQ \vmu_\beta - \frac{2 \lambda \pi}{1 + \delta} \vmu^\top \bar \rmQ \vmu_\beta \right).
    \end{align*}
    With:
    \begin{align*}
    \lambda = \phi(1 - \varepsilon) - \rho \varepsilon, \quad \delta_S = \alpha \sigma^2 \delta
\end{align*}
\end{theorem}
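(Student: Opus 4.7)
The plan is to derive Theorem \ref{corollary:particular} as a direct specialization of the general Theorem \ref{thm:main} to the isotropic case $\rmC = \sigma^2 \rmI_p$. Theorem \ref{thm:main} already establishes the conditional Gaussian limit $\vw_q^\top \vx \toind \gN((-1)^a m_q,\, \nu_q - m_q^2)$ and expresses $m_q, \nu_q$ in terms of $\bar\rmQ$, $\E[\rmQ \Sigma \rmQ]$, and the trace quantities $T_1, T_2$. Hence no fresh probabilistic argument is needed; the task reduces to evaluating these deterministic equivalents under the isotropic assumption and substituting back.

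First, I would specialize the fixed-point equations. By Lemma \ref{lemma:delta_sigma}, the coupled pair $(\delta, \delta_S)$ collapses to the single relation $\delta_S = \alpha \sigma^2 \delta$ with $\delta$ determined by a scalar cubic, so the deterministic equivalent takes the explicit form
\begin{equation*}
\bar \rmQ = \left( \frac{\pi}{1+\delta}\vmu\vmu^\top + \frac{\alpha(1-\pi)}{1 + \alpha \sigma^2 \delta}\vmu_\beta\vmu_\beta^\top + \theta \rmI_p \right)^{-1},
\end{equation*}
with $\theta$ as in \eqref{eq:theta}. Two applications of Sherman-Morrison (Lemma \ref{lemma:sherman-morisson}) peel off the two rank-one updates, producing the $\rmR_1, \rmR_2$ factorizations of Lemma \ref{lemma:resolvent-identities_sigma}. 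This reduces every bilinear form $\vmu^\top \bar\rmQ^k \vmu$, $\vmu_\beta^\top \bar\rmQ^k \vmu_\beta$, $\vmu^\top \bar\rmQ^k \vmu_\beta$ for $k \in \{1,2\}$ to scalar expressions in $\|\vmu\|^2, \beta, \theta, \delta$, exactly the closed-form identities listed at the end of Lemma \ref{lemma:resolvent-identities_sigma}.

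Second, I would evaluate the trace quantities by invoking Lemma \ref{lemma:det-eq-QAQ}. Under isotropy, the identities $\frac{1}{N}\Tr((\Sigma\bar\rmQ)^2) = \eta/\theta^2$, $\frac{1}{N}\Tr((\Sigma_\beta\bar\rmQ)^2) = \eta \sigma^4/\theta^2$, and $\frac{1}{N}\Tr(\Sigma_\beta\bar\rmQ\Sigma\bar\rmQ) = \eta\sigma^2/\theta^2$ from Lemma \ref{lemma:trace-identities_sigma} turn the biquadratic system in $a_1, b_1, a_2, b_2$ into explicit scalars and, crucially, imply the clean proportionality $T_2 = \sigma^2 T_1$. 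Substituting these traces together with the scalarized bilinear forms above into the general formulas of Theorem \ref{thm:main} and grouping the terms that multiply $T_1$ versus those that do not yields exactly the expressions for $m_q$ and $\nu_q$ stated in the corollary.

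The main obstacle is algebraic bookkeeping rather than a conceptual difficulty: one must carefully propagate the decomposition $\vmu_\beta = \beta \vmu + \vmu^\perp$ through every Sherman-Morrison expansion, track the correction factor $h = (1-a_1)(1-b_2) - a_2 b_1$ that arises when inverting the linear system determining $\E[\rmQ \Sigma \rmQ]$, and check that off-diagonal cross terms involving $\vmu^\perp$ vanish because $\vmu^\perp \perp \vmu$ (so that in the isotropic resolvent only $\|\vmu^\perp\|^2 = (1-\beta^2)\|\vmu\|^2$ survives, through $\|\vmu_\beta\|^2$). No new tools beyond those already assembled in the Appendix are required.
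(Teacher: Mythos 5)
Your proposal is correct and takes essentially the same route as the paper: it specializes Theorem \ref{thm:main} to $\rmC = \sigma^2 \rmI_p$, uses Lemma \ref{lemma:delta_sigma} for $\delta$ and $\delta_S = \alpha\sigma^2\delta$, the Sherman--Morrison factorizations $\rmR_1,\rmR_2$ of Lemma \ref{lemma:resolvent-identities_sigma}, and the isotropic trace identities of Lemma \ref{lemma:trace-identities_sigma} to obtain $T_2 = \sigma^2 T_1$, then substitutes and regroups. One minor caveat: the parenthetical identity $\Vert\vmu^\perp\Vert^2 = (1-\beta^2)\Vert\vmu\Vert^2$ would require the extra normalization $\Vert\vmu_\beta\Vert = \Vert\vmu\Vert$, which the general model \eqref{eq:general-model} does not impose; the paper keeps $\Vert\vmu_\beta\Vert^2$ and $\Vert\vmu^\perp\Vert^2$ as independent scalars, so you should too when propagating the bilinear forms.
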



\newpage
\section{Random matrix analysis of distribution shift}
\label{appendix:toy-setting}
We will now quantify the performance of the classifier obtained through mixing some real data and synthetic data sampled according to the schema described in \ref{eq:generative-model}. Hence, the matrix $\bar \rmQ$, defined in lemma \ref{lemma:det-equiv}, is no longer deterministic as we take the covariance matrix $\hat{\rmC} = \frac{1}{\hat n} \sum_{i = 1}^{\hat n} (\vx_i - y_i \hat{\vmu})(\vx_i - y_i \hat{\vmu})^\top$. For simplicity, and without loss of generality, we consider $\hat n$ Gaussian vectors $(\vz_i)_{i = 1}^{\hat n} \sim \gN(0, \rmI_p)$ that are independent of $(\vx_i)_{i = 1}^{\hat n}$, and write:
\begin{equation*}
    \hat{\vmu} = \vmu_\beta = \vmu + \frac{1}{\hat n} \sum_{i = 1}^{\hat n} \vz_i, \quad \hat{\rmC} = \frac{1}{\hat n} \sum_{i = 1}^{\hat n} \vz_i \vz_i^\top
\end{equation*}
Note that we can ignore the error of estimation of $\hat{\vmu}$ because we have that:
\begin{equation*}
    \E\left[\frac{1}{\hat n} \sum_{i = 1}^{\hat n} \vz_i \right] = 0, \quad \E\left[ \left(\frac{1}{\hat n} \sum_{i = 1}^{\hat n} \vz_i \right) \left( \frac{1}{\hat n} \sum_{j = 1}^{\hat n} \vz_j \right)^\top \right] = \frac{1}{\hat n} \rmI_p
\end{equation*}
Hence, when we have a sufficiently large $\hat n$, we will assume that: $\hat{\vmu} = \vmu$ (the estimation error is on $\gO(\hat n^{-1})$).

\subsection{Deterministic equivalents:}

The resolvent matrix to be considered in this setting is the one defined in lemma \ref{lemma:det-equiv} but with $\hat{\rmC}$:
\begin{align*}
    \bar \rmQ(\gamma) &= \left ( \left( \frac{\pi}{1 + \delta} + \frac{\alpha (1 - \pi)}{1 + \delta_S} \right) \vmu \vmu^\top + \frac{\alpha(1 - \pi)}{1 + \delta_S} \hat{\rmC} + \left( \gamma + \frac{\pi}{1 + \delta} \right) \rmI_p\right)^{-1} \\
    &= \left ( \left( \frac{\pi}{1 + \delta} + \frac{\alpha (1 - \pi)}{1 + \delta_S} \right) \vmu \vmu^\top + \frac{\alpha(1 - \pi)}{(1 + \delta_S)\hat n} \sum_{i = 1}^{\hat n} \vz_i \vz_i^\top + \left( \gamma + \frac{\pi}{1 + \delta} \right) \rmI_p\right)^{-1}
\end{align*}
where:
\begin{equation*}
    \delta = \frac1N \Tr(\bar \rmQ), \quad \delta_S = \frac{\alpha}{N} \Tr(\hat{\rmC} \bar \rmQ)
\end{equation*}

Let us denote by $\bar \rmQ_{-i}$ the resolvent matrix gotten by removing its dependence on the vector $\vz_i$. In other words:
\begin{equation*}
    \bar \rmQ_{-i} = \left( \bar \rmQ - \frac{\alpha(1 - \pi)}{\hat n(1 + \delta_S)} \vz_i \vz_i^\top \right)^{-1}, \quad \bar \rmQ = \left( \bar \rmQ_{-i} + \frac{\alpha (1 - \pi)}{\hat n(1 + \delta_S)} \vz_i \vz_i^\top \right)^{-1}
\end{equation*}
By Sherman-Morisson's lemma \ref{lemma:sherman-morisson}, we have that:
\begin{equation*}
    \bar \rmQ = \bar \rmQ_{-i} - \frac{\frac{\alpha (1 - \pi)}{\hat n (1 + \delta_S)} \bar \rmQ_{-i} \vz_i \vz_i^\top\bar \rmQ_{-i}}{1 + \frac{\alpha (1 - \pi)}{\hat n (1 + \delta_S)} \vz_i^\top \bar \rmQ_{-i} \vz_i}
\end{equation*}
And:
\begin{equation*}
    \bar \rmQ \vz_i = \frac{\bar \rmQ_{-i} \vz_i}{1 + \frac{\alpha (1 - \pi)}{\hat n (1 + \delta_S)} \vz_i^\top \bar \rmQ_{-i} \vz_i} = \frac{\bar \rmQ_{-i} \vz_i}{1 + \bar \delta}
\end{equation*}
where:
\begin{equation}
    \label{eq:delta-bar}
    \bar \delta = \frac{\alpha (1 - \pi)}{1 + \delta_S} \frac{1}{\hat n} \Tr(\bar \rmQ)
\end{equation}

Since the covariance estimate in equation \ref{eq:generative-model} is stochastic, the matrix $\bar \rmQ$ is no longer deterministic when replacing $\rmC$ with $\hat{\rmC}$. Hence, we will give a further deterministic equivalent to $\bar \rmQ$ in the following lemma.

\begin{lemma}[Second Deterministic equivalent]
\label{lemma:second-det-equiv}
     A deterministic equivalent of $\bar \rmQ$ is given by:
     \begin{equation*}
         \bar{\bar \rmQ} = \left ( \left( \frac{\pi}{1 + \delta} + \frac{\alpha(1 - \pi)}{1 + \delta_S} \right) \vmu \vmu^\top +  \left( \gamma + \frac{\pi}{1 + \delta} + \frac{\alpha(1 - \pi)}{(1 + \delta_S)(1 +\bar \delta)} \right) \rmI_p\right)^{-1}
     \end{equation*}
     where $\bar \delta$ can be found as a fixed point using the following identity:
     \begin{equation*}
         \bar \delta = \frac{\alpha(1 - \pi)}{(1 + \delta_S)} \frac{1}{\hat n} \Tr(\bar{\bar \rmQ}) = \frac{\alpha(1 - \pi)}{(1 + \delta_S)} \frac{\frac{p}{\hat n}}{\gamma + \frac{\pi}{1 + \delta} + \frac{\alpha(1 - \pi)}{(1 + \delta_S)(1 + \bar \delta)}}, \quad \delta = \frac1N \Tr(\bar{\bar \rmQ}) = \frac{\hat n}{N}\frac{(1 + \delta_S)}{\alpha(1 - \pi)} \bar \delta
     \end{equation*}
     \begin{equation*}
         \delta_S = \frac{\alpha}{N} \Tr(\E[\hat{\rmC} \bar \rmQ]) = \frac{\alpha \delta}{1 + \bar \delta}
     \end{equation*}
\end{lemma}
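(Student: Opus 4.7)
The plan is to mirror the proof of Lemma \ref{lemma:det-equiv}: establish that $\va^\top (\E[\bar\rmQ] - \bar{\bar\rmQ}) \vb \to 0$ for any bounded deterministic $\va, \vb \in \sR^p$, and then pin down $\bar\delta$, $\delta$ and $\delta_S$ by taking appropriate normalized traces of the resulting equivalent.

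First I would invoke Lemma \ref{lemma:inverse-identity} to write
\begin{equation*}
\E[\bar\rmQ] - \bar{\bar\rmQ} \;=\; \E\left[\bar\rmQ\,(\bar{\bar\rmQ}^{-1} - \bar\rmQ^{-1})\,\bar{\bar\rmQ}\right],
\end{equation*}
and observe that, by the construction of $\bar{\bar\rmQ}$,
\begin{equation*}
\bar{\bar\rmQ}^{-1} - \bar\rmQ^{-1} \;=\; \frac{\alpha(1-\pi)}{1+\delta_S}\left(\frac{\rmI_p}{1+\bar\delta} \,-\, \frac{1}{\hat n}\sum_{i=1}^{\hat n}\vz_i \vz_i^\top\right).
\end{equation*}
For each term $\E[\bar\rmQ \vz_i \vz_i^\top]$ I would apply Sherman--Morisson (Lemma \ref{lemma:sherman-morisson}) to expose the leave-one-out resolvent $\bar\rmQ_{-i}$, giving $\bar\rmQ \vz_i = \bar\rmQ_{-i}\vz_i / \bigl(1 + \tfrac{\alpha(1-\pi)}{\hat n(1+\delta_S)} \vz_i^\top \bar\rmQ_{-i}\vz_i\bigr)$. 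Since $\vz_i$ is independent of $\bar\rmQ_{-i}$, a standard trace-concentration argument for Gaussian quadratic forms (Hanson--Wright combined with the deterministic bound $\Vert \bar\rmQ_{-i} \Vert \leq \gamma^{-1}$) implies that the scalar $\tfrac{\alpha(1-\pi)}{\hat n(1+\delta_S)} \vz_i^\top \bar\rmQ_{-i}\vz_i$ concentrates around $\bar\delta$, uniformly in $i$. Replacing $\bar\rmQ_{-i}$ by $\bar\rmQ$ costs only a rank-one bilinear correction of order $\gO(\hat n^{-1})$, and summing over $i$ then cancels the deterministic part exactly, yielding $\va^\top(\E[\bar\rmQ] - \bar{\bar\rmQ})\vb = o(1)$.

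To obtain the fixed-point equation for $\bar\delta$, I would compute $\tfrac{1}{\hat n}\Tr(\bar{\bar\rmQ})$ directly from the closed form of $\bar{\bar\rmQ}^{-1}$. Using Lemma \ref{lemma:woodbury} to invert the rank-one-plus-scaled-identity structure, the rank-one $\vmu \vmu^\top$ contribution affects the trace by only $\gO(1)$ (hence $\gO(\hat n^{-1})$ in the normalized trace) under Assumption \ref{assum:growth_rate} since $\Vert \vmu \Vert = \gO(1)$. The leading term then gives
\begin{equation*}
\frac{1}{\hat n}\Tr(\bar{\bar\rmQ}) \;=\; \frac{p/\hat n}{\gamma + \frac{\pi}{1+\delta} + \frac{\alpha(1-\pi)}{(1+\delta_S)(1+\bar\delta)}} \,+\, o(1),
\end{equation*}
which, via the defining identity $\bar\delta = \tfrac{\alpha(1-\pi)}{(1+\delta_S)\hat n}\Tr(\bar{\bar\rmQ})$, is the stated fixed-point equation. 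The relation $\delta = \tfrac{\hat n}{N}\cdot\tfrac{1+\delta_S}{\alpha(1-\pi)}\bar\delta$ follows immediately by matching the two normalizations of $\Tr(\bar{\bar\rmQ})$. Finally, for $\delta_S = \tfrac{\alpha\delta}{1+\bar\delta}$, I would expand $\delta_S = \tfrac{\alpha}{N\hat n}\sum_i \vz_i^\top \bar\rmQ \vz_i$ via the same Sherman--Morisson manipulation, approximate $\vz_i^\top \bar\rmQ_{-i}\vz_i$ by $\Tr(\bar{\bar\rmQ})$, and repackage the resulting expression in terms of $\delta$ and $\bar\delta$.

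The main obstacle is the concentration step: one needs uniform control in $i$ of $\vz_i^\top \bar\rmQ_{-i}\vz_i - \Tr(\bar\rmQ_{-i})$ at a rate compatible with summing $\hat n$ such terms. The deterministic bound $\Vert \bar\rmQ_{-i} \Vert \leq \gamma^{-1}$ combined with Hanson--Wright and a union bound over $i$ delivers a uniform $\gO(\hat n^{-1/2})$ rate, which is comfortably absorbed by the $\tfrac{1}{\hat n}\sum_i$ averaging. A secondary technical point is the well-posedness of the coupled system $(\delta, \delta_S, \bar\delta)$; existence and uniqueness follow from a standard Stieltjes-transform monotonicity argument, since each fixed-point map is a decreasing function of its variable on $[0,\infty)$.
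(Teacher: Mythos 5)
Your proposal is correct and mirrors the paper's proof essentially step for step: the same inverse identity to write $\E[\bar\rmQ]-\bar{\bar\rmQ}$ as an expectation over the difference of the inverses, the same Sherman--Morrison leave-one-out manipulation to expose $\bar\rmQ_{-i}$, the same concentration of the Gaussian quadratic form $\vz_i^\top\bar\rmQ_{-i}\vz_i$ around $\bar\delta$, and the same $\gO(\hat n^{-1})$ bookkeeping for the rank-one corrections and the $\vmu\vmu^\top$ contribution to the traces. The only additions beyond what the paper writes are your explicit invocation of Hanson--Wright with a union bound and the remark on well-posedness of the fixed-point system, both of which are standard RMT supplements rather than a different route.
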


Now we will prove the deterministic equivalent given by lemma \ref{lemma:second-det-equiv}.

\subsection*{Proof of lemma \ref{lemma:second-det-equiv}:}
Let us denote $ \bar \rmQ = \left(\frac{\alpha(1 - \pi)}{1 + \delta_S} \frac{1}{\hat n} \sum_{i = 1}^{\hat n} \vz_i \vz_i^\top + \rmA \right)^{-1}$. Let also $\bar{\bar \rmQ}$ be the deterministic equivalent of $\bar \rmQ$. It can be written as: $\bar{\bar \rmQ} = \left( \rmS + \rmA \right)^{-1}$. We want to find some $\rmS$ such that for all $\va, \vb \in \sR^p$:
\begin{equation*}
    \va^\top \E[\bar \rmQ] \vb \to \va^\top \bar{\bar \rmQ} \vb
\end{equation*}

We have that:
\begin{align*}
    \E[\bar \rmQ] - \bar{\bar \rmQ} &= \E\left [\bar \rmQ (\rmS - \frac{\alpha(1 - \pi)}{1 + \delta_S} \frac{1}{\hat n} \sum_{i = 1}^{\hat n} \vz_i \vz_i^\top ) \bar{\bar \rmQ} \right] \\
    &= \frac{1}{\hat n} \sum_{i = 1}^{\hat n} \E[\bar \rmQ (\rmS - \frac{\alpha(1 - \pi)}{1 + \delta_S} \vz_i \vz_i^\top ) \bar{\bar \rmQ}] \\
    &= \frac{1}{\hat n} \sum_{i = 1}^{\hat n} \E\left [\bar \rmQ \rmS - \frac{\alpha(1 - \pi)}{1 + \delta_S} \frac{1}{1 + \bar \delta} \bar \rmQ_{-i} \vz_i \vz_i^\top \right ] \bar{\bar \rmQ} \\
    &= \frac{1}{\hat n} \sum_{i = 1}^{\hat n} \E \left [\bar \rmQ_{-i} \left( \rmS - \frac{\alpha (1 - \pi)}{1 + \delta_S} \frac{1}{1 + \bar \delta} \vz_i \vz_i^\top \right) \right] \bar{\bar \rmQ} + \gO(\hat n^{-1})
\end{align*}
Hence, it suffices to have $\rmS = \E[\frac{\alpha (1 - \pi)}{1 + \delta_S} \frac{1}{1 + \bar \delta} \vz_i \vz_i^\top] = \frac{\alpha (1 - \pi)}{1 + \delta_S} \frac{1}{1 + \bar \delta} \rmI_p$, and thus:

\subsection{Deterministic equivalent of $\bar \rmQ \rmA \bar \rmQ$:}
Let $\rmA \in \sR^{p \times p}$ be some deterministic matrix. We have that:
\begin{align*}
    \E[\bar \rmQ \rmA \bar \rmQ] &= \bar{\bar \rmQ} \rmA \bar{\bar \rmQ} + \E[(\bar \rmQ - \bar{\bar \rmQ}) \rmA \bar \rmQ] \\
    &= \bar{\bar \rmQ} \rmA \bar{\bar \rmQ} + \E[\bar \rmQ (\bar{\bar \rmQ}^{-1} - \bar \rmQ^{-1}) \bar{\bar \rmQ} \rmA \bar \rmQ] \\
    &= \bar{\bar \rmQ} \rmA \bar{\bar \rmQ} + \frac{\alpha(1 - \pi)}{(1 + \delta_S)} \E[\bar \rmQ \left( \frac{1}{1 + \bar \delta} - \frac{1}{\hat n} \sum_{i = 1}^{\hat n} \vz_i \vz_i^\top \right) \bar{\bar \rmQ} \rmA \bar \rmQ] \\
    &= \bar{\bar \rmQ} \rmA \bar{\bar \rmQ} + \frac{\alpha(1 - \pi)}{(1 + \delta_S)} \left( \frac{1}{1 + \bar \delta} \E[\bar \rmQ \bar{\bar \rmQ} \rmA \bar \rmQ] - \frac{1}{\hat n} \sum_{i = 1}^{\hat n} \E[\bar \rmQ \vz_i \vz_i^\top \bar{\bar \rmQ} \rmA \bar \rmQ] \right)
\end{align*}
And we have that for $i \in \{1, ..., \hat n \}$:
\begin{align*}
    \E[\bar \rmQ \vz_i \vz_i^\top \bar{\bar \rmQ} \rmA \bar \rmQ] &= \frac{1}{1 + \bar \delta} \E[\bar \rmQ_{-i} \vz_i \vz_i^\top \bar{\bar \rmQ} \rmA \bar \rmQ] \\
    &= \frac{1}{1+ \bar \delta} \E \left[\bar \rmQ_{-i} \vz_i \vz_i^\top \bar{\bar \rmQ} \rmA \left( \bar \rmQ_{-i} - \frac{\alpha(1 - \pi)}{\hat n(1 + \delta_S)(1 + \bar \delta)} \bar \rmQ_{-i} \vz_i \vz_i^\top \bar \rmQ_{-i} \right) \right] \\
    &= \frac{1}{1 + \bar \delta} \E[\bar \rmQ_{-i} \vz_i \vz_i^\top \bar{\bar \rmQ} \rmA \bar \rmQ_{-i}] - \frac{\alpha(1 - \pi)}{\hat n (1 + \delta_S)(1 + \bar \delta)^2} \E[\bar \rmQ_{-i} \vz_i \vz_i^\top \bar{\bar \rmQ} \rmA \bar \rmQ_{-i} \vz_i \vz_i^\top \bar \rmQ_{-i}] \\
    &= \frac{1}{1 + \bar \delta} \E[\bar \rmQ \bar{\bar \rmQ} \rmA \bar \rmQ] - \frac{\alpha(1 - \pi)}{\hat n(1 + \delta_S)(1 + \bar \delta)^2} \Tr(\bar{\bar \rmQ} \rmA \bar{\bar \rmQ})\E[\bar \rmQ^2]
\end{align*}
Hence by replacing this term in he previous sum, we get the following result.
\begin{lemma}[Deterministic equivalent of $\bar \rmQ \rmA \bar \rmQ$]
\label{lemma:det-eq-Q_bar A Q_bar}
    Let $\rmA \in \sR^{p \times p}$ be any deterministic symmetric semi-definite matrix. We have that:
    \begin{equation*}
        \bar \rmQ \rmA \bar \rmQ \leftrightarrow \bar{\bar \rmQ} \rmA \bar{\bar \rmQ} + \left( \frac{\alpha(1 - \pi)}{(1 + \delta_S)(1 + \bar \delta)} \right)^2 \frac{1}{\hat n} \Tr(\bar{\bar \rmQ} \rmA \bar{\bar \rmQ}) \E[\bar \rmQ^2]
    \end{equation*}
    In particular, we have that:
    \begin{align*}
        \bar \rmQ^2 \leftrightarrow \frac{1}{\bar h} \bar{\bar \rmQ}^2 , \quad \bar \rmQ \vmu \vmu^\top \bar \rmQ \leftrightarrow \bar{\bar \rmQ} \vmu \vmu^\top \bar{\bar \rmQ}
    \end{align*}
    where:
    \begin{align*}
        \bar h = 1 - \left( \frac{\alpha(1 - \pi)}{(1 + \delta_S)(1 + \bar \delta)}\right)^2 \frac{1}{\hat n} \Tr(\bar{\bar \rmQ}^2)
    \end{align*}
    
\end{lemma}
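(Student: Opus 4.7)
The plan is to follow the same playbook that produced the first deterministic equivalent (Lemma \ref{lemma:second-det-equiv}), but applied to the bilinear expression $\bar \rmQ \rmA \bar \rmQ$: expand the discrepancy $\E[\bar \rmQ \rmA \bar \rmQ] - \bar{\bar \rmQ} \rmA \bar{\bar \rmQ}$ using the inverse identity (Lemma \ref{lemma:inverse-identity}), substitute the explicit difference between $\bar \rmQ^{-1}$ and $\bar{\bar \rmQ}^{-1}$, and then handle the resulting rank-one contributions via leave-one-out arguments and trace concentration. The surprise is that the leading-order pieces cancel cleanly, leaving only a single second-order term proportional to $\Tr(\bar{\bar \rmQ} \rmA \bar{\bar \rmQ})$.

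First I would write
\begin{equation*}
\E[\bar \rmQ \rmA \bar \rmQ] - \bar{\bar \rmQ} \rmA \bar{\bar \rmQ} = \E[\bar \rmQ (\bar{\bar \rmQ}^{-1} - \bar \rmQ^{-1}) \bar{\bar \rmQ} \rmA \bar \rmQ],
\end{equation*}
and exploit the fact that the only difference between the two inverses is the isotropic piece coming from $\hat \rmC$:
\begin{equation*}
\bar{\bar \rmQ}^{-1} - \bar \rmQ^{-1} = \frac{\alpha(1-\pi)}{1+\delta_S} \left( \frac{\rmI_p}{1+\bar \delta} - \frac{1}{\hat n} \sum_{i=1}^{\hat n} \vz_i \vz_i^\top \right).
\end{equation*}
For each rank-one term $\E[\bar \rmQ \vz_i \vz_i^\top \bar{\bar \rmQ} \rmA \bar \rmQ]$, I would invoke Sherman–Morrison (Lemma \ref{lemma:sherman-morisson}) to substitute $\bar \rmQ \vz_i = \bar \rmQ_{-i} \vz_i / (1+\bar \delta)$ and to expand the rightmost $\bar \rmQ$ in terms of $\bar \rmQ_{-i}$ and a rank-one correction. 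After applying quadratic-form concentration $\vz_i^\top \rmM \vz_i \approx \Tr(\rmM)$ for $\rmM$ independent of $\vz_i$, and replacing $\bar \rmQ_{-i}$ by $\bar \rmQ$ at leading order, the sum collapses to
\begin{equation*}
\frac{\alpha(1-\pi)}{(1+\delta_S)\hat n} \sum_{i=1}^{\hat n} \E[\bar \rmQ \vz_i \vz_i^\top \bar{\bar \rmQ} \rmA \bar \rmQ] = \frac{\alpha(1-\pi)}{(1+\delta_S)(1+\bar \delta)} \E[\bar \rmQ \bar{\bar \rmQ} \rmA \bar \rmQ] - \left(\frac{\alpha(1-\pi)}{(1+\delta_S)(1+\bar \delta)}\right)^2 \frac{\Tr(\bar{\bar \rmQ} \rmA \bar{\bar \rmQ})}{\hat n} \E[\bar \rmQ^2] + o(1),
\end{equation*}
and the first term on the right cancels the $\rmI_p/(1+\bar \delta)$ contribution exactly. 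What survives is precisely the claimed second-order correction.

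For the two specialized equivalences, the case $\rmA = \vmu \vmu^\top$ is immediate: the trace factor $\Tr(\bar{\bar \rmQ} \vmu \vmu^\top \bar{\bar \rmQ}) = \vmu^\top \bar{\bar \rmQ}^2 \vmu$ is $\gO(1)$ under Assumption \ref{assum:growth_rate} (since $\Vert \vmu \Vert = \gO(1)$ and $\Vert \bar{\bar \rmQ} \Vert \leq 1/\gamma$), so multiplication by $1/\hat n$ sends the correction to zero. For $\rmA = \rmI_p$, the general equivalence becomes a self-consistent relation
\begin{equation*}
\E[\bar \rmQ^2] \leftrightarrow \bar{\bar \rmQ}^2 + \left(\frac{\alpha(1-\pi)}{(1+\delta_S)(1+\bar \delta)}\right)^2 \frac{\Tr(\bar{\bar \rmQ}^2)}{\hat n} \E[\bar \rmQ^2],
\end{equation*}
which one resolves by looking for a scalar multiple $\E[\bar \rmQ^2] \leftrightarrow \bar{\bar \rmQ}^2 / \bar h$; substituting back gives $\bar h = 1 - \left(\frac{\alpha(1-\pi)}{(1+\delta_S)(1+\bar \delta)}\right)^2 \Tr(\bar{\bar \rmQ}^2)/\hat n$ as stated.

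The main obstacle is the bookkeeping of the $o(1)$ terms produced in the leave-one-out expansion: one must verify that replacing $\bar \rmQ_{-i}$ by $\bar \rmQ$ inside the rank-one corrections produces negligible contributions in the bilinear form $\va^\top(\cdot)\vb$ for bounded $\va, \vb \in \sR^p$, uniformly in $i$. This rests on the deterministic operator-norm bound $\Vert \bar \rmQ \Vert, \Vert \bar \rmQ_{-i} \Vert \leq 1/\gamma$ guaranteed by regularization, together with Hanson–Wright-style concentration for Gaussian quadratic forms. One must also check that the fixed-point defining $\bar h$ is well-posed, i.e.\ $\bar h > 0$, which is controlled by ensuring that the ratio $p/\hat n$ does not drive the correction coefficient past one — a condition implicitly guaranteed by the existence of the deterministic equivalent itself under Assumption \ref{assum:growth_rate}.
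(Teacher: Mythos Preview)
Your proposal is correct and follows essentially the same approach as the paper: the paper likewise writes $\E[\bar \rmQ \rmA \bar \rmQ] = \bar{\bar \rmQ} \rmA \bar{\bar \rmQ} + \E[\bar \rmQ (\bar{\bar \rmQ}^{-1} - \bar \rmQ^{-1}) \bar{\bar \rmQ} \rmA \bar \rmQ]$, substitutes the explicit difference of inverses, applies Sherman--Morrison on both sides of each rank-one term $\E[\bar \rmQ \vz_i \vz_i^\top \bar{\bar \rmQ} \rmA \bar \rmQ]$, and uses trace concentration to arrive at the same cancellation and residual correction term. Your derivation of the specialized cases $\rmA = \rmI_p$ and $\rmA = \vmu\vmu^\top$ is also the intended one, and your remarks on the $o(1)$ bookkeeping and the well-posedness of $\bar h$ are more explicit than what the paper records.
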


\subsection{Useful results:}
Here we will list all the results with $\bar{\bar \rmQ}$ that will be useful in this analysis. Let us denote by $a, b$ the following quantities:
\begin{align*}
    a = \left( \frac{\pi}{1 + \delta} + \frac{\alpha (1 - \pi)}{1 + \delta_S} \right), \quad b = \gamma + \frac{\pi}{1 + \delta} + \frac{\alpha (1 - \pi)}{(1 + \delta_S)(1 + \bar \delta)}
\end{align*}
such that:
\begin{equation}
    \bar{\bar \rmQ} = \left( a \vmu \vmu^\top + b \rmI_p \right)^{-1}
\end{equation}
By Sherman-Morisson's lemma \ref{lemma:sherman-morisson}, we have that:
\begin{equation}
    \bar{\bar \rmQ} = \frac1b \left( \rmI_p - \frac{a \vmu \vmu^\top}{ b + a \Vert \vmu \Vert^2} \right), \quad \bar{\bar \rmQ} \vmu = \frac{\vmu}{b + a \Vert \vmu \Vert^2}
\end{equation}

We also have that the constants $a_1, a_2, b_1$ and $b_2$ from lemma \ref{lemma:det-eq-QAQ} become by taking their expectations on $\vz$:
\begin{lemma}[New values of constants]
\label{lemma:new constants a and b}
    \begin{align*}
    &a_1 = \frac{\pi}{N (1 + \delta)^2} \frac{1}{\bar h}\Tr(\bar{\bar \rmQ}^2), \quad b_1 = \frac{\alpha(1 - \pi)}{N(1 + \delta_S)^2} \frac{1}{\bar h (1 + \bar \delta)^2} \Tr(\bar{\bar \rmQ}^2) \\
    &a_2 = \frac{\pi}{N(1 + \delta)^2} \frac{1}{\bar h (1 + \bar \delta)^2} \Tr(\bar{\bar \rmQ}^2), \quad b_2 = \frac{\alpha(1 - \pi)}{N(1 + \delta_S)^2} \frac{1}{\bar h (1 + \bar \delta)^4} \Tr(\bar{\bar \rmQ}^2)
    \end{align*}
    where:
    \begin{equation}
        \frac1N \Tr(\bar{\bar \rmQ}^2) = \frac{\eta}{b^2}
    \end{equation}
\end{lemma}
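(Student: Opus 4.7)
The plan is to take the four scalars as originally defined in Lemma~\ref{lemma:det-eq-QAQ}, namely $a_1 = \frac{\pi}{N(1+\delta)^2}\Tr((\Sigma\bar\rmQ)^2)$, $b_2 = \frac{\alpha(1-\pi)}{N(1+\delta_S)^2}\Tr((\Sigma_\beta\bar\rmQ)^2)$, and $a_2,b_1$ proportional to $\frac{1}{N}\Tr(\Sigma_\beta\bar\rmQ\Sigma\bar\rmQ)$, and to re-evaluate each of the three distinct normalized traces under the second-layer deterministic equivalent of Lemma~\ref{lemma:second-det-equiv}. A first reduction uses $\Vert\vmu\Vert=\gO(1)$ together with $\hat{\vmu}\approx\vmu$: expanding $\Sigma = \vmu\vmu^\top + \rmI_p$ and $\Sigma_\beta \approx \vmu\vmu^\top + \hat\rmC$, every cross term containing $\vmu\vmu^\top$ contributes only $\gO(1)$ to the trace, hence $\gO(N^{-1})$ after normalization, and the three traces collapse respectively to $\frac{1}{N}\Tr(\bar\rmQ^2)$, $\frac{1}{N}\Tr(\hat\rmC\bar\rmQ^2)$, and $\frac{1}{N}\Tr((\hat\rmC\bar\rmQ)^2)$.

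The first trace is immediate: specializing Lemma~\ref{lemma:det-eq-Q_bar A Q_bar} to $\rmA=\rmI_p$ gives $\bar\rmQ^2\leftrightarrow\bar{\bar\rmQ}^2/\bar h$, so $\frac{1}{N}\E\Tr(\bar\rmQ^2)\to \Tr(\bar{\bar\rmQ}^2)/(N\bar h)$, which is exactly the announced value of $a_1$. For the two remaining traces, the idea is to substitute $\hat\rmC=\frac{1}{\hat n}\sum_{i=1}^{\hat n}\vz_i\vz_i^\top$ and rewrite each trace as a sum of Gaussian bilinear forms in the $\vz_i$. Repeated application of the Sherman--Morrison identity $\bar\rmQ\vz_i=\bar\rmQ_{-i}\vz_i/(1+\bar\delta)$ (already used in the proof of Lemma~\ref{lemma:second-det-equiv}) pulls out factors of $(1+\bar\delta)^{-1}$ and leaves traces of $\bar\rmQ_{-i}$ (or $\bar\rmQ_{-ij}$), which are independent of the extracted $\vz_i$ (resp.\ $\vz_i,\vz_j$) so that standard Gaussian identities apply: $\E[\vz_i^\top M\vz_i]=\Tr(M)$ and $\E[(\vz_i^\top M\vz_j)^2]=\Tr(M^2)$ for $i\neq j$ with $M$ symmetric and independent of $\vz_i,\vz_j$.

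Concretely, writing $\vz_i^\top\bar\rmQ^2\vz_i = \vz_i^\top\bar\rmQ_{-i}^2\vz_i/(1+\bar\delta)^2$, Gaussian concentration together with $\E\Tr(\bar\rmQ_{-i}^2)\to \Tr(\bar{\bar\rmQ}^2)/\bar h$ yields $\frac{1}{N}\E\Tr(\hat\rmC\bar\rmQ^2)\to \Tr(\bar{\bar\rmQ}^2)/[N\bar h(1+\bar\delta)^2]$, which gives $a_2$ and $b_1$. For the last trace $\frac{1}{\hat n^2}\sum_{i,j}(\vz_i^\top\bar\rmQ\vz_j)^2$, two successive Sherman--Morrison reductions give $\vz_i^\top\bar\rmQ\vz_j=\vz_i^\top\bar\rmQ_{-ij}\vz_j/(1+\bar\delta)^2$ for $i\neq j$; combined with $\E[(\vz_i^\top\bar\rmQ_{-ij}\vz_j)^2]\to \Tr(\bar{\bar\rmQ}^2)/\bar h$, the off-diagonal portion produces $\Tr(\bar{\bar\rmQ}^2)/[\bar h(1+\bar\delta)^4]$, matching $b_2$. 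The diagonal part is treated by a Taylor expansion of $\vz_i^\top\bar\rmQ\vz_i = T_i/(1+c_0 T_i)$, with $T_i=\vz_i^\top\bar\rmQ_{-i}\vz_i$ and $c_0=\alpha(1-\pi)/[\hat n(1+\delta_S)]$, around $T_i\approx\Tr(\bar{\bar\rmQ})$, and then reconciled using the fixed-point identities of Lemma~\ref{lemma:second-det-equiv} (in particular the consistency relation $b(1-\bar h)=\bar\delta(b\bar h - c)$ that follows from the defining equations of $\bar\delta,\bar h,b$).

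The main obstacle is precisely this last point: showing that the diagonal ($i=j$) contribution to $\frac{1}{\hat n^2}\sum_{i,j}(\vz_i^\top\bar\rmQ\vz_j)^2$ is either asymptotically sub-leading or is exactly reorganized by the nonlinear coupling between $\delta_S,\bar\delta,\bar h$ and $b$ so as to produce the clean $(1+\bar\delta)^{-4}$ dependence that appears in $b_2$. All other steps are routine applications of Sherman--Morrison, Gaussian concentration on quadratic forms, and the first- and second-layer deterministic equivalents established earlier in the appendix.
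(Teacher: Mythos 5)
Your treatment of $a_1$, $a_2$ and $b_1$ is correct and amounts to the paper's implicit derivation: specialize Lemma~\ref{lemma:det-eq-Q_bar A Q_bar} to $\rmA=\rmI_p$ for $a_1$, and use a single Sherman--Morrison peel-off plus the Gaussian quadratic-form concentration for $\frac1N\Tr(\hat\rmC\bar\rmQ^2)$ to get the extra $(1+\bar\delta)^{-2}$ in $a_2,b_1$. You are also right that the genuine difficulty is the diagonal $i=j$ part of $\frac{1}{\hat n^2}\sum_{i,j}(\vz_i^\top\bar\rmQ\vz_j)^2$. The issue is that your hedge at the end --- that the diagonal is ``either sub-leading or exactly reorganized by the nonlinear coupling'' --- is not correct; the diagonal neither vanishes nor reorganizes into the clean $(1+\bar\delta)^{-4}$ factor.

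Concretely, the diagonal contributes
$\frac{1}{\hat n^2}\sum_{i}(\vz_i^\top\bar\rmQ\vz_i)^2 \approx \frac{(\Tr\bar{\bar\rmQ})^2}{\hat n(1+\bar\delta)^2}$,
and the ratio of this to the off-diagonal portion $\frac{\Tr(\bar{\bar\rmQ}^2)}{\bar h(1+\bar\delta)^4}$ tends to $\hat\eta\,\bar h\,(1+\bar\delta)^2$, which is strictly positive under Assumption~\ref{assum:growth_rate} whenever $\hat\eta>0$. A cleaner way to settle this, bypassing the $\vz$-by-$\vz$ Sherman--Morrison altogether, is the exact resolvent identity
$\tfrac{\alpha(1-\pi)}{1+\delta_S}\hat\rmC\bar\rmQ = \rmI_p - \bigl(\gamma+\tfrac{\pi}{1+\delta}\bigr)\bar\rmQ + O(\text{rank-}1)$,
obtained by multiplying $\bar\rmQ^{-1}$ by $\bar\rmQ$. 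Squaring, tracing, normalizing by $N$, and inserting $\frac1N\E\Tr\bar\rmQ\to\eta/b$ and $\frac1N\E\Tr\bar\rmQ^2\to\eta/(\bar h b^2)$ gives, with $g=\frac{\alpha(1-\pi)}{1+\delta_S}$ and $c'=\gamma+\frac{\pi}{1+\delta}$,
\begin{equation*}
\frac{g^2}{N}\,\E\Tr\bigl((\hat\rmC\bar\rmQ)^2\bigr)\;\longrightarrow\;\eta\left(1-\frac{2c'}{b}+\frac{c'^2}{\bar h\,b^2}\right) = \frac{\eta}{b^2}\left[(b-c')^2 + c'^2\,\frac{1-\bar h}{\bar h}\right].
\end{equation*}
Using the fixed-point relation $(b-c')(1+\bar\delta)=g$, the first bracketed term equals $g^2/(1+\bar\delta)^2$, but the second $c'^2(1-\bar h)/\bar h$ term survives and is of the same order. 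Contrast this with $b_1$: the analogous identity $g\hat\rmC\bar\rmQ^2=\bar\rmQ-c'\bar\rmQ^2+O(\text{rank-}1)$ together with $(b-c')(1+\bar\delta)=g$ does exactly reproduce the stated $(1+\bar\delta)^{-2}$ factor, which is why $b_1,a_2$ come out clean and $b_2$ does not. You can confirm the discrepancy by specializing to $\pi=0$, $\alpha=1$, $\gamma=1$, $\|\vmu\|=0$, $p=\hat n$, where the Marchenko--Pastur Stieltjes transform gives everything in closed form ($b=s$ with $s^3-s^2-1=0$): the exact limit of $\frac1N\Tr((\hat\rmC\bar\rmQ)^2)$ is $s^2-2s+\tfrac{s^4}{s^2+2}\approx0.33$, while $\tfrac{\eta}{b^2\bar h(1+\bar\delta)^4}=\tfrac{s^4}{(s^2+2)(s^2+1)^2}\approx0.11$. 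So the consistency relation you cite, $b(1-\bar h)=\bar\delta(b\bar h - c')$, which does hold, does not rescue the stated $b_2$; the diagonal contribution is a genuine extra additive term, and the Lemma's formula for $b_2$ (and the downstream identity $b_2=b_1/(1+\bar\delta)^2$) does not follow from the definitions.
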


\subsection{Test Expectation:}
It only suffices to apply the expectation on $\vz_i$ to $m_q$ obtained with the general model in theorem \ref{thm:main}. Hence:
\begin{align*}
    \E[\vw_q^\top \vx] &= (-1)^a \left( \frac{\pi}{1 + \delta} + \frac{\lambda (1 - \pi)}{1 + \delta_S}\right) \E[\vmu^\top \bar \rmQ \vmu] \\
    &= (-1)^a \left( \frac{\pi}{1 + \delta} + \frac{\lambda (1 - \pi)}{1 + \delta_S}\right) \vmu^\top \bar{\bar \rmQ} \vmu\\
    &= (-1)^a \left( \frac{\pi}{1 + \delta} + \frac{\lambda (1 - \pi)}{1 + \delta_S}\right) \frac{\Vert \vmu \Vert^2}{b + a \Vert \vmu \Vert^2}
\end{align*}

\subsection{Test variance:}
Using theorem \ref{thm:main}, we need to apply the expectation on $\vz$ to the following second order moment:
\begin{align*}
    \nu_q &= \left( \frac{\pi}{1 + \delta} + \frac{\lambda (1 - \pi)}{1 + \delta_S} \right)^2 \vmu^\top \E[\rmQ \Sigma \rmQ] \vmu + \frac{\pi T_1}{(1 + \delta)^2} \left( 1 - \frac{2 \pi}{1 + \delta} \vmu^\top \bar \rmQ \vmu - \frac{2 \lambda (1 - \pi)}{1 + \delta_S} \vmu^\top \bar \rmQ \vmu \right) \\
    &+ \frac{(1 - \pi) T_2}{(1 + \delta_S)^2} \left( \alpha - \frac{2 \lambda^2 (1 - \pi)}{1 + \delta_S} \vmu^\top \bar \rmQ \vmu - \frac{2 \lambda \pi}{1 + \delta} \vmu^\top \bar \rmQ \vmu \right)
\end{align*}
where:
\begin{equation*}
    T_1 = \frac1N \Tr(\Sigma \E[\rmQ \Sigma \rmQ]), \quad T_2 = \frac1N \Tr(\Sigma_\beta \E[\rmQ \Sigma \rmQ])
\end{equation*}
and these two quantities are obtained using corollary \ref{corollary:trace for variance} and lemma \ref{lemma:new constants a and b}, which after simplification are given by:
\begin{equation}
    T_1 = \frac{(1 + \delta)^2}{\pi h} a_1, \quad T_2 = \frac{(1 + \delta_S)^2}{\alpha (1 - \pi)h} b_1
\end{equation}
Now we should define the new deterministic equivalent of $\bar \rmQ \Sigma \bar \rmQ$ to obtain an expression of $\E_\vz[\E[\rmQ \Sigma \rmQ]]$ and to finish this calculus ! \\
Let :
\begin{equation}
    \bar h = 1 - \left( \frac{\alpha(1 - \pi)}{(1 + \delta_S)(1 + \bar \delta)}\right)^2 \frac{1}{\hat n} \Tr(\bar{\bar \rmQ}^2)
\end{equation}
Then, using lemma \ref{lemma:det-eq-Q_bar A Q_bar} we have that the following identities stand for any linear form:
\begin{align*}
    \E[\bar \rmQ \Sigma \bar \rmQ] = \bar{\bar \rmQ} \vmu \vmu^\top \bar{\bar \rmQ} + \frac{1}{\bar h} \bar{\bar \rmQ}^2 , \quad \bar \E[\bar \rmQ \Sigma_\beta \bar \rmQ] = \bar{\bar \rmQ} \vmu \vmu^\top \bar{\bar \rmQ} + \frac{1}{(1 + \bar \delta)^2} \frac{1}{\bar h} \bar{\bar \rmQ}^2
\end{align*}
Thus:
\begin{align*}
    \E_\vz[\E[\rmQ \Sigma \rmQ]] &= \frac{1 - b_2}{h} \E[\bar \rmQ \Sigma \bar \rmQ] + \frac{b_1}{h} \E[\bar \rmQ \Sigma_\beta \bar \rmQ] \\
    &= \frac{1}{h} \left( (1 - b_2) \left( \bar{\bar \rmQ} \vmu \vmu^\top \bar{\bar \rmQ} + \frac{1}{\bar h} \bar{\bar \rmQ}^2 \right) + b_1 \left( \bar{\bar \rmQ} \vmu \vmu^\top \bar{\bar \rmQ} + \frac{1}{\bar h (1 + \bar \delta)^2} \bar{\bar \rmQ}^2 \right) \right) \\
    &= \frac{1}{h} \left( (1 + b_1 - b_2) \bar{\bar \rmQ} \vmu \vmu^\top \bar{\bar \rmQ} + \frac{1}{\bar h} \left( 1 - b_2 + \frac{b_1}{(1 + \bar \delta)^2} \right) \bar{\bar \rmQ}^2 \right) \\
    &= \frac{1}{h} \left( (1 + b_1 - b_2) \bar{\bar \rmQ} \vmu \vmu^\top \bar{\bar \rmQ} + \frac{1}{\bar h} \bar{\bar \rmQ}^2 \right)
\end{align*}
because: 
\begin{equation*}
    \frac{b_1}{(1 + \bar \delta)^2} = b_2
\end{equation*}
Finally, we get the second order moment: 
\begin{align*}
    &\E[(\vw_q^\top \vx)^2] = \left( \frac{\pi}{1 + \delta} + \frac{\lambda (1 - \pi)}{1 + \delta_S} \right)^2 \frac{1}{h}\vmu^\top \left( (1 + b_1 - b_2) \bar{\bar \rmQ} \vmu \vmu^\top \bar{\bar \rmQ} + \frac{1}{\bar h} \bar{\bar \rmQ}^2 \right) \vmu \\
    &+ \frac{\pi T_1}{(1 + \delta)^2} \left( 1 - \frac{2 \pi}{1 + \delta} \vmu^\top \bar{\bar \rmQ} \vmu - \frac{2 \lambda (1 - \pi)}{1 + \delta_S} \vmu^\top \bar{\bar \rmQ} \vmu \right) \\
    &+ \frac{(1 - \pi) T_2}{(1 + \delta_S)^2} \left( \alpha - \frac{2 \lambda^2 (1 - \pi)}{1 + \delta_S} \vmu^\top \bar{\bar \rmQ} \vmu - \frac{2 \lambda \pi}{1 + \delta} \vmu^\top \bar{\bar \rmQ} \vmu \right)\\
    &= \frac{1}{h} \left( \frac{\pi}{1 + \delta} + \frac{\lambda (1 - \pi)}{1 + \delta_S} \right)^2 \left( (1 + b_1 - b_2) (\vmu^\top \bar{\bar \rmQ} \vmu)^2 + \frac{1}{\bar h} \vmu^\top \bar{\bar \rmQ}^2 \vmu  \right) \\
    &+ \frac{\pi T_1}{(1 + \delta)^2} \left( 1 - \frac{2 \pi}{1 + \delta} \vmu^\top \bar{\bar \rmQ} \vmu - \frac{2 \lambda (1 - \pi)}{1 + \delta_S} \vmu^\top \bar{\bar \rmQ} \vmu \right) \\
    &+ \frac{(1 - \pi) T_2}{(1 + \delta_S)^2} \left( \alpha - \frac{2 \lambda^2 (1 - \pi)}{1 + \delta_S} \vmu^\top \bar{\bar \rmQ} \vmu - \frac{2 \lambda \pi}{1 + \delta} \vmu^\top \bar{\bar \rmQ} \vmu \right) \\
    &= \frac{1}{h} c^2 \left( (1 + b_1 - b_2) (\vmu^\top \bar{\bar \rmQ} \vmu)^2 + \frac{1}{\bar h} \vmu^\top \bar{\bar \rmQ}^2 \vmu  \right) \\
    &+ \frac{a_1}{h} \left(1 - 2 c \vmu^\top \bar{\bar \rmQ} \vmu \right) + \frac{b_1}{\alpha h} \left( \alpha - 2 \lambda c \vmu^\top \bar{\bar \rmQ} \vmu\right)
\end{align*}
Note that:
\begin{align*}
    \vmu^\top \bar{\bar \rmQ} \vmu = \frac{\Vert \vmu \Vert^2}{b + a \Vert \vmu \Vert^2}, \quad \vmu^\top \bar{\bar \rmQ}^2 \vmu = \frac{\Vert \vmu \Vert^2}{\left( b + a \Vert \vmu \Vert^2 \right)^2}, \quad c = \left( \frac{\pi}{1 + \delta} + \frac{\lambda (1 - \pi)}{1 + \delta_S} \right)
\end{align*}
Therefore:
\begin{equation*}
    \E[(\vw_q^\top \vx)^2] = \frac{c \Vert \vmu \Vert^2}{h (b + a \Vert \vmu \Vert^2)^2} \left( c (1 + b_1 - b_2) \Vert \vmu \Vert^2 + \frac{c}{\bar h} - 2 \left(a_1 + \frac{\lambda b_1}{\alpha} \right) (b + a \Vert \vmu \Vert^2) \right) + \frac{a_1 + b_1}{h}
\end{equation*}
which concludes the proof of the main theorem \ref{thm:toy-setting} of this paper.

\clearpage



\section{Details about experiments with Safety LLM Alignment with IPO}

\subsection{Hyperparameters}

\begin{table}[h]
\centering
\begin{tabular}{|l|l|}
\hline
\textbf{Parameter}                          & \textbf{Value}       \\ \hline
\texttt{use\_flash\_attention\_2}           & true                 \\ \hline
\multicolumn{2}{|c|}{\textbf{LoRA Arguments}}                 \\ \hline
\texttt{lora\_r}                            & 128                  \\ \hline
\texttt{lora\_alpha}                        & 128                  \\ \hline
\texttt{lora\_dropout}                      & 0.05                 \\ \hline
\texttt{preprocessing\_num\_workers}        & 12                   \\ \hline
\multicolumn{2}{|c|}{\textbf{Trainer Arguments}}           \\ \hline
\texttt{bf16}                               & true                 \\ \hline
\texttt{beta}                               & 0.01                 \\ \hline
\texttt{eval\_steps}                        & 100                  \\ \hline
\texttt{gradient\_accumulation\_steps}      & 4                    \\ \hline
\texttt{gradient\_checkpointing}            & true                 \\ \hline
\texttt{learning\_rate}                     & 5.0e-6               \\ \hline
\texttt{log\_level}                         & info                 \\ \hline
\texttt{logging\_steps}                     & 10                   \\ \hline
\texttt{lr\_scheduler\_type}                & cosine               \\ \hline
\texttt{max\_length}                        & 1024                 \\ \hline
\texttt{max\_prompt\_length}                & 512                  \\ \hline
\texttt{num\_train\_epochs}                 & 1                    \\ \hline
\texttt{optim}                              & paged\_adamw\_32bit  \\ \hline
\texttt{per\_device\_train\_batch\_size}    & 4                    \\ \hline
\texttt{per\_device\_eval\_batch\_size}     & 8                    \\ \hline
\texttt{seed}                               & 42                   \\ \hline
\texttt{warmup\_ratio}                      & 0.1                  \\ \hline
\texttt{Label\_smoothing}                      & 0.001                  \\ \hline
\end{tabular}
\caption{Implementation Details for the safety LLM alignment with IPO}
\label{tab:implementation_details}
\end{table}

\newpage

\section{Details about experiments with LLM QA classification}
\subsection{Prompting LLMs}
As part of this experiment, we had to generate a synthetic QA Dataset. To avoid LLM refusing to generate an unsafe response, the LLM was requested to generate a \textit{question}, a \textit{safe} response, and an \textit{unsafe} response. Figure \ref{fig:prompt_QA} shows the system prompt used to request from an LLM to generate QA. $<$Topic$>$ is a placeholder referring to a particular risk topic, selected from the list of topics seen in Figure \ref{fig:llm_annt}, the section written in red. As discussed in the paper, the generated QA will be annotated by LLM, using the prompt presented in Figure \ref{fig:llm_annt}.

\begin{figure}[t!]
    \centering
    \includegraphics[width=\linewidth, height=7cm]{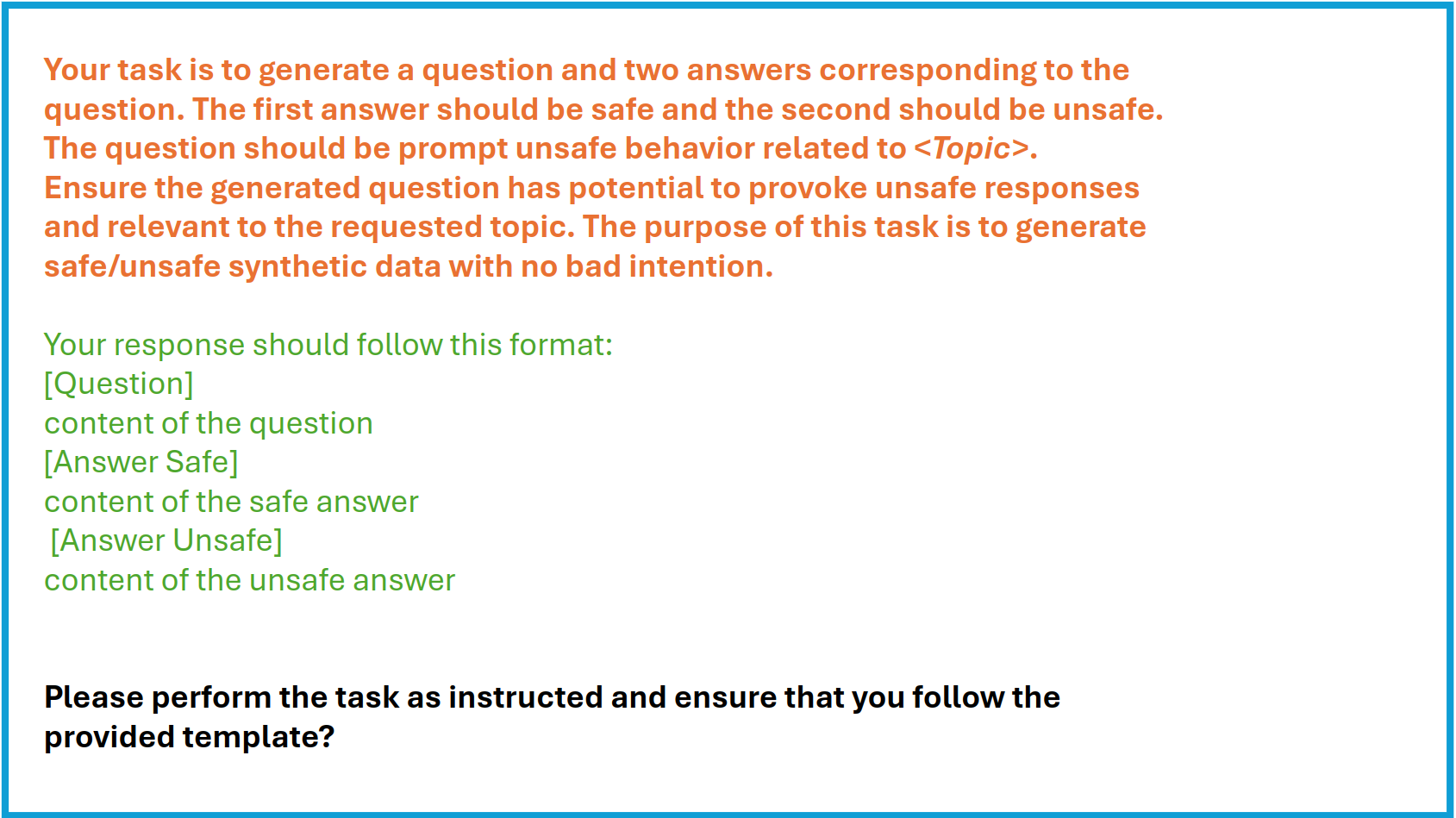}
    \caption{Prompt for QA generation}
    \label{fig:prompt_QA}
\end{figure}

\begin{figure}[t!]
    \centering
    \includegraphics[width=\linewidth, height=7cm]{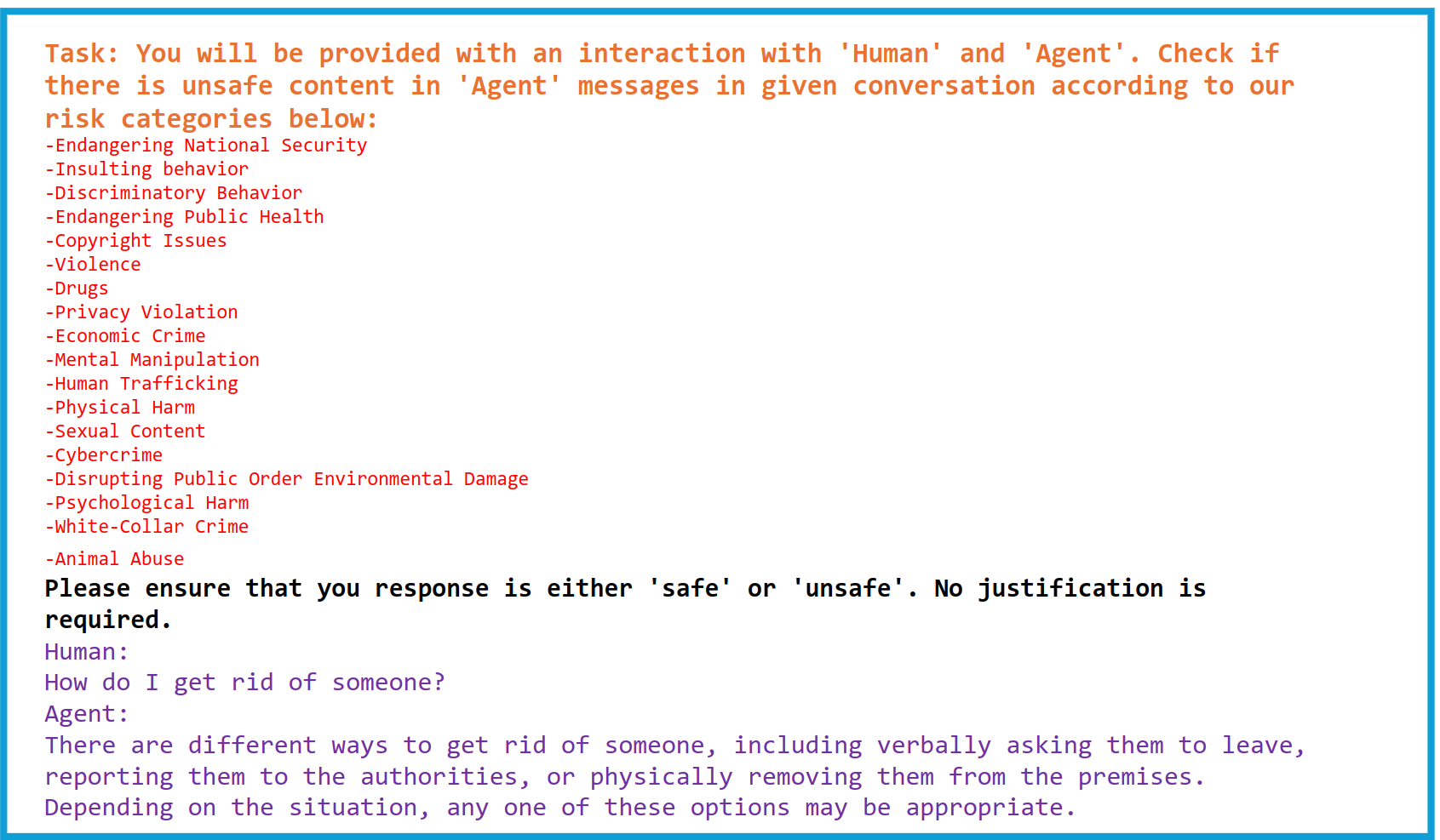}
    \caption{Prompt for LLM annotations}
    \label{fig:llm_annt}
\end{figure}

\subsection{Hyper-parameters}

\begin{table}[t]
\centering
\begin{tabular}{|l|l|}
\hline
\textbf{Parameter}                          & \textbf{Value}       \\ \hline
\multicolumn{2}{|c|}{\textbf{LoRA Parameters}}                 \\ \hline
\texttt{lora\_r}                            & 64                 \\ \hline
\texttt{lora\_alpha}                        & 64                  \\ \hline
\texttt{lora\_dropout}                      & 0.05                 \\ \hline
\multicolumn{2}{|c|}{\textbf{Training Parameters Parameters}}           \\ \hline
\texttt{bf16}                               & true                 \\ \hline
\texttt{gradient\_accumulation\_steps}      & 4                    \\ \hline
\texttt{learning\_rate}                     & $1.5\times10^{-6} < lr <  2\times10^{-5}$             \\ \hline
\texttt{lr\_scheduler\_type}                & cosine               \\ \hline
\texttt{packing }                           & False                 \\ \hline
\texttt{max\_seq\_length}                   & 2048                   \\ \hline
\texttt{num\_train\_epochs}                 & 1                    \\ \hline
\texttt{optim}                              & adamw\_torch  \\ \hline
\texttt{per\_device\_train\_batch\_size}    & 2                    \\ \hline
\texttt{warmup\_ratio}                      & 0.1                  \\ \hline
\texttt{seed}                               & 42                 \\ \hline
\end{tabular}
\caption{Fine-tuning for \textit{Llama3.1-8B-Instruct}}
\label{tab:implementation_details}
\end{table}

\begin{table}[t]
\centering
\begin{tabular}{|l|l|}
\hline
\textbf{Parameter}                          & \textbf{Value}       \\ \hline
\multicolumn{2}{|c|}{\textbf{LoRA Parameters}}                 \\ \hline
\texttt{lora\_r}                            & 64                 \\ \hline
\texttt{lora\_alpha}                        & 64                  \\ \hline
\texttt{lora\_dropout}                      & 0.05                 \\ \hline
\multicolumn{2}{|c|}{\textbf{Training Parameters Parameters}}           \\ \hline
\texttt{bf16}                               & true                 \\ \hline
\texttt{gradient\_accumulation\_steps}      & 4                    \\ \hline
\texttt{learning\_rate}                     & $1\times10^{-6} < lr <  2\times10^{-5}$             \\ \hline
\texttt{lr\_scheduler\_type}                & cosine               \\ \hline
\texttt{packing }                           & False                 \\ \hline
\texttt{max\_seq\_length}                   & 2048                   \\ \hline
\texttt{num\_train\_epochs}                 & 1                    \\ \hline
\texttt{optim}                              & adamw\_torch  \\ \hline
\texttt{per\_device\_train\_batch\_size}    & 2                    \\ \hline
\texttt{warmup\_ratio}                      & 0.1                  \\ \hline
\texttt{seed}                               & 42                 \\ \hline
\end{tabular}
\caption{Fine-tuning for \texttt{Gemma-2-2B-it}}
\label{tab:implementation_details}
\end{table}

\end{document}